\newcommand{\declarecolor}[2]{\definecolor{#1}{RGB}{#2}\expandafter\newcommand\csname #1\endcsname[1]{\textcolor{#1}{##1}}}
\definecolor{mydarkblue}{rgb}{0,0.08,0.45}
\def\vg{{\bm{g}}}
\def\vm{{\bm{m}}}
\def\vu{{\bm{u}}}
\def\vv{{\bm{v}}}
\def\vx{{\bm{x}}}
\def\vy{{\bm{y}}}
\renewcommand\vec\bm
\def\mA{{\mathbf{A}}}
\def\mG{{\mathbf{G}}}
\def\mI{{\mathbf{I}}}
\def\mK{{\mathbf{K}}}
\def\mU{{\mathbf{U}}}
\def\cA{{\mathcal{A}}}
\def\cB{{\mathcal{B}}}
\def\cC{{\mathcal{C}}}
\def\cD{{\mathcal{D}}}
\def\cE{{\mathcal{E}}}
\def\cF{{\mathcal{F}}}
\def\cM{{\mathcal{M}}}
\def\cO{{\mathcal{O}}}
\def\cQ{{\mathcal{Q}}}
\def\cU{{\mathcal{U}}}
\def\cV{{\mathcal{V}}}
\def\cX{{\mathcal{X}}}
\def\cY{{\mathcal{Y}}}
\def\cZ{{\mathcal{Z}}}
\newcommand{\phireg}{\Phi\text{-}\op{Reg}}
\newcommand{\reg}{\op{Reg}}
\newcommand{\delimit}[3]{\newcommand{#1}[1]{\left#2##1\right#3}}
\DeclareMathOperator*{\argmin}{argmin}
\DeclareMathOperator*{\argmax}{argmax}
\let\E\relax
\DeclareMathOperator*{\E}{\mathbb E}
\DeclareMathOperator*{\vol}{vol}
\newcommand{\cons}{\bm{c}}
\renewcommand{\R}{\mathbb R}
\newcommand{\N}{\mathbb N}
\let\op\operatorname
\let\eps\epsilon
\let\ip\ev
\newcommand{\enfuns}{\Phi^m}
\newcommand{\eah}{\textsf{EAH}}
\newcommand{\sep}{\textsf{SEP}}
\newcommand{\ger}{\textsf{GER}}
\newcommand{\either}{\textsf{GERorSEP}}
\newcommand{\shellelips}{\textsf{ShellEllipsoid}}
\newcommand{\shellgd}{\textsf{ShellGD}}
\newcommand{\shellproj}{\textsf{ShellProject}}
\newcommand{\fp}{\textsf{FP}}
\newcommand{\regbox}{\mathfrak{R}}
\newcommand{\tilY}{\tilde{\cY}}
\newcommand{\tilPhi}{\tilde{\Phi}}
\newcommand{\tilphi}{\tilde{\phi}}
\newcommand{\supp}{\op{supp}}
\newcommand{\defeq}{:=}
\newcommand{\range}[1]{[#1]}
\renewcommand{\co}{\op{conv}}
\DeclareMathOperator{\sign}{sign}
\newcommand{\pr}{\mathbb{P}}
\newcommand{\V}{\mathbb{V}}
\newcommand{\lineP}{\overline{P}}
\newcommand{\linem}{\overline{m}}
\newcommand{\linex}{\overline{\vx}}
\renewcommand\vec\bm
\theoremstyle{plain}
\newtheorem{theorem}{Theorem}[section]
\newtheorem{proposition}[theorem]{Proposition}
\newtheorem{lemma}[theorem]{Lemma}
\newtheorem{corollary}[theorem]{Corollary}
\theoremstyle{definition}
\newtheorem{definition}[theorem]{Definition}
\newtheorem{assumption}[theorem]{Assumption}
\newtheorem{remark}[theorem]{Remark}
\title{Learning and Computation of $\Phi$-Equilibria \\at the Frontier of Tractability}
\author[1]{Brian Hu Zhang\thanks{Equal contribution.}}
\author[1]{Ioannis Anagnostides$^*$}
\author[1,2]{Emanuel Tewolde}
\author[1,2]{Ratip Emin Berker}
\author[3]{Gabriele Farina}
\author[1,2,4]{Vincent Conitzer}
\author[1,5]{Tuomas Sandholm}
\affil[1]{Carnegie Mellon University}
\affil[2]{Foundations of Cooperative AI Lab (FOCAL)}
\affil[3]{Massachusetts Institute of Technology}
\affil[4]{University of Oxford}
\affil[5]{Additional affiliations: Strategy Robot, Inc., Strategic Machine, Inc., Optimized Markets, Inc.}
\affil[ ]{}
\affil[ ]{\texttt{\{bhzhang,ianagnos,etewolde,rberker,conitzer,sandholm\}}\texttt{@cs.cmu.edu}, \texttt{gfarina}\texttt{@mit.edu}}
\begin{document} 

\maketitle

\pagenumbering{gobble}

\begin{abstract}
\emph{$\Phi$-equilibria}---and the associated notion of \emph{$\Phi$-regret}---are a powerful and flexible framework at the heart of online learning and game theory, whereby enriching the set of deviations $\Phi$ begets stronger notions of rationality. Recently, Daskalakis, Farina, Fishelson, Pipis, and Schneider (STOC '24)---abbreviated as DFFPS---settled the existence of efficient algorithms when $\Phi$ contains only linear maps under a general, $d$-dimensional convex constraint set $\mathcal{X}$. In this paper, we significantly extend their work by resolving the case where $\Phi$ is $k$-dimensional; degree-$\ell$ polynomials constitute a canonical such example with $k = d^{O(\ell)}$. In particular, positing only oracle access to $\cX$, we obtain two main positive results:
\begin{itemize}
    \item a $\text{poly}(n, d, k, \text{log}(1/\epsilon))$-time algorithm for computing $\epsilon$-approximate $\Phi$-equilibria in $n$-player multilinear games, and
    \item an efficient online algorithm that incurs  average $\Phi$-regret at most $\epsilon$ using $\text{poly}(d, k)/\epsilon^2$ rounds.
\end{itemize}
We also show nearly matching---up to constant factors in the exponents---lower bounds parameterized by $k$ in the online learning setting, thereby obtaining for the first time a family of deviations that captures the 
learnability of $\Phi$-regret.

From a technical standpoint, we extend the framework of DFFPS from linear maps to the more challenging case of maps with polynomial dimension. At the heart of our approach is a polynomial-time algorithm for computing an \emph{expected fixed point} of any $\phi: \cX \to \cX$---that is, a distribution $\mu \in \Delta(\cX)$ such that $\E_{\vec{x} \sim \mu}[ \phi(\vx) - \vx ] \approx 0$---based on the seminal \emph{ellipsoid against hope (EAH)} algorithm of Papadimitriou and Roughgarden (JACM '08). In particular, our algorithm for computing $\Phi$-equilibria is based on executing EAH in a nested fashion---each step of EAH itself being implemented by invoking a separate call to EAH.
\end{abstract}

\clearpage

\tableofcontents

\clearpage

\pagenumbering{arabic}

\section{Introduction}

What constitutes a solution to a game? Descriptive and prescriptive theories in economics brought together in response to such questions chiefly revolve around \emph{equilibria}---strategically stable outcomes. The underpinning of strategic stability can be naturally conceived in more ways than one, but none is more standard than the one pregnant in Nash's theorem~\citep{Nash51:Non}, which single-handedly propelled the rapid development of noncooperative game theory in the last century. This is reflected in, for example, the exulting words of the Nobel prize-winning economist Roger Myerson, who juxtaposed the ``fundamental and pervasive impact of Nash equilibrium'' in economics and the social sciences to ``the discovery of the DNA double helix in the biological sciences~\citep{Myerson99:Nash}.'' This profound influence nonwithstanding, the concept of Nash equilibrium has also been subjected to fierce criticism. Some of the most vocal critiques have been articulated in the computer science community, vehemently objecting to its apparent intractability~\citep{Daskalakis08:Complexity,Chen09:Settling,Rubinstein16:Settling,Rubinstein15:Inapproximability,Etessami07:Complexity}.

Aumann's pathbreaking \emph{correlated equilibrium} concept~\citep{Aumann74:Subjectivity} offers a compelling alternative that promises to alleviate such concerns: unlike Nash equilibria, which amount to fixed points of general functions, correlated equilibria can be expressed as linear programs. This, in turn, enables their efficient computation, or at least so the common narrative goes. The reality is more nuanced, and is dictated by the underlying \emph{game representation}. In particular, strategic interactions encountered in practice often unfold sequentially. Such settings do not lend themselves to the usual one-shot (aka. normal-form) representation---not without blowing up the description of the game, that is. Instead, sequential games are usually represented in \emph{extensive form} (or other pertinent paradigms~\citep{Littman94:Markov}). In stark contrast to normal-form games, wherein polynomial-time algorithms have long been established~\citep{Papadimitriou08:Computing,Jiang11:Polynomial}, the complexity of computing correlated equilibria in extensive-form games remains an enigmatic open problem.

\citet{Daskalakis24:Lower} recently provided evidence in the negative through the prism of the \emph{no-regret} framework. In particular, there is an intimate nexus between correlated equilibria and online learning: learners minimizing \emph{swap regret}---a powerful notion of hindsight rationality---converge to the set of correlated equilibria~\citep{Blum07:From,Stoltz05:Internal}. In this context, \citet{Daskalakis24:Lower} established an exponential lower bound for the number of iterations needed so that a learner incurs at most $\epsilon$ average swap regret when facing an adversary; more precisely, that result applies in the regime where $\epsilon$ is inversely polynomial (\emph{cf.} \citet{Dagan24:From,Peng24:Fast}). While this result does not rule out the existence of efficient algorithms beyond the adversarial regime, it does immediately bring to the fore a well-studied but pressing question: \emph{what notions of hindsight rationality are efficiently learnable?}

Hindsight rationality in online learning can be understood through a set of functions, $\Phi$, so that no deviation according to a function in $\Phi$ can retrospectively improve the cumulative utility; such a learner is said to be consistent with minimizing \emph{$\Phi$-regret}~\citep{Greenwald03:General,Stoltz07:Learning,Gordon08:No}. The broader the set of deviations $\Phi$, the more appealing the ensuing concept of hindsight rationality. The usual notion of external regret is an instantiation of that framework for which $\Phi$ contains solely constant functions---referred to as \emph{coarse} deviations. On the other end of the spectrum, when $\Phi$ contains all possible deviations, one finds the powerful notion of swap regret---associated with (normal-form) correlated equilibria. The fundamental question thus is to characterize the structure of $\Phi$ that enables efficient learnability---and, indeed, computation.

Much of the recent research in the context of learning in extensive-form games has focused on this exact problem. This can be traced back to the influential work of~\citet{Zinkevich07:Regret}, who introduced \emph{counterfactual regret minimization (CFR)}---an algorithm that was at the heart of recent landmark results in AI benchmarks such as poker~\citep{Brown17:Superhuman,Brown19:Superhuman,Bowling15:Heads,Moravvcik17:DeepStack}. CFR is an online algorithm for minimizing external regret---associated with (normal-form) coarse correlated equilibria. Moving forward, efficient algorithms eventually emerged for \emph{extensive-form correlated equilibria (EFCE)}~\citep{Huang08:Computing,Farina22:Simple,Dudik09:SamplingBased,Bai22:Efficient} (\emph{cf.}~\citet{Morrill21:Efficient,Morrill21:Hindsight}), and more broadly, when $\Phi$ contains solely \emph{linear} functions~\citep{Farina24:Polynomial,Farina23:Polynomial}---corresponding to \emph{linear correlated equilibria (LCE)}. \citet{Daskalakis24:Efficient} recently took a step even further by strengthening those results whenever the underlying constraint set $\cX \subseteq \R^d$ admits a membership oracle. \Cref{fig:taxonomy} summarizes the landscape that has emerged.

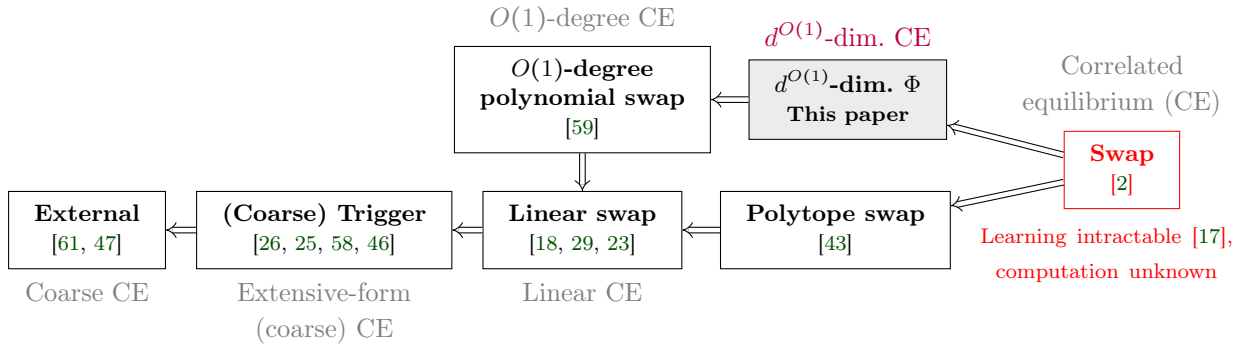
\begin{figure}[t]
\tikzstyle{box} = [rectangle, minimum width=.5cm, minimum height=0.5cm, text centered, draw=black, font=\footnotesize]
\tikzstyle{textbelow} = [font=\scriptsize, align=center]
\tikzstyle{arrow} = [implies-,double equal sign distance] 

\begin{tikzpicture}[node distance=0.6cm]
    \node (NFCCE) [box] {\begin{tabular}{c} \textbf{External} \\ \scriptsize \citep{Zinkevich07:Regret,Moulin78:Strategically} \end{tabular}};
    \node (EFCE) [box, right=0.4cm of NFCCE] {\begin{tabular}{c} \textbf{(Coarse) Trigger} \\ \scriptsize \citep{Farina22:Simple,Farina20:Coarse,Stengel08:Extensive,Morrill21:Hindsight} \end{tabular}};
    \node (LCE) [box, right=0.4cm of EFCE] {\begin{tabular}{c} \textbf{Linear swap} \\ \scriptsize \citep{Daskalakis24:Efficient,Gordon08:No,Farina23:Polynomial} \end{tabular}};
    
    \node (LowDegree) [box, above=.5cm of LCE] {\begin{tabular}{c} \textbf{$O(1)$-degree} \\ \textbf{polynomial swap} \\ \scriptsize \citep{Zhang24:Efficient} \end{tabular}};
    \node (Polytope) [box, right=0.5cm of LCE] {\begin{tabular}{c} \textbf{Polytope swap} \\ \scriptsize \citep{Mansour22:Strategizing} \end{tabular}};
    \node (LowDim) [box, fill=gray!15, right=.5cm of LowDegree] {\begin{tabular}{c} \textbf{$d^{O(1)}$-dim. $\Phi$} \\ \scriptsize \textbf{This paper} \end{tabular}};
    \node (NFCE) [box, red, right=1.5cm of Polytope,yshift=8mm] {\begin{tabular}{c} \textbf{Swap} \\ \scriptsize \citep{Aumann74:Subjectivity} \end{tabular}};

    \node[red, below=1mm of NFCE, text width=3.4cm,align=center,xshift=-2mm] {\scriptsize Learning intractable \citep{Daskalakis24:Lower}, computation unknown};
    \node[below=.5mm of NFCCE,gray] {\small Coarse CE};
    \node[below=.5mm of EFCE,text width=2.4cm,align=center,gray] {\small Extensive-form (coarse) CE};
    \node[below=.5mm of LCE,gray] {\small Linear CE};
    \node[above=.5mm of LowDegree,gray] {\small $O(1)$-degree CE};
    \node[above=.5mm of LowDim,purple] {\small $d^{O(1)}$-dim. CE};
    \node[above=.5mm of NFCE,text width=2.8cm,align=center,gray] {\small Correlated equilibrium (CE)};

    \path
     (NFCCE) edge[arrow] (EFCE)
     (EFCE) edge[arrow] (LCE)
     (LCE) edge[arrow] (LowDegree)
     (LowDegree) edge[arrow] (LowDim)
     (LCE) edge[arrow] (Polytope)
     (LowDim) edge[arrow] (NFCE)
     (Polytope) edge[arrow] (NFCE);


\end{tikzpicture}
\caption{The arrows $A \implies B$ denote that minimizing the notion of regret $A$ implies minimizing the notion of regret $B$. In other words, $A$ defines a superset of deviations that the learner considers compared to $B$, and hence leads to a stronger notion of equilibrium. The gray text below or above a notion of regret denotes the name of the corresponding notion of equilibrium, if applicable.}
\label{fig:taxonomy}
\end{figure}

\subsection{Our results: \texorpdfstring{$\Phi$}{Phi}-equilibria at the frontier of tractability}
\label{sec:results}

\begin{table}[htbp]
\centering
\small
\caption{Our main results for the $k$-dimensional set $\Phi^m$ of~\Cref{assumption:kernel}; $k \gg d$.}
\vspace{.3cm}
\label{tab:bounds}
\begin{tabular}{lcc}
\toprule
 & Upper bound & Lower bound \\
\midrule
\multirow{2}{*}{\textbf{Learning}} & $\poly(k)/\epsilon^2$ & $\min \{ \sqrt{k}/4, \text{exp}(\Omega(\epsilon^{-1/6})) \}$ \\
 & (\Cref{theorem:main1}) & (\Cref{theorem:mainlower}) \\
\midrule
\multirow{2}{*}{\textbf{Computation}} & $\poly(k, \log(1/\epsilon))$ & \multirow{2}{*}{Open question} \\
 & (\Cref{theorem:main-eah}) & \\
\bottomrule
\end{tabular}
\end{table}

The primary focus of this paper is to expand the scope of that prior research beyond linear-swap regret---associated with linear correlated equilibria---toward the frontier of tractability. While handling all swap deviations is impossible in light of the recent lower bound of~\citet{Daskalakis24:Lower}, discussed above, we are able to cope with the broad class of functions introduced below.

\begin{definition}\label{assumption:kernel}
    Given a map $m : \cX \to \R^{k'}$, the set of deviations $\Phi^m$ is defined as the set of all maps $\phi : \cX \to \cX$ that can be can be expressed by the matrix-vector product $\mK(\phi ) m(\vx) + \cons(\phi)$ for some $\mK(\phi) \in \R^{d \times k'}$ and $\cons(\phi) \in \R^d$. 
The set of functions $\Phi^m$ has dimension at most $k \defeq k' \cdot d + d$.
\end{definition}

We think of $k$ as a measure of the complexity of $\Phi^m$; in what follows, one may imagine $k \leq \poly(d)$. There is a clear sense in which going beyond~\Cref{assumption:kernel} is daunting: even representing such functions becomes prohibitive. Indeed, we also establish lower bounds that preclude going beyond the set of deviations in~\Cref{assumption:kernel}, showing that our results cannot be significantly improved.

As a canonical example, one can capture degree-$\ell$ polynomials by taking $m(\vx)$ to be the function that outputs all $\ell$-wise (and lower) products of entries in $\vx$ (hence $k = d^{O(\ell)}$), and $\mK$ the matrix of coefficients of the polynomial. (For technical reasons, we actually consider a certain orthonormal basis for polynomials introduced formally in~\Cref{def:polys}.)

\subsubsection{Learning}

We begin by stating our results in the usual no-regret framework in online learning, and we then proceed with the centralized model of computation. A key reference point here is the recent paper of~\citet{Zhang24:Efficient}, who provided online algorithms in extensive-form games when $\Phi$ contains low-degree polynomials. However, the complexity of their algorithm---both the per-iteration running time and the total number of rounds---depends on the depth of the game tree; in particular, for general extensive-form games, their bounds become vacuous even for constant-degree polynomials (for example, when the game tree is lopsided). Our first main result addresses that limitation. What is more, it encompasses a more general class than extensive-form decision problems, while at the same time going beyond low-degree swap deviations.

\begin{theorem}[Online learning; precise version in~\Cref{theorem:main-prec}]
    \label{theorem:main1}
    Suppose that $\cX \subseteq \R^d$ admits a membership oracle and $\Phi^m$ is $k$-dimensional per~\Cref{assumption:kernel}. There is an online algorithm that guarantees at most $\epsilon$ average $\Phi^m$-regret after $\poly(k) / \epsilon^2$ rounds with $\poly(k, 1/\epsilon)$ running time.
\end{theorem}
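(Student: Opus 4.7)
The plan is to instantiate the Gordon--Greenwald--Marks reduction from $\Phi$-regret to external regret, with the classical per-round fixed-point step replaced by the approximate expected-fixed-point subroutine that underlies \Cref{theorem:main-eah}. Two components are needed: (i) a no-external-regret learner over a bounded parameterization of $\Phi^m$, and (ii) a per-round subroutine producing a distribution $\mu_t \in \Delta(\cX)$ that is an approximate expected fixed point of the deviation chosen by the meta-learner.

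For (i), by \Cref{assumption:kernel} each $\phi \in \Phi^m$ is specified by the pair $(\mK(\phi), \cons(\phi))$, identifying $\Phi^m$ with a convex subset of $\R^k$. Since the loss $\phi \mapsto \E_{\vx \sim \mu_t} \langle \ell_t, \phi(\vx) \rangle = \langle \ell_t, \mK(\phi) \E_{\vx \sim \mu_t}[m(\vx)] + \cons(\phi) \rangle$ is linear in these parameters, online subgradient descent on a suitably bounded feasible set -- corresponding to those $\phi$ mapping $\cX$ into $\cX$, which is convex and whose separation is implementable using the membership oracle for $\cX$ -- achieves external regret $O(\sqrt{k T})$ with $\poly(k)$ cost per iteration.

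For (ii), given the deviation $\phi_t$ returned by the meta-learner at round $t$, I invoke the paper's expected-fixed-point subroutine (the nested EAH construction powering \Cref{theorem:main-eah}) to obtain in $\poly(k, \log(1/\epsilon))$ time a distribution $\mu_t \in \Delta(\cX)$ with $\|\E_{\vx \sim \mu_t}[\phi_t(\vx) - \vx]\| \leq \epsilon$, then sample and play $\vx_t \sim \mu_t$. The standard Gordon telescoping shows that the $\Phi^m$-regret of the play sequence against any comparator $\phi^\star \in \Phi^m$ decomposes into the meta-learner's external regret, of order $\sqrt{kT}$, plus a slack term $\sum_t \|\ell_t\| \cdot O(\epsilon)$ coming from the approximate (rather than exact) fixed points. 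Both are at most $O(\epsilon T)$ once $T = \poly(k)/\epsilon^2$, yielding the theorem.

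The main obstacle is precisely the per-round expected-fixed-point computation -- already discharged by the nested EAH construction elsewhere in the paper -- while the outer reduction and the accounting of errors are essentially textbook. The remaining bookkeeping is to verify that the parameterization of $\Phi^m$ admits polynomial diameter and polynomial gradient-norm bounds in the problem's natural scale, so that the constants hidden in the external-regret bound do not blow up, and that the two approximation errors (from the no-regret learner and from the expected-fixed-point subroutine) compose cleanly in the $\Phi^m$-regret decomposition.
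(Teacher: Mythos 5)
There is a genuine gap in step (i) of your plan. You claim that the feasible set of deviations---those $\phi$ given by $(\mK,\cons)$ with $\mK m(\vx)+\cons\in\cX$ for all $\vx\in\cX$---admits a separation oracle ``implementable using the membership oracle for $\cX$,'' which would let you run plain projected (sub)gradient descent over it. This is false: deciding whether a given $\phi$ maps $\cX$ into $\cX$ requires certifying a universally quantified constraint over the whole convex body, and \citet{Daskalakis24:Efficient} prove that separating over this set is computationally hard already for \emph{linear} endomorphisms of a general convex $\cX$ given by a membership oracle. The paper explicitly flags this as the central obstruction (see the discussion surrounding \Cref{item:second} and the beginning of \Cref{sec:reg}), and the entire shell-set machinery---$\shellgd$, $\shellproj$, and $\shellelips$---exists precisely to circumvent it. The fix the paper uses, which your proposal is missing, is to run projected gradient descent not on $\Phi^m$ itself but on a \emph{changing} sequence of shell sets $\tilPhi^{(t)}\supseteq\Phi^m$ that are polytopal (hence projectable), maintained so that the chosen $\phi^{(t)}\in\tilPhi^{(t)}$ always admits an $\eps$-expected fixed point; the shells are refined using the semi-separation oracle of \Cref{theorem:semiseparation}, which either returns an EFP or a violated halfspace. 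One then invokes \Cref{lemma:shellgd} rather than a vanilla OGD regret bound.

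The rest of your proposal is aligned with the paper: the Gordon-style reduction with expected fixed points in place of exact fixed points (\Cref{theorem:gordon}), the $\poly(k,\log(1/\eps))$-time EFP subroutine via nested EAH (\Cref{th:efp}), and the $\eps T$ slack accounting are all correct and are the same ingredients the paper uses. But without replacing your step (i) with the shell-set construction, the algorithm cannot be implemented in polynomial time, so the proof as stated does not go through.
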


\subsubsection{Computation}

The result above holds even when the learner is facing an adversary, thereby being readily applicable when learning in $n$-player \emph{mutlilinear games}. In such games, each player $i \in \range{n}$ has a convex and compact strategy set $\cX_i \subseteq \R^{d_i}$ and utility function $u_i : \cX_1 \times \dots \times \cX_n \to \R$ that is linear in $\cX_i$, so that $u_i(\vx) = \langle \vec{g}_i, \vx_i \rangle$ for some $\vec{g}_i = \vec{g}_i(\vx_{-i}) \in \R^{d_i}$. (Extensive-form games constitute a canonical example of this framework.) In this context, \Cref{theorem:main1} implies a fully polynomial-time algorithm ($\FPTAS$) for computing $\epsilon$-approximate $\Phi^m$-equilibria in convex games. Our next result establishes a polynomial-time algorithm---that is, with running time growing polynomially in $\log(1/\epsilon)$ as opposed to $1/\epsilon$---for that problem.

\begin{theorem}[Computation; precise version in~\Cref{theorem:main-eah-prec}]
    \label{theorem:main-eah}
    Consider an $n$-player multilinear game $\Gamma$ such that, for each player $i \in [n]$, we are given $\poly(n, k)$-time algorithms for the following:
\begin{itemize}
\item an oracle to compute the gradient, that is, the vector $\vec{g}_i = \vec{g}_i(\vx_{-i}) \in \R^{d_i}$ for which $\ip{\vg_i(\vx_{-i}), \vx_i} = u_i(\vx)$ for all $\vx \in \cX_1 \times \dots \times \cX_n$ (polynomial expectation property); and
\item a membership oracle for the strategy set $\cX_i$.
\end{itemize}
Suppose further that each $\Phi^{m_i}$ is $k_i$-dimensional per~\Cref{assumption:kernel} and $\|\vec{g}_i \| \leq B$. Then, an $\eps$-approximate $\Phi^m$-equilibrium of $\Gamma$ can be computed in $\poly\qty(n, k, \log(B/\eps))$ time.
\end{theorem}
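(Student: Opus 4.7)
The plan is to implement the ``ellipsoid against hope'' (EAH) framework in a nested fashion, leveraging two separate appeals to Brouwer's fixed point theorem. I start by writing the $\Phi^m$-equilibrium condition as a feasibility program: find $\mu \in \Delta(\cX_1 \times \cdots \times \cX_n)$ with $\E_{\vx \sim \mu}[\ip{\vg_i(\vx_{-i}), \phi(\vx_i) - \vx_i}] \leq \eps$ for every player $i$ and every deviation $\phi \in \Phi^{m_i}$. Since this expression is linear in $\phi$, any weighted combination of deviations collapses to a single ``average'' deviation $\bar\phi_i \in \Phi^{m_i}$, parameterized by a point in $\R^{k_i}$ via~\Cref{assumption:kernel}. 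The outer EAH runs the ellipsoid method on this $k$-dimensional dual space of aggregated deviations.

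At each outer iteration, given a candidate profile $(\bar\phi_1, \ldots, \bar\phi_n)$, I would implement the separation oracle by exhibiting a distribution $\mu$ for which $\sum_i \E_\mu[\ip{\vg_i, \bar\phi_i(\vx_i) - \vx_i}] \leq 0$; such a $\mu$ certifies that the current profile is not an infeasibility certificate and produces a linear cut in dual space. The construction is to take $\mu = \bigotimes_i \mu_i$ a product distribution, where each $\mu_i$ is an \emph{expected fixed point} of $\bar\phi_i$, namely $\E_{\vx_i \sim \mu_i}[\bar\phi_i(\vx_i) - \vx_i] \approx 0$. By multilinearity of the utilities, the deviation gain for player $i$ factors as $\ip{\E_{\mu_{-i}}[\vg_i(\vx_{-i})], \E_{\mu_i}[\bar\phi_i(\vx_i) - \vx_i]}$, so the gradient bound $\|\vg_i\| \leq B$ and Cauchy-Schwarz control it by $B \cdot \|\E_{\mu_i}[\bar\phi_i(\vx_i) - \vx_i]\|$. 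Once enough cuts have accumulated, the standard EAH recovery via LP duality extracts an $\eps$-approximate $\Phi^m$-equilibrium as a convex combination of the $\mu$'s produced throughout the outer loop.

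The inner task of computing an expected fixed point of $\bar\phi_i : \cX_i \to \cX_i$ is where the second layer of EAH enters. Here the dual ellipsoid lives in $\R^{d_i}$: each iterate is a vector $\vz \in \R^{d_i}$, and the inner separation oracle must exhibit some $\vx \in \cX_i$ with $\ip{\vz, \bar\phi_i(\vx) - \vx} \leq 0$. This is easy using only the membership oracle for $\cX_i$: compute $\vx^\star \in \argmax_{\vx \in \cX_i} \ip{\vz, \vx}$ via the standard ellipsoid-over-$\cX_i$ routine; because $\bar\phi_i(\vx^\star)$ also lies in $\cX_i$, its inner product with $\vz$ cannot exceed that of $\vx^\star$, yielding $\ip{\vz, \bar\phi_i(\vx^\star) - \vx^\star} \leq 0$ for free. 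Brouwer's theorem guarantees that $\bar\phi_i$ has a fixed point in $\cX_i$, hence that the inner dual region is empty, so the inner ellipsoid shrinks in $\poly(d_i, \log(1/\eps))$ rounds and standard LP recovery on the collected witnesses yields $\mu_i$.

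The main obstacle will be propagating approximation errors across the two layers. Each inner EAH returns only an $\eps'$-approximate expected fixed point, and these errors accumulate across the $n$ players and across outer iterations; I would choose $\eps' = \eps / \poly(n, k, B)$ so that the resulting product distribution still yields a valid (approximate) separating hyperplane for the outer ellipsoid, and argue that the finite-precision iterates $(\bar\phi_1, \ldots, \bar\phi_n)$ can be represented---and, if necessary, clipped back into $\Phi^{m_i}$---without destroying the analysis. Combining everything, the outer EAH runs for $\poly(k, \log(B/\eps))$ rounds and each round invokes $n$ inner EAHs of cost $\poly(d, \log(B/\eps))$, yielding the claimed $\poly(n, k, \log(B/\eps))$ bound.
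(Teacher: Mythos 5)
Your proposal captures the overall architecture of the paper's proof---the outer EAH in the $k$-dimensional dual space of deviation profiles, the inner EAH computing expected fixed points over each $\cX_i$, the product-distribution trick, and the Cauchy--Schwarz bound using $\|\vg_i\| \leq B$---and these are indeed the ingredients the paper uses. But there is a genuine gap in how you handle ellipsoid centers that are \emph{not} valid deviation profiles.

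The outer ellipsoid's dual space is $\R^k$, and its centers are arbitrary tuples $(\mK_1, \cons_1, \ldots, \mK_n, \cons_n)$; nothing forces $\vx_i \mapsto \mK_i m_i(\vx_i) + \cons_i$ to map $\cX_i$ into $\cX_i$. Your inner separation argument breaks precisely there: the chain ``compute $\vx^\star \in \argmax_{\vx \in \cX_i}\ip{\vz,\vx}$, hence $\ip{\vz, \bar\phi_i(\vx^\star) - \vx^\star} \leq 0$'' relies on $\bar\phi_i(\vx^\star) \in \cX_i$, which fails if $\bar\phi_i$ is not an endomorphism; and the very existence of an expected fixed point (your implicit appeal to Brouwer, which the paper in fact deliberately avoids, using a separating-hyperplane argument so that continuity is not needed) also requires $\bar\phi_i$ to map into $\cX_i$. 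Your proposed remedy of ``clipping back into $\Phi^{m_i}$'' is exactly what cannot be done efficiently: projecting onto, or even separating from, the set of linear endomorphisms of a convex body given by a membership oracle is computationally hard, as shown by Daskalakis et al.\ and cited in the paper. The paper's fix---and the part your proposal is missing---is the semi-separation / $\either$ oracle: the inner EAH is modified so that at each inner step it checks whether $\bar\phi_i(\vx^\star) \in \cX_i$; if not, it terminates early and returns the witness $\vx^\star$, which yields a separating halfspace for $\cY$ (via the separation oracle for $\cX_i$) that is passed back up to the outer ellipsoid as a cut. Only when every inner call succeeds do you form the product distribution and produce a good-enough-response cut. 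This two-outcome structure (GER or SEP) must be built into the outer EAH itself---it is Theorem 2.8 in the paper, a generalization of standard EAH that is nontrivial to establish.
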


\subsection{Technical approach}

\Cref{theorem:main1,theorem:main-eah} build on and extend certain recent developments due to~\citet{Daskalakis24:Efficient} and~\citet{Zhang24:Efficient}. Below, we outline our key technical contributions.

\paragraph{Expected fixed points} Our approach crucially hinges on the notion of an \emph{expected fixed point}: a distribution $\mu \in \Delta(\cX)$ such that $\E_{\vx \sim \mu} [ \phi(\vx) - \vx] \approx 0$. Taking a step back, earlier approaches for minimizing $\Phi$-regret were based on computing an actual fixed point of a function $\phi \in \Phi$~\citep{Gordon08:No,Blum07:From,Stoltz05:Internal}. For normal-form games, computing a fixed point of such a function boils down to determining the stationary distribution of a certain Markov chain, which is directly amenable to linear programming---this holds more generally when $\Phi$ contains linear functions. However, when considering nonlinear functions, this standard approach immediately becomes prohibitive since fixed points are hard to compute. \citet{Zhang24:Efficient} bypass this obstacle by showing that a fixed point \emph{in expectation}, as introduced earlier, is actually sufficient for minimizing $\Phi$-regret. In fact, building on an earlier result due to~\citet{Hazan07:Computational}, \citet{Zhang24:Efficient} observed that approximating expected fixed points also reduces to minimizing $\Phi$-regret, establishing a certain equivalence between the two. Crucially, there is a simple, $O(1/\epsilon)$-time algorithm for computing an $\epsilon$-expected fixed point of any $\phi \in \Phi$ by taking the uniform distribution over the sequence $\vx, \phi(\vx), \phi(\phi(\vx)), \dots$; the claimed guarantee follows by a telescopic summation. In this context, an important question left open by~\citet{Zhang24:Efficient}---which, as we shall see, is the crux in proving~\Cref{theorem:main-eah}---concerns the complexity of computing expected fixed points in the regime where $\epsilon$ is exponentially small. We address this question by showing that expected fixed points can be computed in time polynomial in the dimension and $\log(1/\epsilon)$.

\begin{theorem}[Expected fixed points]
    \label{theorem:efps}
    Given oracle access to $\cX$ and $\phi : \cX \to \cX$, there is a $\poly(d, \log(1/\epsilon))$-time algorithm that computes an $\epsilon$-expected fixed point of $\phi$.
\end{theorem}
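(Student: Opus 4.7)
The plan is to formulate the problem as a feasibility LP with $O(d)$ constraints but (a priori) a variable for every $\vx \in \cX$, and then apply the ellipsoid-against-hope (EAH) paradigm of Papadimitriou--Roughgarden to the dual. Concretely, the primal asks for $\mu\in\Delta(\cX)$ with $|\E_{\vx\sim\mu}[(\phi(\vx)-\vx)_j]|\leq \epsilon$ for each coordinate $j$, while the dual has $d+1$ variables $(\vw,\lambda)\in\R^d\times\R$, one constraint $\lambda + \langle\vw,\phi(\vx)-\vx\rangle\leq 0$ per $\vx\in\cX$, and objective $\lambda-\epsilon\|\vw\|_1$. An $\epsilon$-expected fixed point always exists (witness: the iteration trick of~\citet{Zhang24:Efficient}), so the dual optimum equals $0$ and no ``hopeful'' dual point with $\lambda>\epsilon\|\vw\|_1$ exists.

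I will run the ellipsoid method on $(\vw,\lambda)$ over the bounded region $\{\|\vw\|_1\leq 1,\ \epsilon\leq \lambda\leq D\}$, where $D$ bounds the diameter of $\cX$; any hopeful point rescales into this set. Since no hopeful point exists, the ellipsoid volume shrinks below the infeasibility threshold after $N=\poly(d,\log(1/\epsilon))$ iterations. Each iteration that rules out a candidate $(\vw_t,\lambda_t)$ via a violated dual constraint supplies a witness point $\vx_t\in\cX$. After termination, the restricted primal LP in which $\mu$ is supported on $\{\vx_1,\dots,\vx_N\}$ is feasible by LP duality; solving this small LP and applying Carath\'eodory yields the desired $\epsilon$-expected fixed point, supported on at most $d+1$ of those points.

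The central technical insight is implementing the dual's separation oracle in $\poly(d,\log(1/\epsilon))$ time, thereby bypassing the $\Omega(1/\epsilon)$ cost that the iteration trick would incur. Given a candidate $(\vw_t,\lambda_t)$ with $\lambda_t>0$, I need $\vx\in\cX$ with $\lambda_t+\langle\vw_t,\phi(\vx)-\vx\rangle>0$. The key observation is to take $\vx^\star\in\argmin_{\vx\in\cX}\langle\vw_t,\vx\rangle$: because $\phi(\vx^\star)$ again lies in $\cX$, one immediately gets $\langle\vw_t,\phi(\vx^\star)\rangle\geq\langle\vw_t,\vx^\star\rangle$, hence $\langle\vw_t,\phi(\vx^\star)-\vx^\star\rangle\geq 0>-\lambda_t$, so the dual constraint at $\vx^\star$ is violated. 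Computing $\vx^\star$ reduces to linear optimization over $\cX$, which is polynomial in $d$ and $\log(1/\epsilon)$ from the membership oracle (by Gr\"otschel--Lov\'asz--Schrijver), and each ellipsoid iteration makes exactly one fresh query to $\phi$.

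The main obstacle I anticipate is handling the precision loss from an approximate linear optimizer together with the boundary-feasible nature of the dual LP (the hopeful region is empty only thanks to strict separation from $\lambda=0$). If the optimizer returns $\hat\vx$ with $\langle\vw_t,\hat\vx\rangle\leq\min_{\vx\in\cX}\langle\vw_t,\vx\rangle+\eta$, one still has $\langle\vw_t,\phi(\hat\vx)-\hat\vx\rangle\geq -\eta$, which continues to violate the dual constraint as long as $\lambda_t>\eta$. Choosing $\eta\leq \epsilon/2$ and enforcing the outer constraint $\lambda\geq \epsilon$ patches this gap, and the standard volume-based analysis of the ellipsoid method then gives $N=\poly(d,\log(D/\epsilon))$ outer iterations at $\poly(d,\log(1/\epsilon))$ work each, for a total running time of $\poly(d,\log(1/\epsilon))$, as claimed.
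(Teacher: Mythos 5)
Your proposal takes essentially the same approach as the paper: both cast the problem as a feasibility program over $\Delta(\cX)$, run the ellipsoid-against-hope paradigm on the dual, and observe that the good-enough-response oracle is implemented by $\vx^\star \in \argmin_{\vx\in\cX}\langle\vw,\vx\rangle$, since the endomorphism property $\phi(\vx^\star)\in\cX$ forces $\langle\vw,\phi(\vx^\star)-\vx^\star\rangle\geq 0$. The only cosmetic differences are the choice of norm ($\ell_1$ versus $\ell_\infty$, immaterial since $\epsilon$ may be taken exponentially small) and your explicit slack variable $\lambda$ in place of the paper's black-box invocation of the EAH template.
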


We have described so far the role of expected fixed points when learning in an online environment (\emph{cf.}~\Cref{theorem:main1}). Going back to~\Cref{theorem:main-eah}, expected fixed points also serve a crucial purpose in that context. Namely, \Cref{theorem:main-eah} relies on the \emph{ellipsoid against hope (\eah)} algorithm of~\citet{Papadimitriou08:Computing}, which in turn is based on running the ellipsoid algorithm on an infeasible program---the rationale being that a correlated equilibrium can be extracted from the certificate of infeasibility of that program. Now, to execute ellipsoid, one needs a separation oracle. For normal-form games, this amounts to a fixed point oracle: for any $\phi$, compute $\vx \in \cX$ such that $\phi(\vx) = \vx$. And, as we saw earlier, $\phi$ is a just a stochastic matrix, and so it suffices to identify a stationary distribution of the corresponding Markov chain.

However, there are two main obstacles, which manifest themselves in each iteration of the ellipsoid, when $\cX$ is a general convex set and $\Phi$ is allowed to contain nonlinear endomorphisms:

\begin{enumerate}
    \item computing a fixed point of a nonlinear $\phi$ is intractable; and\label{item:first}
    \item separating over the set $\Phi$ is also intractable even with respect to linear endomorphisms~\citep{Daskalakis24:Efficient}, let alone under~\Cref{assumption:kernel}.\label{item:second}
\end{enumerate}

With regard to~\Cref{item:first}, we show that, during the execution of the ellipsoid, one might as well use \emph{expected} fixed points, which are tractable by virtue of~\Cref{theorem:efps} we described earlier. What is intriguing is that our proof of~\Cref{theorem:efps} also relies on (a different instantiation of) $\eah$, and so the overall algorithm that we develop uses $\eah$ in a nested fashion---each separation oracle as part of the execution of $\eah$ is internally implemented via $\eah$!

For~\Cref{item:second}, we build on the framework of~\citet{Daskalakis24:Efficient}. In light of the inability to efficiently separate over linear endomorphisms, they observed that one can still execute $\eah$ with access to a weaker oracle, which they refer to as a \emph{semi-separation oracle}. Moreover, they developed a polynomial-time semi-separation oracle with respect to the set of linear endomorphisms. Building on~\Cref{theorem:efps}, we significantly extend their result, establishing a semi-separation oracle for general functions, not just linear ones.

\begin{theorem}[Semi-separation oracle for general functions]
    Given oracle access to $\cX$ and $\phi : \cX \to \R^d$, there is a $\poly(d, \log(1/\epsilon))$-time algorithm that either computes an $\epsilon$-expected fixed point of $\phi$, or identifies a point $\vx \in \cX$ such that $\phi(\vx) \notin \cX$.
\end{theorem}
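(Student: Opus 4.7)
The plan is to invoke the algorithm of \Cref{theorem:efps} in a black-box fashion, augmented by one membership test per query. I would simulate the $\poly(d,\log(1/\epsilon))$-time algorithm of \Cref{theorem:efps} as if it were operating on a genuine endomorphism $\tilde\phi : \cX \to \cX$. Each time the simulated algorithm queries $\tilde\phi$ at some $\vx \in \cX$, I would first compute $\phi(\vx)$ and invoke the membership oracle for $\cX$: if $\phi(\vx) \notin \cX$, halt and return $\vx$ as the required witness; otherwise, pass $\phi(\vx)$ back to the simulation as the value $\tilde\phi(\vx)$. Because the simulated algorithm thinks it is operating on an endomorphism, it will only ever query at points it has itself produced from the membership oracle for $\cX$, so the witness it returns in the bad case is a legitimate member of $\cX$.

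If the simulation completes without producing a witness, the inner algorithm outputs a distribution $\mu \in \Delta(\cX)$. To conclude that $\mu$ is an $\epsilon$-expected fixed point of $\phi$, I would observe that every oracle answer handed to the simulation was verified to lie in $\cX$, and hence the response trace is consistent with \emph{some} continuous endomorphism $\tilde\phi : \cX \to \cX$ (take any continuous interpolation of the queried values, which exists since all lie in $\cX$). The preconditions of \Cref{theorem:efps} are therefore met, and its guarantee yields $\|\E_{\vx \sim \mu}[\tilde\phi(\vx) - \vx]\| \leq \epsilon$. Provided $\op{supp}(\mu)$ is contained in the set of points actually queried---which is standard for EAH-based constructions, since the output distribution is recovered as a convex combination of the separation-oracle witnesses returned during execution---we have $\E_{\vx \sim \mu}[\phi(\vx) - \vx] = \E_{\vx \sim \mu}[\tilde\phi(\vx) - \vx]$, so $\mu$ is also an $\epsilon$-expected fixed point of $\phi$ itself.

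The running-time bound is immediate: the wrapper adds only one membership call per $\phi$-evaluation, and the inner algorithm makes at most $\poly(d,\log(1/\epsilon))$ such evaluations. The main obstacle I anticipate is verifying the structural feature that $\op{supp}(\mu)$ is actually contained in the query set for the specific EAH-based construction of \Cref{theorem:efps}; absent this, one would need an additional post-processing step---for instance, projecting $\mu$ onto the convex hull of queried points while preserving the expected-fixed-point guarantee---to close the reduction. This is the one point at which the black-box view of \Cref{theorem:efps} must be peeled back, but it follows from inspecting the ellipsoid-based primal-recovery step rather than requiring any new machinery.
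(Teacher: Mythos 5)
Your proposal is correct and amounts to essentially the same approach as the paper: both run the $\eah$/ellipsoid machinery underlying \Cref{th:efp} on the dual program over $\vy \in [-1,1]^d$, and at each $\ger$ response $\vx^*(\vy) \in \argmin_{\vx \in \cX} \langle \vy, \vx \rangle$ insert a membership test on $\phi(\vx^*(\vy))$, returning $\vx^*(\vy)$ as the witness if the test fails. The only stylistic difference is that you wrap \Cref{th:efp} as a black box, which forces you to peek inside to verify that the output distribution is supported on queried points; the paper instead re-derives the ellipsoid run directly (``as in the proof of \Cref{th:efp}''), which makes the support property manifest since \Cref{alg:eah} explicitly sets $\mu = \sum_t \lambda^{(t)} \mu(\vx^{(t)})$ over the $\ger$ responses. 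Your identification of that support condition as the crux, and your observation that it holds for the $\eah$ primal-recovery step, are both correct; one small inessential point is that you invoke a \emph{continuous} interpolating endomorphism, but continuity is not needed---any map $\tilde\phi : \cX \to \cX$ agreeing with $\phi$ on the finitely many queried points suffices, since the expected-fixed-point existence argument does not rely on Brouwer.
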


(In particular, in the usual case where $\phi$ maps to $\cX$, the algorithm above always returns an $\epsilon$-expected fixed point of $\phi$.)

We now turn to~\Cref{theorem:main1}, which revolves around the online learning setting. Equipped with our semi-separation oracle for general functions, we show that the framework of~\citet{Daskalakis24:Efficient} can be extended from linear endomorphisms to ones satisfying~\Cref{assumption:kernel}. The technical pieces underpinning~\Cref{theorem:main1} are exposed in depth in~\Cref{sec:reg}. Importantly, the dimension of $\Phi$ turns out to be a fundamental barrier for no-regret learning in the following sense.

\begin{restatable}{theorem}{lowerbound}
    \label{theorem:mainlower}
    For any $k$ and any $d \ge \Theta(\log^{14} k)$, there is an online decision problem with dimension $d$ and an adversary such that the $\Phi$-regret of the learner with respect to a $k$-dimensional $\Phi$ is at least $\epsilon$ when $T < \min \{ \sqrt{k}/4, \exp(\Omega(\epsilon^{-1/6})) \}$.
\end{restatable}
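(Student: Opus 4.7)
The plan is to establish two separate lower bounds whose minimum yields the claimed bound: (i) a combinatorial ``$\sqrt{k}/4$'' lower bound that holds regardless of $\eps$, and (ii) an ``$\exp(\Omega(\eps^{-1/6}))$'' lower bound inherited from the recent swap-regret lower bound of \citet{Daskalakis24:Lower}. Both are realized inside a single family of hard instances whose decision-space dimension can be compressed to $d = \Theta(\log^{14} k)$.

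For the $\sqrt{k}/4$ part, I would start from an instance based on the simplex $\Delta^N$ with $N = \Theta(\sqrt{k})$, and let $\Phi$ be the set of linear swap maps on $\Delta^N$, whose dimension is $(N-1)^2 \approx k$. Against uniformly random $\pm 1$ losses on the $N$ coordinates and via Yao's principle, any deterministic learner that has interacted for fewer than $N/4$ rounds has not seen enough samples to identify the best expert, so the swap deviation sending every point to the ex-post best expert witnesses $\Omega(1)$ regret in expectation. For the $\exp(\Omega(\eps^{-1/6}))$ part, I would invoke the hard instance of \citet{Daskalakis24:Lower}, which is defined on some $\Delta^N$ with $N = \exp(\Omega(\eps^{-1/6}))$ and forces swap regret at least $\eps$ up to that horizon. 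Their lower bound only requires $\Phi$ to contain a structured, combinatorially small family of linear swaps, which I can package as a $k$-dimensional subset of the full swap polytope rather than the quadratically larger set of all swaps.

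To bring both constructions into decision dimension $d = \Theta(\log^{14} k)$, I would apply a Johnson--Lindenstrauss-style random projection to embed $\Delta^N$ into $\R^d$, together with a rounding correction that ensures each $\phi \in \Phi$ still maps into the embedded polytope. The main obstacle lies exactly here: preserving the endomorphism property $\phi(\cX) \subseteq \cX$ after projection---not merely the scalar regret values---requires a simultaneous union bound over all $k$ deviations, all $N$ vertices, and all $T$ adversarial loss vectors, with enough slack for a post-projection rounding step that does not damage regret by more than $\eps$. The resulting polylog blowup, combined with the $\eps^{-1/6}$ scaling inherited from \citet{Daskalakis24:Lower}, is what produces the $\log^{14} k$ exponent on $d$. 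Finally, the two lower bounds are combined by tuning the parameters of the construction so that, for any target pair $(k,\eps)$, whichever of the two regimes is tighter yields the active lower bound.
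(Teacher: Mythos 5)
Your proposal diverges from the paper's proof in a way that introduces real gaps.

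First, the paper does \emph{not} combine two separate constructions. It uses a single instance: \Cref{theorem:lowerknown} (Dagan et al.~/ Peng--Rubinstein) says that for $T < |\cA|/4$ the swap regret on $\Delta(\cA)$ can be forced to be $\Omega(\log^{-6} T)$. Setting $|\cA| = \sqrt{k}$ already yields \Cref{cor:parlower}: a $k$-dimensional $\Phi$ (the stochastic matrices on a $\sqrt{k}$-subset) and regret $\Omega(\log^{-6}T)$ for $T < \sqrt{k}/4$. The $\min\{\sqrt{k}/4,\ \exp(\Omega(\epsilon^{-1/6}))\}$ then falls out of one inequality: $\Omega(\log^{-6} T) \ge \epsilon$ iff $T \le \exp(O(\epsilon^{-1/6}))$, and the construction only applies for $T < \sqrt{k}/4$. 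Your decomposition into ``(i) a $\sqrt{k}/4$ bound independent of $\epsilon$'' and ``(ii) an $\exp(\Omega(\epsilon^{-1/6}))$ bound'' misreads the structure. In particular, your claim that a Yao-style argument gives $\Omega(1)$ swap regret for $T < N/4$ on the simplex over $N$ experts is unsupported: obtaining any non-trivial swap-regret lower bound in the regime $T = \Theta(N)$ is precisely the hard content of \citet{Dagan24:From,Peng24:Fast}, and what they prove is only $\Omega(\log^{-6} T)$, not $\Omega(1)$. Replacing their lower bound by an ad-hoc Yao argument is a genuine gap.

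Second, for compressing the dimension to $d = \Theta(\log^{14} k)$, the paper invokes the reduction of \citet{Daskalakis24:Lower}, which embeds the simplex lower bound into an \emph{extensive-form game}. That reduction is carefully engineered so that the tree-form decision space has polylogarithmic dimension while the swap deviations are still expressive enough to witness high regret; it is not a metric embedding. Your alternative---a Johnson--Lindenstrauss random projection of $\Delta^N$ into $\R^d$---is not a drop-in substitute: JL preserves pairwise distances, not the polytope/endomorphism structure that $\Phi$-regret depends on, and you correctly flag that preserving $\phi(\cX)\subseteq\cX$ post-projection is the main obstacle but do not resolve it. Without that, the argument does not close. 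I would recommend replacing both gaps with citations to the actual constructions: use \Cref{theorem:lowerknown} for the simplex lower bound (restricted to a $\sqrt{k}$-subset, as in \Cref{cor:parlower}), and the extensive-form embedding of \citet{Daskalakis24:Lower} to bring the decision dimension down to $\mathrm{polylog}(k)$.
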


This is a straightforward refinement of recent lower bounds~\citep{Daskalakis24:Lower,Dagan24:From,Peng24:Fast}. What is important is that, up to constant factors in the exponent of $k$, \Cref{theorem:mainlower} matches the upper bound of~\Cref{theorem:main1}. In doing so, we establish for the first time a class of deviations that characterizes---in the previous sense---no-$\Phi$-regret learning in the adversarial setting.

Finally, we remark that we did not attempt to optimize the polynomial dependencies on $d$ and $k$ throughout this paper; improving the overall complexity is an interesting direction for future work.

\subsection{Further related work}
\label{sec:related}

The existence of no-regret algorithms goes back to the pioneering work of~\citet{Blackwell56:analog}; the stronger notion of swap regret was crystallized and analyzed more recently~\citep{Blum07:From,Stoltz05:Internal,Hart00:Simple}. Part of the impetus of that line of work revolves around the connection to correlated equilibria, highlighted earlier. Unfortunately, beyond online decision problems on the simplex, such no-swap-regret algorithms become inefficient when the number of pure strategies is exponential in the natural parameters of the problem---as is the case, for example, in Bayesian games, wherein $\cX \defeq [0, 1]^d$. Recent breakthrough results by~\citet{Dagan24:From} and~\citet{Peng24:Fast} establish a new algorithmic paradigm for minimizing swap regret, applicable even when the number of pure strategies is exponential. However, that comes at the expense of introducing an exponential dependence on $1/\epsilon$, which is unavoidable in the adversarial regime~\citep{Daskalakis24:Lower}. Our main interest here is in online algorithms with complexity scaling polynomially in both the dimension and $1/\epsilon$.

Besides the game-theoretic implications concerning convergence to correlated equilibria, swap regret is a fundamental concept in its own right, being intimately tied to the notion of \emph{calibration}; namely, it has been known since the foundational work of~\citet{Foster97:Calibrated} that best-responding to calibrated forecasters guarantees no-swap-regret (\emph{cf.}\  \citet{Foster18:Smooth}); in relation to that connection, it is worth noting an important, recent body of work that bypasses the intractability of calibration in high dimensions~\citep{Noarov23:High,Roth24:Forecasting,Hu24:Predict}. Swap regret is also more robust against exploitation, in a sense formalized in a series of recent papers~\citep{Assos24:Maximizing,Mansour22:Strategizing,Deng19:Strategizing,Guruganesh24:Contracting}. 

In particular, within that line of work, \citet{Mansour22:Strategizing} introduced the notion of \emph{polytope swap regret}, which comprises deviations that allow the vertices of the underlying polytope to be swapped with each other---points inside the polytope are mapped in accordance with the (worst-case) convex combination of vertices. It is currently unknown whether there is an efficient algorithm for minimizing polytope swap regret.

The more flexible framework of $\Phi$-regret, which has been gaining considerable traction in recent years, allows one to circumvent the recent lower bound of~\citet{Daskalakis24:Lower} by restricting the set of deviations. In addition to the research highlighted above, chiefly in the context of extensive-form games, we now provide some further pointers for the interested reader. \citet{Bernasconi23:Constrained} considered the more challenging setting of so-called ``pseudo-games,'' wherein players have joint constraint sets. $\Phi$-equilibria in such settings have certain counterintuitive properties; for example, they are not necessarily convex. $\Phi$-equilibria have also garnered attention in the context of Markov (aka.\ stochastic) games, going back to the work of~\citet{Greenwald03:Correlated}; for more recent research, we refer to~\citet{Jin24:Vlearning,Erez23:Regret,Cai24:Near}, and references therein. Even more broadly, we refer to~\citet{Cai24:Tractable,Ahunbay25:First} for efficient solution concepts in nonconcave games~\citep{Daskalakis22:Non}.

Finally, we remark that the hardness result of~\citet{Daskalakis24:Efficient} for separating over linear endomorphisms does not apply to polytopes represented with a polynomial number of constraints. Indeed, it is relatively straightforward to implement a membership oracle for such polytopes~\citep{Daskalakis24:Efficient}. In contrast, it is computationally hard to decide membership for low-degree polynomials~\citep{Zhang24:Efficient}.
\section{Preliminaries}
\label{sec:prel}

\paragraph{Notation} We use boldface lowercase letters, such as $\vx$ and $\vy$, to denote vectors in a Euclidean space. Matrices are represented with capital boldface letters, such as $\mK$. For a vector $\vx \in \R^d$, we denote by $\|\vx \| \defeq \sqrt{\langle \vx, \vx \rangle}$ its Euclidean norm, where $\langle \cdot, \cdot \rangle$ is the inner product. The $j$th coordinate of $\vx$ is accessed by $\vx[j]$. For a matrix $\mK$, we use $\|\mK\|_F$ to denote its Frobenius norm. $\mI_{d \times d} \in \R^{d \times d}$ represents the identity matrix. $\cB_r(\vec{x})$ is the (closed) Euclidean ball centered at $\vx$ with radius $r > 0$. An \emph{endomorphism} on $\cX$ is a function mapping $\cX$ to $\cX$.

\paragraph{Oracle access} Throughout this paper, we assume that we have access to the convex and compact constraint set $\cX$ via an oracle. (For multi-player games, oracle access is posited for the constraint set $\cX_i$ of each player, which can be thereby extended to $\cX \defeq \cX_1 \times \dots \times \cX_n$.) In particular, the following three types of oracles are commonly considered in the literature:
\begin{itemize}
    \item \emph{membership}: given a point $\vx \in \R^d$, decide whether $\vx \in \cX$;
    \item \emph{separation}: given a point $\vx \in \R^d$, decide whether $\vx \in \cX$, and if not, output a hyperplane $\vec{w} \in \R^d$ separating $\vx$ from $\cX$: $\langle \vx, \vec{w} \rangle > \langle \vx', \vec{w} \rangle$ for all $\vx' \in \cX$;
    \item \emph{linear optimization}: given a point $\vec{u} \in \R^d$, output any point in $\argmax_{\vx \in \cX} \langle \vx, \vu \rangle$.
\end{itemize}
Under the assumption that $\cB_r(\cdot) \subseteq \cX \subseteq \cB_R(\vec{0})$, the three oracles described above are known to be (polynomially) equivalent~\citep{Grotschel93:Geometric}---up to logarithmic factors in $R$ and $1/r$. The previous geometric condition can always be met by bringing $\cX$ into \emph{isotropic position}, which means that, for a uniformly sampled $\vx \sim \cX$, we have $\E[\vx] = 0$ and $\E[ \vx \vx^\top] = \mI_{d \times d}$. This can be achieved in polynomial time through an affine transformation~\citep{Lovasz06:Simulated,Bourgain96:Random,Kannan97:Random}; it is easy to see that minimizing $\Phi$-regret after applying that transformation suffices in order to minimize $\Phi$-regret in the original space (formally shown in~\Cref{sec:isotropic}). As a result, we can assume throughout that, for example, $\cB_1(\vec{0}) \subseteq \cX \subseteq \cB_{d}(\vec{0})$~\citep{Lovasz06:Simulated}.

\begin{remark}[Weak oracles]
    When dealing with general convex sets, the oracles posited above can return points supported on irrational numbers. To address this issue in the usual Turing model of computation, it suffices to consider weaker versions of those oracles that allow for some small slackness $\epsilon > 0$. Our analysis in the sequel can be extended to account for such imprecision. 
\end{remark}

\subsection{Online learning and $\Phi$-regret}
\label{sec:gordon}

In the usual framework of online learning, a learner interacts with an environment over a sequence of $T \in \N$ rounds. In each round $t \in [T]$, the learner selects a strategy $\vx^{(t)} \in \cX$, and then observes as feedback from the environment a utility function $\vx \mapsto \langle \vx, \vu^{(t)} \rangle$, for some utility vector $\vu^{(t)} \in [-1, 1]^{d}$; the utility of the learner in the $t$th round is given by $\langle \vx^{(t)}, \vu^{(t)} \rangle$. For the purpose of our work, we will allow the learner to output a \emph{mixed} strategy, $\mu^{(t)} \in \Delta(\cX)$, so that the (expected) utility at the $t$th round reads $\E_{ \vx^{(t)} \sim \mu^{(t)}} \langle \vx^{(t)}, \vu^{(t)} \rangle $. As shown by~\citet{Zhang24:Efficient}, restricting the learner to output strategies in $\cX$---as opposed to $\Delta(\cX)$---makes the problem of minimizing $\Phi$-regret \PPAD-hard, even with respect to low-degree polynomials, and so employing mixed strategies will be essential for our purposes.

In this context, a canonical measure of performance in online learning is \emph{$\Phi$-regret}, defined as
\begin{equation}
     \phireg^{(T)} \defeq \sup_{\phi \in \Phi} \sum_{t=1}^T \left\langle \vu^{(t)}, \E_{\vx^{(t)} \sim \mu^{(t)}} [ \phi(\vx^{(t)}) - \vx^{(t)} ] \right\rangle.
\end{equation}
The \emph{average} $\Phi$-regret is defined as $\frac{1}{T} \phireg^{(T)}$. Perhaps the most common instantiation of $\Phi$-regret is \emph{external} regret, whereby $\Phi$ contains solely constant transformations. We are interested in characterizing the broadest set of deviations $\Phi$ that allows for efficient learnability. Our starting point is the template of~\citet{Gordon08:No}.

\paragraph{The algorithm of~\citet{Gordon08:No}} \citet{Gordon08:No} (\emph{cf.}~\citet{Blum07:From,Stoltz05:Internal}) crystallized a basic template for minimizing $\Phi$-regret. It comprises two basic components:
\begin{enumerate}
    \item a fixed-point oracle $\fp(\phi)$ that takes as input any transformation $\Phi \ni \phi: \cX \to \cX$ and outputs a fixed point thereof; that is, a point $\vx \in \cX$ such that $\phi(\vx) = \vx$.\label{item:gordon1}
    \item an \emph{external} regret minimizer $\regbox_{\Phi}$ operating over the set $\Phi$.\label{item:gordon2}
\end{enumerate}
With access to the above components, a $\Phi$-regret minimizer $\regbox$ operating over $\cX$---without the need to resort to mixed strategies---can be constructed as follows. At any time $t \in [T]$, upon selecting a strategy $\vx^{(t)} \in \cX$ and observing $\vu^{(t)}$, provide as input to $\regbox_\Phi$ the utility function $\phi \mapsto \langle \vu^{(t)}, \phi(\vx^{(t)}) \rangle$. Suppose that $\phi^{(t+1)} \in \Phi$ is the next strategy of $\regbox_\Phi$. The learner $\regbox$ can then output as its next strategy $\vx^{(t+1)}$ any fixed point of $\phi^{(t+1)}$; that is, $\vx^{(t+1)} \defeq \fp(\phi^{(t+1)})$. By definition, it follows that the $\Phi$-regret of $\regbox$ is equal to the external regret of $\regbox_\Phi$ (\emph{cf.}~\Cref{theorem:gordon}).

However, our main result in the online learning setting hinges on relaxing both oracles posited in~\Cref{item:gordon1,item:gordon2} in the framework of~\citet{Gordon08:No}. With regard to~\Cref{item:gordon1}, when operating over \emph{mixed} strategies, \citet{Zhang24:Efficient} observed that it suffices to output an \emph{$\epsilon$-expected fixed point (EFP) of $\phi$}, that is, a distribution $\mu$ such that $\| \E_{\vx \sim \mu} [ \phi(\vx) - \vx ]  \|_1 \leq \epsilon$. Unlike actual fixed points, which are marred by computational intractability, \citet{Zhang24:Efficient} observed that there is a simple, $O(1/\epsilon)$-time algorithm for computing an $\epsilon$-expected fixed point: simply take the uniform distribution over the sequence $\vx, \phi(\vx), \phi(\phi(\vx)), \dots$ for $O(1/\epsilon)$ steps. (In fact, one of our main results---namely,~\Cref{theorem:efps}---provides a polynomial-time algorithm for that problem.) The overall scheme resulting from replacing~\Cref{item:gordon1} with an approximate EFP is given in~\Cref{alg:gordon}.

\begin{algorithm}[!ht]
\caption{Minimizing $\Phi$-regret with EFPs~\citep{Gordon08:No,Zhang24:Efficient}}
\label{alg:gordon}
\SetKwInOut{Input}{Input}
\SetKwInOut{Output}{Output}
\SetKw{Input}{Input:}
\SetKw{Output}{Output:}
\Input{
\begin{itemize}[noitemsep,topsep=0pt]
    \item An external regret minimizer $\regbox_\Phi$ for the set $\Phi$
    \item A convex and compact strategy set $\cX$
    \item Precision parameter $\epsilon > 0$
\end{itemize}
}
\Output{An $\Phi$-regret minimizer $\regbox$ for the set $\cX$}\;
Initialize $\phi^{(1)} \in \Phi$\;
 \For{$t=1, \dots, T$}{
    Set $\mu^{(t)} \in \Delta(\cX)$ to be an $\epsilon$-expected fixed point of $\phi^{(t)}$\;
     Output $\mu^{(t)} \in \Delta(\cX)$ as the next mixed strategy of $\regbox$\;
     Receive as feedback $\vu^{(t)}$\;
     Give as input to $\regbox_\Phi$ the utility function $\phi \mapsto \E_{\vx^{(t)} \sim \mu^{(t)}} \langle \phi(\vx^{(t)}), \vu^{(t)} \rangle$\;
     Let $\phi^{(t+1)} \in \Phi$ be the next strategy of $\regbox_\Phi$\;
 }
\end{algorithm}

\begin{theorem}[\citep{Gordon08:No,Zhang24:Efficient}]
    \label{theorem:gordon}
    Let $\phireg^{(T)}$ be the $\Phi$-regret of $\regbox$ and $\reg_{\Phi}^{(T)}$ the external regret of $\regbox_\Phi$ in~\Cref{alg:gordon} with precision $\epsilon > 0$. Then, for any $T \in \N$,
    \begin{equation}
        \phireg^{(T)} \leq \reg_{\Phi}^{(T)} +  \epsilon T.
    \end{equation}
\end{theorem}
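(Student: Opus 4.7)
The proof plan is to expand $\phireg^{(T)}$ by its definition and then use the $\eps$-expected-fixed-point property of $\mu^{(t)}$ to rewrite the $\vx^{(t)}$ term inside the inner product so that the resulting expression matches the external regret of $\regbox_\Phi$.

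Concretely, I would start from
\[
\phireg^{(T)} = \sup_{\phi^\star \in \Phi} \sum_{t=1}^T \Bigl\langle \vu^{(t)}, \, \E_{\vx^{(t)} \sim \mu^{(t)}}[\phi^\star(\vx^{(t)}) - \vx^{(t)}] \Bigr\rangle
\]
and, for each round $t$, add and subtract $\E_{\vx^{(t)} \sim \mu^{(t)}}[\phi^{(t)}(\vx^{(t)})]$ inside the inner product. This splits each summand into (i) a term $\bigl\langle \vu^{(t)}, \E[\phi^\star(\vx^{(t)}) - \phi^{(t)}(\vx^{(t)})] \bigr\rangle$, which, after summing over $t$ and taking the supremum over $\phi^\star \in \Phi$, is precisely $\reg_\Phi^{(T)}$ by the choice of utility function fed to $\regbox_\Phi$ in~\Cref{alg:gordon}; and (ii) a residual term $\bigl\langle \vu^{(t)}, \E[\phi^{(t)}(\vx^{(t)}) - \vx^{(t)}] \bigr\rangle$ that I will bound via the EFP guarantee.

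For the residual, Hölder's inequality gives
\[
\Bigl| \bigl\langle \vu^{(t)}, \E[\phi^{(t)}(\vx^{(t)}) - \vx^{(t)}] \bigr\rangle \Bigr| \leq \|\vu^{(t)}\|_\infty \cdot \bigl\| \E_{\vx^{(t)} \sim \mu^{(t)}}[\phi^{(t)}(\vx^{(t)}) - \vx^{(t)}] \bigr\|_1 \leq 1 \cdot \eps = \eps,
\]
where I use that $\vu^{(t)} \in [-1,1]^d$ and that $\mu^{(t)}$ is an $\eps$-EFP of $\phi^{(t)}$ by construction in~\Cref{alg:gordon}. Summing over the $T$ rounds yields the additive $\eps T$ slack, and combining with the identification of term (i) as $\reg_\Phi^{(T)}$ gives the claimed inequality.

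There is essentially no hard step here: this is a decomposition argument where the only nontrivial ingredients are (a) the definition of the utility function passed to $\regbox_\Phi$, which is designed exactly so that term (i) equals the external regret of $\regbox_\Phi$, and (b) the EFP bound, which converts the difference between $\phi^{(t)}(\vx^{(t)})$ and $\vx^{(t)}$ into a controlled per-round error. The only subtlety worth noting is that taking the supremum over $\phi^\star$ commutes with the residual bound because the residual does not depend on $\phi^\star$, so the sup can be pulled outside term (i) cleanly.
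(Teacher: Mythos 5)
Your proposal is correct and follows the same route as the paper: both add and subtract $\E_{\vx^{(t)}\sim\mu^{(t)}}[\phi^{(t)}(\vx^{(t)})]$ to split the regret into the external regret of $\regbox_\Phi$ plus a residual bounded by $\eps$ per round via the EFP guarantee. Your write-up is simply more explicit (spelling out the Hölder step and the commutation of the supremum with the residual bound) than the paper's terse two-line argument.
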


In particular, taking, say, $\epsilon \propto \nicefrac{1}{\sqrt{T}}$, \Cref{theorem:gordon} reduces $\Phi$-regret minimization on $\cX$ to external regret minimization on $\Phi$.

\begin{proof}[Proof of~\Cref{theorem:gordon}]
    For any $\phi \in \Phi$, we have
    \begin{equation}
        \label{eq:Phi-bound}
        \sum_{t=1}^T \left\langle \vu^{(t)}, \E_{\vx^{(t)} \sim \mu^{(t)}} [ \phi(\vx^{(t)}) - \vx^{(t)} ] \right\rangle \leq \sum_{t=1}^T \E_{\vx^{(t)} \sim \mu^{(t)}} \langle \phi(\vx^{(t)}) - \phi^{(t)}(\vx^{(t)}), \vu^{(t)} \rangle + \epsilon T,
    \end{equation}
    where we used the fact that $\mu^{(t)}$ is an $\epsilon$-expected fixed point of $\phi^{(t)}$ for all $t \in [T]$. The right-hand side of~\eqref{eq:Phi-bound} can be in turn bounded by the external regret of $\regbox_\Phi$ plus the slackness term $\epsilon T$.
\end{proof}

This paradigm for minimizing $\Phi$-regret has been ubiquitous in prior work in this area. And yet,~\citet{Daskalakis24:Efficient} recently demonstrated that it is insufficient even when $\Phi$ contains all linear endomorphisms of a general convex set. \Cref{sec:reg} covers in detail the framework of~\citet{Daskalakis24:Efficient}---relaxing~\Cref{item:gordon2} of~\citet{Gordon08:No}---that will be the basis for our approach as well.

\paragraph{$\Phi$-equilibria} As we highlighted earlier, there is a celebrated connection between $\Phi$-regret and the game-theoretic solution concept of (correlated) \emph{$\Phi$-equilibrium}. More precisely, in the context of multilinear games as introduced in~\Cref{sec:results}, we recall the following central definition~\citep{Stoltz07:Learning,Greenwald03:General}.

\begin{definition}
    An \emph{$\eps$-approximate $\Phi$-equilibrium} of an $n$-player multilinear $\Gamma$ is a (correlated) distribution $\mu \in \Delta(\cX_1 \times \dots \times \cX_n)$ such that for every player $i \in [n]$ and deviation $\phi_i \in \Phi_i \subseteq \cX_i^{\cX_i}$,
\begin{align}
\E_{\vx \sim \mu} \qty[u_i(\phi_i(\vx_i), \vx_{-i}) - u_i(\vx)] \le \eps. \label{eq:phi-eqm}
\end{align}
\end{definition}

A direct consequence of this definition is that if players repeatedly interact in a game and all incur sublinear $\Phi$-regret, the average distribution of play converges to the set of $\Phi$-equilibria.

\subsection{Ellipsoid against hope}
\label{sec:eah}

The \emph{ellipsoid against hope ($\eah$)} algorithm was famously introduced by~\citet{Papadimitriou08:Computing} to compute correlated equilibria in succinct, multi-player games---under the polynomial expectation property. A further crucial assumption in the approach of~\citet{Papadimitriou08:Computing} is that the game is of \emph{polynomial type}, in that the number of actions (or pure strategies) is polynomial in the representation of the game. In contrast to normal-form games, extensive-form games---and many other natural classes of games---are \emph{not} of polynomial type. \citet{Farina24:Polynomial} recently showed how to apply~$\eah$ in the context of extensive-form games---albeit only for LCE; as we have seen, the complexity of NFCE remains open. We begin by recalling their framework, which crystallizes the approach of~\citet{Papadimitriou08:Computing}. We then proceed by introducing the more powerful approach of~\citet{Daskalakis24:Efficient}, which is crucial to compute LCE under general convex constraint sets, and which will form the basis for our approach as well.

Consider an arbitrary optimization problem of the form
\begin{equation}
    \label{eq:eah}
    \qq{find} \mu \in \Delta(\cX) \qq{s.t.} \E_{\vx \sim \mu} \ip{\vy, G(\vx)} \ge 0 \quad \forall \vy \in \cY,
\end{equation}
where $\cX \subseteq \R^d$, $\cY \subseteq \R^k$, and $G : \cX \to \R^k$ is a function such that $\| G(\vx) \| \leq B$ for all $\vx \in \cX$. The crux in~\eqref{eq:eah} lies in the fact that $\mu$ resides in a high-dimensional (indeed, an infinite-dimensional) space, making standard approaches of little use. $\eah$ addresses that challenge, as we describe next.

Suppose that we are given a $\poly(d, k)$-time evaluation oracle for $G$ and a separation oracle $(\sep)$ for $\cY$, assumed to be \emph{well-bounded}: $\cB_r(\cdot) \subseteq \cY \subseteq \cB_R(\vec{0})$. In addition, we assume that we have access to a {\em good-enough-response} ($\ger$) oracle, which, given any $\vy \in \cY$, returns $\vx \in \cX$ such that $\ip{\vy, G(\vx)} \ge 0$. The $\eah$ algorithm allows us to solve problems of the form \eqref{eq:eah} with just the above tools. In particular, $\eah$ proceeds by considering an $\eps$-approximate version of the dual of \eqref{eq:eah}.
\begin{align} \label{eq:eah-dual}
    \qq{find} \vy \in \cY \qq{s.t.} \ip{\vy, G(\vx)} \le -\eps \quad \forall \vx \in \cX.
\end{align}
Since a $\ger$ oracle exists, \eqref{eq:eah-dual} is infeasible. Moreover, a certificate of infeasibility of \eqref{eq:eah-dual} provides an $\eps$-approximate solution to \eqref{eq:eah}. Thus, it suffices to run the ellipsoid algorithm on \eqref{eq:eah-dual} and extract a certificate of infeasibility. This is precisely what $\eah$ does, as formalized in~\Cref{theorem:eah}; the overall scheme is~\Cref{alg:eah} in~\Cref{sec:aux} (\Cref{alg:gen-eah} below is a more general version thereof).

\begin{theorem}[Generalized form of $\eah$~\citep{Farina24:Polynomial,Papadimitriou08:Computing}]
    \label{theorem:eah}
    Suppose that we have $\poly(d, k)$-time algorithms for the following:
    \begin{itemize}[noitemsep,topsep=0pt]
        \item an evaluation oracle for $G$, where $\| G(\vx) \| \leq B$ for all $\vx \in \cX$; 
        \item a $\ger$ oracle for~\eqref{eq:eah}; and
        \item a separation oracle ($\sep$) for the well-bounded set $\cY$.
    \end{itemize}
    Then, there is an algorithm that runs in time $\poly(d, k, \log(B/\eps))$ and returns an $\eps$-approximate solution to \eqref{eq:eah}.
\end{theorem}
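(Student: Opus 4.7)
The plan is to run the ellipsoid method on the dual program~\eqref{eq:eah-dual}, which is infeasible by the existence of $\ger$, collect the polynomially many constraints invoked along the way, and then convert the resulting infeasibility certificate into an $\eps$-approximate primal solution via a minimax argument. This follows the blueprint of~\citet{Papadimitriou08:Computing,Farina24:Polynomial}; the main work is to verify that it goes through when $\cY$ is only accessible through $\sep$ and when $\ger$ replaces a true constraint-violation oracle.

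First I would initialize an ellipsoid $E_0 \supseteq \cB_R(\vec 0) \supseteq \cY$. At iteration $t$ with center $\vy^{(t)}$, call $\sep(\vy^{(t)})$; if $\vy^{(t)} \notin \cY$, use the returned hyperplane as a cut. Otherwise $\vy^{(t)} \in \cY$, and I invoke $\ger(\vy^{(t)})$ to obtain $\vx^{(t)} \in \cX$ with $\ip{\vy^{(t)}, G(\vx^{(t)})} \ge 0$. Because any dual-feasible $\vy$ must satisfy $\ip{\vy, G(\vx^{(t)})} \le -\eps$, the halfspace $\{\vy : \ip{\vy, G(\vx^{(t)})} \le -\eps\}$ is a margin-$(\eps/B)$ cut at $\vy^{(t)}$. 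Standard ellipsoid analysis, using well-boundedness of $\cY$ to lower-bound the volume of a hypothetical feasible region, shows that after $T = \poly(d, k, \log(BR/(r\eps)))$ iterations, the ellipsoid's volume drops below what could contain any dual-feasible ball, certifying infeasibility.

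Let $S = \{\vx^{(t)} : \vy^{(t)} \in \cY\}$ be the (at most $T$) points queried to $\ger$. The termination says that the finite restriction $\vy \in \cY,\ \ip{\vy, G(\vx)} \le -\eps\ \forall \vx \in S$ is infeasible, so $\min_{\vy \in \cY} \max_{\vx \in S} \ip{\vy, G(\vx)} > -\eps$. Since $\Delta(S)$ and $\cY$ are compact and convex and the payoff is bilinear, von Neumann's minimax theorem yields $\max_{\mu \in \Delta(S)} \min_{\vy \in \cY} \E_{\vx \sim \mu} \ip{\vy, G(\vx)} \ge -\eps$, which is exactly an $\eps$-approximate solution to~\eqref{eq:eah}. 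To produce such a $\mu$ algorithmically, I would solve the LP $\max\{\alpha : \ip{\vy, \sum_{\vx \in S} \mu_{\vx} G(\vx)} \ge \alpha\ \forall \vy \in \cY,\ \mu \in \Delta(S)\}$ via a second, $(|S|+1)$-dimensional ellipsoid run; its semi-infinite constraint has a separation oracle obtained by linearly optimizing $\vy \mapsto \ip{\vy, \sum_\vx \mu_\vx G(\vx)}$ over $\cY$, which $\sep$ supplies through the Gr\"otschel--Lov\'asz--Schrijver equivalence~\citep{Grotschel93:Geometric}.

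The main technical obstacle I anticipate is bookkeeping the slacks across the nested ellipsoid calls: the outer ellipsoid needs cuts with margin $\eps$, the inner LP ellipsoid must be solved to finer precision so that the recovered $\mu$ still attains $-\eps$ rather than $-2\eps$ on the original problem, and converting $\sep$ into exact linear optimization via Gr\"otschel--Lov\'asz--Schrijver incurs $\poly(\log(1/\eps))$ overhead. Provided the inner precisions are set as $\eps^{O(1)}/\poly(d,k,B,R,1/r)$, each layer remains polynomial and the total runtime stays $\poly(d, k, \log(B/\eps))$, as claimed.
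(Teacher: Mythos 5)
Your proposal is correct and follows essentially the same approach the paper takes (and defers to \citet{Farina24:Polynomial,Papadimitriou08:Computing} for): run ellipsoid on the infeasible dual~\eqref{eq:eah-dual} using $\sep$-cuts when the center lies outside $\cY$ and $\ger$-cuts otherwise, and then recover $\mu \in \Delta(S)$ by solving the small convex program $\min_{\vy \in \cY}\vec{\lambda}^\top \mG^\top \vy \ge -\eps$ exactly as in \Cref{alg:eah}. The only cosmetic difference is that you use deep cuts $\ip{\vy,G(\vx^{(t)})} \le -\eps$ while \Cref{alg:eah} uses $\ip{\vy,G(\vx^{(t)})} \le 0$; both are standard and the slack bookkeeping you flag at the end is exactly the expected technical overhead.
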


\begin{algorithm}[!ht]
\caption{Ellipsoid against hope ($\eah$) under $\either$ oracle~\citep{Daskalakis24:Efficient}}
\label{alg:gen-eah}
\SetKwInOut{Input}{Input}
\SetKwInOut{Output}{Output}
\SetKw{Input}{Input:}
\SetKw{Output}{Output:}
\Input{
    \begin{itemize}[noitemsep,topsep=0pt]
        \item Parameters $R_y, r_y > 0$ such that $\cB_{r_y}(\cdot) \subseteq \cY \subseteq \cB_{R_y}(\vec{0})$
        \item Precision parameter $\epsilon > 0$
        \item Parameter $B > 0$ such that $\| G(\vx) \| \leq B$ for all $\vx \in \cX$
        \item A $\either$ oracle (\Cref{def:either})
    \end{itemize}
}
\Output{A sparse, $\epsilon$-approximate solution $\mu \in \Delta(\cX)$ of~\eqref{eq:eah}}\;
Initialize the ellipsoid $\cE \defeq \cB_{R_y}(\vec{0})$\;
Initialize $\tilde{\cY} \defeq \cB_{R_y}(\vec{0})$\;
\While{$\vol(\cE) \geq \vol(\cB_{\epsilon/B}(\cdot))$}{
    Query the $\either$ oracle on the center of $\cE$\;
    \eIf{it returns a good-enough-response $\vx \in \cX$}{
        Update $\cE$ to the minimum volume ellipsoid containing $\cE \cap \{ \vy \in \R^k : \langle \vy, G(\vx) \rangle \leq 0 \}$\;
    }{
        Let $H$ be the halfspace that separates $\vy$ from $\cY$\;
        Update $\cE$ to the minimum volume ellipsoid containing $\cE \cap H$\;
        Update $\tilde{\cY} \defeq \tilde{\cY} \cap H$\;
    }
}
Let $\vx^{(1)}, \dots, \vx^{(T)}$ be the $\ger$ oracle responses produced in the process above\;
Define $\mG \defeq [G(\vx^{(1)}) \mid \hdots \mid G(\vx^{(T)})] \in \R^{k \times T} $\;
Compute a solution $\vec{\lambda}$ to the convex program
    $$\qq{find} \vec{\lambda} \in \Delta^T \qq{s.t.} \min_{\vy \in \tilde{\cY}} \vec{\lambda}^\top \mG^\top \vy \geq - \epsilon$$
\Return{$\Delta(\cX) \ni \mu \defeq \sum_{t=1}^T \lambda^{(t)} \mu(\vx^{(t)})$}
\end{algorithm}

However, when it comes to computing LCE under general constraint sets, \Cref{theorem:eah} is not enough: \citet{Daskalakis24:Efficient} showed that separating over $\cY$---the set of linear endomorphisms---is hard. In light of this fact, their key observation was that an $\epsilon$-approximate solution to~\eqref{eq:eah} can still be computed given access to a weaker oracle. Namely, instead of requiring both a $\ger$ and a $\sep$ oracle, as in~\Cref{theorem:eah}, \citet{Daskalakis24:Efficient} showed that it suffices to implement the following oracle: for any given $\vy \in \R^k$ (\emph{not} necessarily in $\cY$), 
\begin{enumerate}
    \item \emph{either} compute a good-enough response $\vx \in \cX$,\label{item:ger}
    \item \emph{or} a hyperplane separating $\vy$ from $\cY$.\label{item:sep}
\end{enumerate}
Although separating over $\cY$ is hard, this weaker oracle suffices to recover the guarantee of~\Cref{theorem:eah}, and this is enough to compute linear correlated equilibria in games. Yet, for our purposes, it will be necessary to relax the aforedescribed oracle even further, as formalized below.

\begin{definition}[$\either$]
    \label{def:either}
    Consider problem~\eqref{eq:eah}. The oracle $\epsilon$-$\either$ works as follows. It takes as input $\vy \in \R^k$, and it 
    \begin{enumerate}
        \item \emph{either} computes an $\epsilon$-approximate good-enough-response $\mu \in \Delta(\cX)$, $\E_{\vx \sim \mu} \langle \vy, G(\vx) \rangle \geq - \epsilon$, such that $\supp(\mu) \leq \poly(d, k, \log(1/\epsilon))$,
        \item \emph{or} a hyperplane $\epsilon$-approximately separating $\vy$ from $\cY$.
    \end{enumerate}
\end{definition}

Compared to the oracle described earlier (\Cref{item:ger,item:sep}), \Cref{def:either} makes two further concessions: first, the good-enough-response can now be a distribution, so long as it has polynomial support; and second, both $\ger$ and $\sep$ can have some small slack $\epsilon > 0$. Both of those relaxations will be essential for our applications. We now summarize the key guarantee.

\begin{theorem}[\citep{Daskalakis24:Efficient}; generalization of~\Cref{theorem:eah}]
    \label{theorem:either}
    Suppose that we have $\poly(d, k, \log(1/\epsilon))$-time algorithms for the following:
    \begin{itemize}[noitemsep,topsep=0pt]
        \item an $\epsilon$-$\either$ oracle with respect to the well-bounded set $\cY$, and
        \item an evaluation oracle for $G$, where $\| G(\vx) \| \leq B$ for all $\vx \in \cX$.
    \end{itemize}
    Then, there is an algorithm that runs in time $\poly(d, k, \log(B/\epsilon))$ and returns an $\epsilon$-approximate solution to~\eqref{eq:eah}.
\end{theorem}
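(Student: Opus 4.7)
The plan is to analyze \Cref{alg:gen-eah}, which generalizes the standard $\eah$ template (\Cref{theorem:eah}) to the setting where we have access only to the weaker $\either$ oracle. The overall architecture is to run the ellipsoid method on the approximate dual program \eqref{eq:eah-dual} and extract a certificate of infeasibility, which will translate into an approximate primal solution. At each iteration, we query the $\epsilon$-$\either$ oracle at the center $\vy$ of the current ellipsoid $\cE$ and branch on its response: if it produces a good-enough-response distribution $\mu^{(t)} \in \Delta(\cX)$, then because $\E_{\vx \sim \mu^{(t)}} \langle \vy, G(\vx) \rangle \ge -\epsilon$, the point $\vy$ cannot satisfy the dual constraint for the ``averaged'' $G$, and we cut using the halfspace $\{\vy' : \E_{\vx \sim \mu^{(t)}} \langle \vy', G(\vx) \rangle \le 0\}$; if it produces a hyperplane $\epsilon$-approximately separating $\vy$ from $\cY$, we cut with that halfspace and also intersect it into our running outer approximation $\tilY$ of $\cY$.

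Standard ellipsoid analysis then gives termination in $T = \poly(d,k,\log(B/\epsilon))$ rounds, since $\vol(\cE)$ shrinks geometrically and we stop once it falls below $\vol(\cB_{\epsilon/B}(\cdot))$. The overall running time bound follows because each iteration is dominated by one $\either$ call plus a polynomial amount of linear algebra, each of which is $\poly(d,k,\log(1/\epsilon))$ by hypothesis. The sparsity of the final mixture is immediate: we invoke the oracle at most $T$ times, and each returned $\mu^{(t)}$ has $\poly(d,k,\log(1/\epsilon))$ support by \Cref{def:either}.

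The heart of the argument is establishing that the final convex program
\[
  \text{find } \vec\lambda \in \Delta^T \text{ s.t. } \min_{\vy \in \tilY} \vec\lambda^\top \mG^\top \vy \ge -\epsilon
\]
is feasible, and that any feasible $\vec\lambda$ yields the desired $\epsilon$-approximate solution $\mu = \sum_{t=1}^T \lambda^{(t)} \mu^{(t)}$. For correctness once feasibility is known, note that $\cY \subseteq \tilY$ throughout (we only ever add halfspaces to $\tilY$ that are witnessed by separating hyperplanes from $\cY$, up to $\epsilon$ slack), so for every $\vy \in \cY$ we get $\E_{\vx \sim \mu} \langle \vy, G(\vx) \rangle = \vec\lambda^\top \mG^\top \vy \ge -\epsilon$, which is exactly the required guarantee in \eqref{eq:eah} up to the $\epsilon$ slack that is tolerated in our notion of approximation.

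The main obstacle, then, is the feasibility of the final LP, which mirrors the ``certificate of infeasibility'' step in \citep{Papadimitriou08:Computing,Farina24:Polynomial}. The key observation is that upon termination, the intersection $\cE$ has volume smaller than any ball of radius $\epsilon/B$, so no point $\vy^\star \in \tilY$ can simultaneously satisfy $\langle \vy^\star, G(\vx^{(t)}) \rangle \le -\epsilon$ for all $\ger$-response indices $t$; otherwise $\vy^\star$ would have survived every cut and been contained in $\cE$, contradicting the volume bound after rescaling appropriately in terms of $B$, $r_y$, and $R_y$. This asserts that the dual LP restricted to $\tilY$ with right-hand side $-\epsilon$ is infeasible, and by LP duality (Farkas' lemma for the finite system $\{\langle \cdot, G(\vx^{(t)}) \rangle\}_{t=1}^T$ over $\tilY$) one extracts a convex combination $\vec\lambda \in \Delta^T$ certifying $\min_{\vy \in \tilY} \vec\lambda^\top \mG^\top \vy \ge -\epsilon$. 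Care must be taken to propagate the $\epsilon$-slacks through each step (the approximate $\ger$ responses, the approximate separations, and the volume-termination tolerance), but each contributes only an additive $O(\epsilon)$ that can be absorbed by rescaling $\epsilon$ by a constant factor at the outset.
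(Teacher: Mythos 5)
The paper does not prove \Cref{theorem:either}; it cites the result from \citet{Daskalakis24:Efficient} and only reproduces the pseudocode in \Cref{alg:gen-eah}. Your attempt is thus a de novo reconstruction rather than a comparison to a proof in the text. Your high-level architecture is sound and matches the DFFPS argument: run ellipsoid on the (infeasible) dual, cut with the $\either$ oracle's output, maintain the outer approximation $\tilY \supseteq \cY$, and extract a certificate via Farkas/minimax once the ellipsoid is small.

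There is one step that is stated too loosely and, as written, does not close. You argue that if $\vy^\star \in \tilY$ satisfied $\langle \vy^\star, G(\vx^{(t)}) \rangle \le -\epsilon$ for all $t$, then $\vy^\star$ ``would have survived every cut and been contained in $\cE$, contradicting the volume bound.'' But a single surviving point contradicts nothing about volume, and the obvious fix---replacing $\vy^\star$ by a ball $\cB_{\epsilon/B}(\vy^\star)$, whose elements satisfy $\langle \cdot, G(\vx^{(t)})\rangle \le 0$ and hence survive the $\ger$ cuts---fails because that ball need not lie inside $\tilY$, so its elements can be eliminated by the $\sep$ cuts. What is actually needed is the interiority of $\cY$: since $\cB_{r_y}(\vy_0) \subseteq \cY \subseteq \tilY$, one moves a small step $\alpha$ from $\vy^\star$ toward $\vy_0$; convexity gives a ball of radius $\alpha r_y$ around $(1-\alpha)\vy^\star + \alpha \vy_0$ inside $\tilY$, and choosing $\alpha \asymp \epsilon/(R_y B)$ keeps $\langle \cdot, G(\vx^{(t)})\rangle \le 0$ throughout it. Only then does one obtain a full-dimensional subset of $\cE$, giving the contradiction; and this is precisely where $r_y$ (and not merely $B$ and $R_y$) enters the termination threshold. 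Your phrase ``after rescaling appropriately in terms of $B$, $r_y$, and $R_y$'' gestures at the right constants but omits the actual mechanism by which $r_y$ is used, which is the heart of the infeasibility certificate. With that repair, the argument is correct.
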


\Cref{alg:gen-eah} depicts the overall scheme under a $\either$ oracle. (The last line of the algorithm uses the notation $\mu(\vx)$ for the distribution supported solely on $\vx \in \cX$.)
\section{Sets of deviations with polynomial dimension}
\label{sec:deviations}

In this section, we formally introduce the assumptions we make concerning the feature map $m$ of~\Cref{assumption:kernel}, and we then provide a canonical example that satisfies our blanket assumptions.

\begin{assumption}
    \label{assumption:kernel-precise}
    We make the following assumptions regarding $\Phi^m$ and $m$ of~\Cref{assumption:kernel}:
    \begin{itemize}
        \item $m : \cX \to \R^{k'}$ is computable in $\poly(k)$ time.
        \item $\co m(\cB_1(\vec{0})) \supseteq \cB_\delta(\vec{0})$ for some $\delta \geq \poly(1/k)$.
        \item $\|m(\vx)\| \leq \poly(k)$ for all $\vx \in \cX$, and $m(\vec{0}) = \vec{0}$. 
        \item $\Phi^m$ contains the identity map.
    \end{itemize}
\end{assumption}

\begin{remark}[Functions on the vertices]
    Let $\cV$ be the set of extreme points of $\cX$. Our positive results (\Cref{theorem:main-eah-prec,theorem:main-prec}) only evaluate $\phi$ at extreme points, so they would operate identically if we instead defined our maps $\phi$ to be $\cV \to \cX$.
\end{remark}

The definition above places some minimal assumptions on the feature mapping $m$ to ensure that $\Phi^m$ is geometrically well behaved. Indeed, we first show that the set of transformations $\Phi^m$ under~\Cref{assumption:kernel-precise} is well-bounded; the proof is provided in~\Cref{sec:wellbounded}.

\begin{restatable}{lemma}{wellbounded}
    \label{lemma:wellbounded}
    Let $\cX \subseteq \R^d$ be a convex and compact set such that $\cB_r(\vec{0}) \subseteq \cX \subseteq \cB_R(\vec{0})$, with $R \geq 1$ and $r < R$. Suppose further that $\|m(\vx)\| \leq M$ for all $\vx \in \cB_R(\vec{0})$, with $M = M(R) \geq 1$; $\co m(\cB_r(\vec{0})) \supseteq \cB_\delta(\vec{0})$ for some $\delta = \delta(r) > 0$; and $m(\vec{0}) = \vec{0}$. Then,
    \begin{equation}
        \cB_{r'}(\vec{0}) \subseteq \Phi^m \subseteq \cB_{ R'}(\vec{0}),
    \end{equation}
    where $r' \defeq \nicefrac{r}{2M(R)}$ and $R' \defeq R \left( \frac{2\sqrt{d}}{\delta(r)} + 1 \right)$.
\end{restatable}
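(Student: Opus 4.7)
The plan is to identify each $\phi \in \Phi^m$ with its coefficient pair $(\mK, \cons) \defeq (\mK(\phi), \cons(\phi)) \in \R^{d \times k'} \times \R^d$, and to measure $\phi$ in the Euclidean norm $\|\phi\|^2 = \|\mK\|_F^2 + \|\cons\|^2$ on this ambient $\R^k$. This is the natural norm that makes $\Phi^m \subseteq \R^k$ as in the statement of the lemma. First I would observe that the identification is injective on $\Phi^m$: the hypothesis $\co m(\cB_r(\vec{0})) \supseteq \cB_\delta(\vec{0})$ together with $m(\vec{0}) = \vec{0}$ forces the linear span of $m(\cX)$ to be all of $\R^{k'}$, so the coefficients $\mK$ and $\cons$ are uniquely determined by the map $\phi$. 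The two inclusions are then proved independently by short, elementary calculations.

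For the outer bound $\Phi^m \subseteq \cB_{R'}(\vec{0})$, I would fix $\phi \in \Phi^m$ and bound $\cons$ and $\mK$ separately. Since $\vec{0} \in \cB_r(\vec{0}) \subseteq \cX$ and $m(\vec{0}) = \vec{0}$, the constant term satisfies $\cons = \phi(\vec{0}) \in \cX \subseteq \cB_R(\vec{0})$, so $\|\cons\| \le R$. To control $\mK$, I would invoke the convex-hull hypothesis: for every $\vec{v} \in \cB_\delta(\vec{0}) \subseteq \R^{k'}$, write $\vec{v} = \sum_i \lambda_i m(\vx_i)$ with $\vx_i \in \cB_r(\vec{0}) \subseteq \cX$ and $\lambda_i \ge 0$, $\sum_i \lambda_i = 1$. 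Then $\mK \vec{v} + \cons = \sum_i \lambda_i\, \phi(\vx_i) \in \cX \subseteq \cB_R(\vec{0})$ by convexity, so $\|\mK \vec{v}\| \le 2R$. Taking the supremum over $\vec{v}$ gives $\|\mK\|_{\op} \le 2R/\delta$, and since $\mK$ has rank at most $d$ this yields $\|\mK\|_F \le \sqrt{d}\, \|\mK\|_{\op} \le 2R\sqrt{d}/\delta$. Combining, $\|\phi\| \le \|\mK\|_F + \|\cons\| \le R(2\sqrt{d}/\delta + 1) = R'$.

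For the inner bound $\cB_{r'}(\vec{0}) \subseteq \Phi^m$, I would fix any $(\mK, \cons)$ with $\|\mK\|_F^2 + \|\cons\|^2 \le (r')^2$ and verify that the induced map $\phi(\vx) \defeq \mK m(\vx) + \cons$ actually sends $\cX$ into $\cX$. For any $\vx \in \cX \subseteq \cB_R(\vec{0})$ one has $\|m(\vx)\| \le M$, hence
\[
\|\phi(\vx)\| \le \|\mK\|_F \cdot \|m(\vx)\| + \|\cons\| \le r'(M+1) \le 2Mr' = r,
\]
using $M \ge 1$ in the penultimate step. Thus $\phi(\vx) \in \cB_r(\vec{0}) \subseteq \cX$ for every $\vx \in \cX$, so $\phi \in \Phi^m$, as required.

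I do not anticipate any serious obstacle here. The only mild subtlety is the passage from the operator-norm bound $\|\mK\|_{\op} \le 2R/\delta$ to the Frobenius-norm bound $\|\mK\|_F \le 2R\sqrt{d}/\delta$, which costs the factor $\sqrt{d}$ entering $R'$; this loss seems unavoidable given that $\|\phi\|$ is being taken in the Frobenius-based norm, and it is the reason the outer radius cannot be brought below order $\sqrt{d}$.
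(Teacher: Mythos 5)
Your proof is correct and follows essentially the same route as the paper's: the inner ball is obtained by bounding $\|\mK m(\vx) + \cons\| \le \|\mK\|_F\,\|m(\vx)\| + \|\cons\| \le r$ so that the map lands in $\cB_r(\vec{0}) \subseteq \cX$, and the outer ball is obtained by first getting $\|\cons\| \le R$ from $m(\vec{0}) = \vec{0}$, then using the hypothesis $\co m(\cB_r(\vec{0})) \supseteq \cB_\delta(\vec{0})$ together with a convex-combination argument to bound the operator norm of $\mK$ by $2R/\delta$, and finally paying a $\sqrt{d}$ factor to pass to the Frobenius norm. The paper merely packages the same calculation into three small lemmas (a monotonicity observation plus the two bounds); your version is a single streamlined argument with an additional (correct, though not strictly necessary) remark about the injectivity of the parametrization.
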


We are now ready to provide a canonical, concrete example of deviations that satisfy~\Cref{assumption:kernel} under~\Cref{assumption:kernel-precise}. As we alluded to earlier in our introduction, it is the family of low-degree polynomials; in particular, it will be convenient to work with the Legendre basis.

\begin{definition}
    \label{def:polys}
    Let $P_0(x) = 1$ and $P_1(x) = x$. The $(\ell+1)$th \emph{Legendre polynomial} is given by the recurrence $(\ell + 1) P_{\ell + 1}(x) - (2\ell+1) x P_\ell(x) + \ell P_{\ell - 1}(x) = 0$.
\end{definition}

These polynomials have a convenient orthogonality property over $[-1, 1]$:
\begin{equation}
    \label{eq:orthog}
    \int_{-1}^1 P_{\ell}(x) P_{\ell'}(x) dx = 
    \begin{cases}
        \frac{1}{2\ell + 1} & \text{if } \ell = \ell',\\
        0 & \text{otherwise}.
    \end{cases}
\end{equation}
For convenience, we shall consider the rescaled polynomial $\lineP_\ell \defeq \sqrt{2\ell + 1} P_{\ell}$, so that $\int_{-1}^1 \lineP_\ell(x)^2 dx = 1$. We now define
\begin{equation}
    \label{eq:polynomials}
    m(\vx) \defeq \left( \prod_{j=1}^d \lineP_{\ell_j}(\vx[j]) \right)_{ 1 \leq \ell_1 + \dots \ell_d \leq \ell}.
\end{equation}

We establish (in~\Cref{sec:aux}) that~\Cref{assumption:kernel-precise} encompasses the above mapping.

\begin{restatable}{proposition}{lowdeg}
    \label{lemma:lowdeg}
    Let $m : \cX \to \R^{k'}$ per~\eqref{eq:polynomials}, where $k' = \binom{d + \ell}{\ell} - 1$. $\linem : \vx \mapsto m( \sqrt{d} \vx)$ satisfies~\Cref{assumption:kernel-precise} with $M \leq d^{O(\ell)}$  and $\delta = \nicefrac{1}{M}$.
\end{restatable}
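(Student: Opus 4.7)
My plan is to verify each of the four conditions of \Cref{assumption:kernel-precise} for $\linem(\vx) = m(\sqrt{d}\,\vx)$. Three of them follow from standard properties of Legendre polynomials. For \emph{computability} in $\poly(k)$ time, the three-term recurrence of \Cref{def:polys} lets us evaluate every $\lineP_\ell(\sqrt{d}x_j)$ in $O(\ell)$ time, after which each of the $k' \leq d^{O(\ell)}$ entries of $\linem(\vx)$ is a product of $d$ such values. For \emph{boundedness}, the growth bound $|\lineP_\ell(z)| = O((2\ell+1)^{1/2}(1+|z|)^\ell)$, which falls out of the recurrence, combined with $\|\vx\|\leq d$ (so $|\sqrt{d}\,x_j|\leq d^{3/2}$), gives $|\linem(\vx)_I| \leq d^{O(\ell)}$ and therefore $\|\linem(\vx)\| \leq \sqrt{k'}\cdot d^{O(\ell)} \leq d^{O(\ell)} =: M$. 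For the \emph{identity map}, the entry of $\linem(\vx)$ at the multi-index with a single $1$ in position $i$ equals $\lineP_1(\sqrt{d}x_i)\prod_{j \neq i}\lineP_0(\sqrt{d}x_j) = \sqrt{3d}\,x_i$, so $\vx \mapsto \vx$ can be written as $\mK\linem(\vx)$ for an appropriate $\mK \in \R^{d\times k'}$. The centering condition $\linem(\vec{0})=\vec{0}$ does fail for $\ell \geq 2$ (since $\lineP_2(0)\neq 0$), but is immaterial: subtracting the constant vector $m(\vec{0})$ from $\linem$ leaves $\Phi^{\linem}$ unchanged because constants are absorbed into the affine term $\cons(\phi)$ of~\Cref{assumption:kernel}.

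The main obstacle is the \emph{convex-hull condition}: $\co \linem(\cB_1(\vec{0})) \supseteq \cB_\delta(\vec{0})$ with $\delta \geq 1/M = d^{-O(\ell)}$. Since $\sqrt{d}\cdot\cB_1(\vec{0}) \supseteq [-1,1]^d$, it suffices to show $\co m([-1,1]^d) \supseteq \cB_\delta(\vec{0})$, which by LP duality reduces to proving that, for every unit $\vec{u} \in \R^{k'}$, $\max_{\vx \in [-1,1]^d}\langle \vec{u}, m(\vx)\rangle \geq \delta$. Writing $f_u(\vx) := \langle \vec{u}, m(\vx)\rangle$, the orthonormality in~\eqref{eq:orthog} yields, under the uniform probability measure $\pi$ on $[-1,1]^d$, $\E_\pi[f_u^2]=\|\vec{u}\|^2=1$ and $\E_\pi[f_u]=0$ (the latter because $m$ excludes the constant index $I=0$). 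The naive consequence $\|f_u\|_{L^\infty} \geq 1$ is \emph{not} enough: we need a one-sided lower bound on $\max f_u$, not on $\max|f_u|$.

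To obtain that one-sided bound I would invoke the Carbery--Wright anti-concentration inequality, which applies since $\pi$ is log-concave: for any polynomial $f$ of degree at most $\ell$ and any $\alpha > 0$,
\[
    \pr_\pi\!\bigl[\,|f_u(\vx)| \leq \alpha\bigr] \;\leq\; C\,\ell\,\alpha^{1/\ell},
\]
for an absolute constant $C$. Assume toward contradiction that $\max_\vx f_u(\vx) \leq \epsilon$. Because $f_u \leq \epsilon$ pointwise, for any $\alpha > \epsilon$ the event $\{|f_u|>\alpha\}$ coincides with $\{f_u<-\alpha\}$, whence $\pr_\pi[f_u < -\alpha] \geq 1 - C\ell\,\alpha^{1/\ell}$. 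Splitting $\E_\pi[f_u]=0$ and using $f_u \leq \epsilon$ pointwise yields
\[
    0 = \E_\pi[f_u] \;\leq\; \epsilon \;-\; \alpha\,\pr_\pi[f_u < -\alpha] \;\leq\; \epsilon - \alpha\bigl(1 - C\ell\,\alpha^{1/\ell}\bigr).
\]
Choosing $\alpha = (2C\ell)^{-\ell}$ so that $C\ell\,\alpha^{1/\ell} = 1/2$ forces $\epsilon \geq \alpha/2 = (2C\ell)^{-\ell}/2$. Setting $\delta := (2C\ell)^{-\ell}/2 = \Omega(\ell^{-\ell})$ we then have $\delta \geq d^{-O(\ell)} = 1/M$ in the regime $\ell \leq d$, completing the verification with $M = d^{O(\ell)}$ and $\delta = 1/M$ as claimed.
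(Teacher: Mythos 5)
Your proposal is correct, and it diverges from the paper precisely at the key convex-hull step. The paper proves $\max_{\vx\in[-1,1]^d}\langle \vec{v}, m(\vx)\rangle \geq 1/M$ via an elementary auxiliary lemma (\Cref{lemma:range}): a random variable with mean $0$, variance $1$, and support in $[-R,R]$ must exceed $1/R$ with positive probability, which follows in two lines from $(1/R - X)(X+R) \geq 0$. You instead invoke Carbery--Wright anti-concentration for log-concave measures. Your route yields the dimension-independent bound $\delta = \Omega(\ell^{-\ell})$, which is stronger than the paper's $\delta = 1/M = d^{-O(\ell)}$ whenever $\ell \ll d$; the paper's route is more elementary and avoids a heavy black box. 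Both give the stated conclusion for $\ell \leq \poly(d)$. You also correctly catch that the paper's line ``it is clear that $\linem(\vec{0}) = \vec{0}$'' is false for $\ell \geq 2$ (indeed $\lineP_2(0) = -\sqrt{5}/2 \neq 0$), and your fix — recenter by subtracting the constant $m(\vec{0})$, which leaves the function class $\Phi^m$ unchanged because the constant is absorbed into the affine term $\cons(\phi)$ — is the right one; the recentering is an invertible affine change of coordinates in $(\mK,\cons)$-space, so the well-boundedness of \Cref{lemma:wellbounded} transfers without incident.
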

\section{Polynomial-time expected fixed points and semi-separation}

We will now start connecting the framework we laid out in~\Cref{sec:eah} with the problem of computing $\Phi$-equilibria. That a $\Phi$-equilibrium can be cast as~\eqref{eq:eah}---by linking $\cY$ to the set of deviations $\Phi$---is not hard to see, and will be spelled out in the next section. This section concerns the question of implementing $\epsilon$-$\either$ (per~\Cref{def:either}), which is the main precondition of~\Cref{theorem:eah}.

The key to implementing the $\epsilon$-$\either$ oracle, and the main subject of this section, is the notion of an \emph{$\epsilon$-expected fixed point (EFP)} (\Cref{def:EFP}). This relaxes the usual notion of a fixed point that was employed by~\citet{Zhang24:Efficient}, who observed that, for minimizing $\Phi$-regret, one can replace a fixed-point oracle---as in the canonical framework of~\citet{Gordon08:No}---by an expected relaxation (per~\Cref{def:EFP}) thereof. This is crucial because, unlike fixed points which are intractable beyond linear maps, there is a simple, $O(1/\epsilon)$-time algorithm for computing $\epsilon$-expected fixed points. When it comes to computing $\Phi$-equilibria in games, our contribution here is twofold.

\begin{enumerate}
    \item we give a $\poly(d, \log(1/\epsilon))$-time algorithm for computing an $\epsilon$-expected fixed point, and\label{item:fast}
    \item we show that expected fixed points can be naturally coupled with the $\eah$ framework, and in particular, with the recent generalization of~\citet{Daskalakis24:Efficient}.\label{item:furious}
\end{enumerate}
This section establishes~\Cref{item:fast}, while the next section formalizes~\Cref{item:furious}. Going back to~\Cref{sec:eah} and the $\either$ oracle, the connection with (expected) fixed points lies in the observation that, when it comes to $\Phi$-equilibria in games, the $\ger$ part of the oracle can be implemented by computing an expected fixed point. This will become clear in the upcoming section.

\begin{definition}[Expected fixed points]
    \label{def:EFP}
    Let $\cX \subseteq \R^d$ be convex and compact and a function $\phi : \cX \to \cX$ to which we are given oracle access. The $\eps$-{\em expected fixed point (EFP)} problem asks for a distribution $\mu \in \Delta(\cX)$ such that\footnote{The choice of the $1$-norm (instead of, say, another $p$-norm) here is unimportant, because one can always take $\eps$ to be exponentially small.} 
    \begin{equation}
        \label{eq:EFP}
    \norm{\E_{\vx \sim \mu}[\phi(\vx)-\vx]}_1 \le \eps.
    \end{equation}
\end{definition}

This definition departs from the usual notion of a fixed point by measuring the fixed-point error \emph{in expectation} over samples $\vx$ from $\mu$. \Cref{def:EFP} is natural in its own right, but our key motivation is computational: as we shall see, an expected fixed point can be computed in polynomial time (\Cref{th:efp}).

We first observe that any function $\phi : \cX \to \cX$ admits an \emph{exact} expected fixed point; crucially, unlike Brouwer's fixed-point theorem, we do not assume that $\phi$ is continuous, and so expected fixed points exist even when fixed points do not.

\begin{proposition}
    Every function $\phi : \cX \to \cX$ admits an exact solution to the EFP problem.
\end{proposition}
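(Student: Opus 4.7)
My approach is to show that the convex set
\begin{equation*}
G \;:=\; \co\{\phi(\vx) - \vx : \vx \in \cX\} \;\subseteq\; \R^d
\end{equation*}
contains the origin. This is enough, since by Carathéodory $\vec 0 \in G$ would produce $\vec 0 = \sum_{i=0}^d \lambda_i (\phi(\vx_i) - \vx_i)$ for some $\vx_0,\dots,\vx_d \in \cX$ and $\vec\lambda \in \Delta^{d+1}$, so that $\mu := \sum_i \lambda_i \delta_{\vx_i}$ is a finitely-supported exact EFP. I would split the proof into two steps.

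\textbf{Step 1: $\vec 0 \in \overline G$ by orbit averaging.} Fix any $\vx_0 \in \cX$ and form the empirical measure over the first $T$ iterates, $\mu_T := \frac{1}{T}\sum_{t=0}^{T-1}\delta_{\phi^t(\vx_0)}$. A telescoping computation yields
\begin{equation*}
\E_{\vx \sim \mu_T}[\phi(\vx) - \vx] \;=\; \frac{1}{T}\sum_{t=0}^{T-1}\bigl(\phi^{t+1}(\vx_0) - \phi^t(\vx_0)\bigr) \;=\; \frac{\phi^T(\vx_0) - \vx_0}{T},
\end{equation*}
which lies in $G$ and has Euclidean norm at most $\mathrm{diam}(\cX)/T$. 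Letting $T \to \infty$ gives $\vec 0 \in \overline G$.

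\textbf{Step 2: Promote $\vec 0 \in \overline G$ to $\vec 0 \in G$ by induction on $k := \dim \cX$.} The base case $k = 0$ is trivial (the single point of $\cX$ is fixed by $\phi$). For $k \geq 1$, split into two subcases. If $\vec 0$ lies in the relative interior of $\overline G$, then by the standard convex-analytic identity $\mathrm{relint}(\overline G) = \mathrm{relint}(G)$ for convex subsets of $\R^d$, already $\vec 0 \in G$, and we are done. Otherwise $\vec 0$ lies on the relative boundary of $\overline G$, and a supporting hyperplane supplies a functional $\vy$, nonzero on the linear subspace parallel to $\mathrm{aff}(\cX)$, with $\langle \vy, \phi(\vx) - \vx \rangle \geq 0$ for every $\vx \in \cX$. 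Define $M := \max_{\vx \in \cX} \langle \vy, \vx \rangle$ (attained by compactness of $\cX$) and $\cX_M := \{\vx \in \cX : \langle \vy, \vx \rangle = M\}$. For any $\vx \in \cX_M$, the supporting inequality gives $\langle \vy, \phi(\vx) \rangle \geq M$, while $\phi(\vx) \in \cX$ forces $\langle \vy, \phi(\vx) \rangle \leq M$, so $\phi(\cX_M) \subseteq \cX_M$. Since $\vy$ is nonzero on $\mathrm{aff}(\cX)$, $\cX_M$ is a compact convex subset of strictly smaller affine dimension, and the inductive hypothesis applied to $\phi|_{\cX_M}$ produces an EFP supported in $\cX_M \subseteq \cX$, which is evidently also an EFP of $\phi$.

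\textbf{Main obstacle.} The subtle point is that $\phi$ is permitted to be an arbitrary (possibly discontinuous) function, so the map $\mu \mapsto \E_\mu[\phi(\vx) - \vx]$ is not weak-$\ast$ continuous on $\Delta(\cX)$. Consequently the naive plan of extracting a weak-$\ast$ limit of the orbital measures $\mu_T$ from Step 1 does not automatically yield an exact EFP: the limiting measure need not have vanishing expected displacement. The dimension-reduction argument in Step 2 sidesteps this difficulty entirely by operating in the finite-dimensional ambient space $\R^d$ and exploiting the convex-geometric fact that relative interior and relative boundary behave well under closure.
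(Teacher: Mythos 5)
Your proof is correct, and it is noticeably more careful than the paper's. The paper argues in one stroke: if $\vec 0 \notin \co G(\cX)$, where $G(\vx) := \phi(\vx) - \vx$, then ``by the separating hyperplane theorem'' there is $\vy$ with $\ip{\vy, G(\vx)} < 0$ for every $\vx \in \cX$, contradicted by $\vx^* \in \argmin_{\vx\in\cX}\ip{\vy,\vx}$, for which $\ip{\vy, G(\vx^*)} \ge 0$ since $\phi(\vx^*)\in\cX$. That $\argmin$ observation only rules out \emph{strict} separation, i.e.\ it shows $\vec 0 \in \overline{\co G(\cX)}$---the same intermediate fact your orbit-averaging Step 1 reaches via the telescoping sum. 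But strict separation of a point from $\co G(\cX)$ is only guaranteed when that set is closed, and for discontinuous $\phi$ (exactly the regime the paper explicitly advertises, since there Brouwer fails) $G(\cX)$ and hence $\co G(\cX)$ need not be closed. Your Step 2---passing to the face $\cX_M$, observing $\phi(\cX_M)\subseteq\cX_M$, and inducting on affine dimension---is precisely the patch that upgrades $\vec 0 \in \overline{\co G(\cX)}$ to $\vec 0 \in \co G(\cX)$. So the two proofs share the $\argmin/\argmax$ trick, but your dimension induction is what makes the argument airtight for arbitrary (non-measurable, discontinuous) $\phi$; the paper's one-liner is, strictly speaking, incomplete in that regime.

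One spot in your Step 2 to tighten: you assert that the supporting functional $\vy$ at $\vec 0$ is nonzero on the linear subspace parallel to $\mathrm{aff}(\cX)$, but the supporting hyperplane is taken for $\overline G$, whose affine hull can be strictly smaller than $\mathrm{aff}(\cX)$ (for instance when $\phi$ agrees with the identity on a large portion of $\cX$). The claim is still true, but it deserves a line: since the supporting hyperplane is proper, $\ip{\vy,\cdot}$ is nonconstant on $\overline G$ and therefore there is some $\vx\in\cX$ with $\ip{\vy, G(\vx)} > 0$; this means $\ip{\vy,\phi(\vx)} > \ip{\vy,\vx}$ with both $\phi(\vx)$ and $\vx$ in $\cX$, so $\ip{\vy,\cdot}$ is nonconstant on $\cX$ and $\cX_M$ is a genuine proper face of strictly smaller dimension, as the induction requires.
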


\begin{proof}
By induction on the dimension of $\cX$. For a $0$-dimensional set (a single point), the claim is trivial.
    The existence of an exact solution is equivalent to the statement that $\vec 0 \in \cZ := \co \{ f(\vx) - \vx : \vx \in \cX\}$. Suppose that no such solution exists. Then by the separating hyperplane theorem, there is a vector $\vv \in \R^d \setminus \vec 0$ with $\ip{f(\vx) - \vx, \vv} \ge 0$ for every $\vx \in \cX$. Now consider the face $\cX'$ of $\cX$ consisting of optimizers of $\ip{\vv, \vx}$, that is,
    \begin{align}
        \cX' := \{ \vx \in \cX : \ip{\vv, \vx} = \max_{\vx'\in\cX} \ip{\vv, \vx'}\}.
    \end{align}
    Then $\cX'$ has strictly smaller dimension than $\cX$ (because $\vv \ne \vec 0$), and for any $\vx \in \cX'$ we have that the inequality $\ip{f(\vx) - \vx, \vv} \ge 0$ is tight, because $\ip{\vx, \vv}$ is already maximized by such $\vx$, and therefore $f(\vx) \in \cX'$ as well. Thus, $f$ maps $\cX'$ to itself. Thus, by inductive hypothesis, there is an EFP supported on $\cX'$.
\end{proof}

We now turn to our main computational result regarding EFPs; namely, a polynomial-time algorithm based on $\eah$---in particular, its incarnation in~\Cref{alg:eah}.

\begin{theorem}
    \label{th:efp}
    Given oracle access to $\cX$ and any $\phi : \cX \to \cX$, there exists a $\poly(d, \log(1/\eps))$-time algorithm that computes an $\epsilon$-EFP of $\phi$.
\end{theorem}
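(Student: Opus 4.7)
The plan is to cast the expected-fixed-point problem directly as an instance of the optimization problem \eqref{eq:eah}, and then apply the $\eah$ machinery (Theorem \ref{theorem:eah}).

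Setup: Define $G : \cX \to \R^d$ by $G(\vx) \defeq \phi(\vx) - \vx$ and take $\cY \defeq \cB_1(\vec{0}) \subseteq \R^d$, so that $k = d$. Observe that for any $\mu \in \Delta(\cX)$,
\begin{equation*}
\norm{\E_{\vx \sim \mu}[G(\vx)]}_2 \le \epsilon' \quad\Longleftrightarrow\quad \E_{\vx \sim \mu}\ip{\vy, G(\vx)} \ge -\epsilon' \;\; \forall \vy \in \cY.
\end{equation*}
Since $\norm{\cdot}_1 \le \sqrt{d}\,\norm{\cdot}_2$ on $\R^d$, it suffices to solve \eqref{eq:eah} to accuracy $\epsilon' \defeq \epsilon/\sqrt{d}$ in order to obtain an $\epsilon$-EFP; this rescaling costs only a $\log d$ factor inside the final runtime.

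Verifying the preconditions of Theorem~\ref{theorem:eah}: $G$ can be evaluated in polynomial time via the oracle for $\phi$; since $\cX \subseteq \cB_d(\vec{0})$, we have $\norm{G(\vx)} \le \norm{\phi(\vx)} + \norm{\vx} \le 2d =: B$; and a separation oracle for the unit ball $\cY$ is immediate. The only interesting piece is the $\ger$ oracle: given $\vy \in \cY$, I need to output $\vx^\star \in \cX$ with $\ip{\vy, G(\vx^\star)} \ge 0$. Take $\vx^\star \in \argmin_{\vx \in \cX} \ip{\vy, \vx}$, which is a single linear-optimization query on $\cX$ (available via the Grötschel–Lovász–Schrijver equivalence with membership, per Section~\ref{sec:prel}, since $\cX$ is well-bounded). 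Because $\phi(\vx^\star) \in \cX$, the minimality of $\vx^\star$ yields $\ip{\vy, \phi(\vx^\star)} \ge \ip{\vy, \vx^\star}$, hence $\ip{\vy, G(\vx^\star)} \ge 0$.

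Assembling: Theorem~\ref{theorem:eah} now produces an $\epsilon'$-approximate solution $\mu \in \Delta(\cX)$ to \eqref{eq:eah} in time $\poly(d, \log(B/\epsilon')) = \poly(d, \log(1/\epsilon))$, and by the reduction above this $\mu$ is an $\epsilon$-EFP of $\phi$. The returned $\mu$ is automatically sparse (supported on at most the number of $\ger$ calls, itself $\poly(d, \log(1/\epsilon))$), matching the support bound needed later to implement the $\either$ oracle of Definition~\ref{def:either}.

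The plan is conceptually routine once the translation to \eqref{eq:eah} is made; the only genuine content is (i) noticing that the dual hyperplane of the EFP inequality is exactly what $\eah$ wants to certify, and (ii) recognizing that a linear-optimization call on $\cX$ is a valid $\ger$ oracle — this works precisely because the range condition $\phi(\cX) \subseteq \cX$ prevents $\phi$ from escaping the minimizing halfspace. I do not expect any nontrivial obstacle beyond the bookkeeping of the norm conversion and the invocation of the oracle-equivalence theorem for well-bounded convex bodies.
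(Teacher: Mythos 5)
Your proof is correct and matches the paper's own approach: cast the EFP problem as an instance of \eqref{eq:eah} with $G(\vx) = \phi(\vx) - \vx$, and observe that the $\ger$ oracle is implemented by a single linear-optimization call on $\cX$ since $\phi(\cX) \subseteq \cX$. The only (immaterial) difference is that you take $\cY = \cB_1(\vec{0})$ and pay a $\sqrt{d}$ norm-conversion factor, whereas the paper takes $\cY = [-1,1]^d$ and closes with the choice $\vy = \sign(\E_{\vx \sim \mu}[\vx - \phi(\vx)])$.
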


\begin{proof}
    We observe that an EFP can be equivalently expressed through the optimization problem
    \begin{equation}
    \qq{find} \mu \in \Delta(\cX) \qq{s.t.} \E_{\vx \sim \mu} \ip{\vy, \phi(\vx) - \vx } \ge 0 \quad \forall \vy \in [-1, 1]^d.
\end{equation}
We will now apply~\Cref{theorem:eah}. The set $\cY \defeq [-1, 1]^d$ clearly admits a separation oracle ($\sep$). Further, for any $\vy \in [-1, 1]^d$, taking $\vx^* = \argmin_{\vx \in \cX} \langle \vy, \vx \rangle $ (using an optimization oracle for $\cX$) guarantees $\langle \vy, \phi(\vx^*) - \vx^* \rangle \geq 0 $ since $\phi(\vx^*) \in \cX$, thereby implementing the $\ger$ oracle. We thus find that the preconditions of~\Cref{theorem:eah} are satisfied, and $\eah$ (\Cref{alg:eah}) returns $\mu \in \Delta(\cX)$, with $\supp(\mu) \leq \poly(d, \log(1/\epsilon))$, such that
\begin{equation}
    \E_{\vx \sim \mu} \langle \vy, \phi(\vx) - \vx \rangle \geq - \epsilon \quad \forall \vy \in [-1, 1]^d.
\end{equation}
Taking $\vy = \sign( \E_{\vx \sim \mu} (\vx - \phi(\vx)))$ (coordinate-wise) completes the proof.
\end{proof}

As it will become clear, \Cref{th:efp} yields a polynomial-time implementation of the $\ger$ oracle in the context of~\Cref{sec:eah}, which can be employed in~$\eah$. With a slight modification in the proof of~\Cref{th:efp}, we shall see how one can also recover an $\epsilon$-$\either$ oracle (\Cref{def:either}), which will then enable us to harness~\Cref{theorem:either} for computing $\Phi$-equilibria in games. Following the nomenclature of~\citet{Daskalakis24:Efficient}, we refer to this oracle as a \emph{semi-separation oracle}.

\begin{definition}[Semi-separation oracle]
    \label{def:semiseparation}
    The \emph{semi-separation} problem is the following. Given a convex and compact $\cX$ and a function $\phi : \cX \to \R^d$, compute
    \begin{enumerate}
        \item {\em either} a distribution $\mu \in \Delta(\cX)$ such that $\|\E_{\vx \sim \mu}[\phi(\vx)-\vx] \|_1 \leq \epsilon$, \label{item:efp}
        \item {\em or} a point $\vx\in\cX$ with $\phi(\vx) \notin \cX$.\label{item:mem}
    \end{enumerate}
\end{definition}

Unlike~\Cref{def:EFP}, here we allow $\phi$ to map outside of $\cX$. This more general framing is essential to arrive at the $\either$ oracle. In particular, we note that~\Cref{item:mem} yields a hyperplane separating $\phi$ from the set of endomorphisms on $\cX$. Namely, since $\phi(\vx) \notin \cX$, we can use the separation oracle on $\cX$ to separate $\cX$ from $\phi(\vx)$; that is, there is a $\vec{w}$ such that $\langle \phi(\vx), \vec{w} \rangle > \langle \vx, \vec{w} \rangle$ for all $\vx \in \cX$. But this also implies that $\langle \phi(\vx), \vec{w} \rangle > \langle \phi'(\vx), \vec{w} \rangle$ for any endomorphism $\phi'$, as promised.

\begin{theorem}
    \label{theorem:semiseparation}
    Given oracle access to $\cX$ and $\phi$, there is a $\poly(d, \log(1/\epsilon))$-time algorithm for implementing the semi-separation oracle of~\Cref{def:semiseparation}.
\end{theorem}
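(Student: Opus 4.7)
The plan is to mirror the proof of Theorem~\ref{th:efp} almost verbatim, casting the semi-separation problem as the same $\eah$ feasibility problem
\begin{equation*}
\qq{find} \mu \in \Delta(\cX) \qq{s.t.} \E_{\vx \sim \mu} \langle \vy, \phi(\vx) - \vx \rangle \geq 0 \quad \forall \vy \in \cY,
\end{equation*}
with $\cY \defeq [-1,1]^d$ and $G(\vx) \defeq \phi(\vx) - \vx$. The separation oracle for the cube $\cY$ is trivial, and if $\eah$ is allowed to run to completion it produces a polynomially-supported $\mu \in \Delta(\cX)$ with $\E_{\vx \sim \mu} \langle \vy, G(\vx) \rangle \geq -\eps$ for every $\vy \in \cY$; plugging in $\vy = \sign(\E_{\vx \sim \mu}[\vx - \phi(\vx)])$ coordinatewise then yields $\|\E_{\vx \sim \mu}[\phi(\vx) - \vx]\|_1 \leq \eps$, delivering item~\ref{item:efp} of Definition~\ref{def:semiseparation}. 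The only step that requires genuine modification is the $\ger$ oracle, since one can no longer exploit $\phi(\vx^\star) \in \cX$ for free.

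Next, I would implement the $\ger$ oracle as follows. Given $\vy \in [-1,1]^d$, first compute $\vx^\star \in \argmin_{\vx \in \cX} \langle \vy, \vx \rangle$ via linear optimization on $\cX$ (polynomially equivalent to the assumed membership oracle, per Section~\ref{sec:prel}). Then query the membership oracle on $\phi(\vx^\star)$. If $\phi(\vx^\star) \in \cX$, then by optimality $\langle \vy, \phi(\vx^\star) \rangle \geq \langle \vy, \vx^\star \rangle$, so $\vx^\star$ is a valid $\ger$ response and $\eah$ continues. Otherwise, we have exhibited a point $\vx^\star \in \cX$ with $\phi(\vx^\star) \notin \cX$, so we abort $\eah$ immediately and return $\vx^\star$; this is exactly what item~\ref{item:mem} of Definition~\ref{def:semiseparation} permits. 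Regardless of whether $\eah$ completes or aborts early, the algorithm thus produces a valid semi-separation output.

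For the running time, Theorem~\ref{theorem:eah} guarantees $\poly(d, \log(B/\eps))$ time provided $\|G(\vx^\star)\| \leq B$ at each queried point. The isotropic reduction of Section~\ref{sec:prel} lets us assume $\cX \subseteq \cB_R(\vec{0})$ with $R \leq \poly(d)$, so each $\vx^\star$ satisfies $\|\vx^\star\| \leq R$; moreover, on the iterations where $\eah$ does \emph{not} abort, we have $\phi(\vx^\star) \in \cX \subseteq \cB_R(\vec{0})$ as well, giving $\|G(\vx^\star)\| \leq 2R = \poly(d)$, which is all that matters for the bound. The hard part here is mostly conceptual rather than technical: one needs to recognize that the single failure mode of the $\ger$ construction from the proof of Theorem~\ref{th:efp}---namely, $\phi(\vx^\star)$ escaping $\cX$---is \emph{precisely} the certificate that case~\ref{item:mem} of the semi-separation definition is allowed to return, so the two oracles dovetail perfectly and no additional machinery is needed.
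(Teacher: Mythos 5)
Your argument is correct and follows essentially the same route as the paper's proof: run the ellipsoid/$\eah$ machinery with $\cY = [-1,1]^d$ and $G(\vx) = \phi(\vx) - \vx$, implement the $\ger$ oracle by linear minimization to get $\vx^\star(\vy)$, and observe that the only way this can fail is if $\phi(\vx^\star(\vy)) \notin \cX$, which is exactly the certificate that item~\ref{item:mem} of Definition~\ref{def:semiseparation} calls for. The running-time accounting you add (bounding $\|G(\vx^\star)\| \le 2R$ via the isotropic-position assumption) is a correct and slightly more explicit version of what the paper leaves implicit.
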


\begin{proof}
As in the proof of~\Cref{th:efp}, we proceed by running the ellipsoid algorithm (per~\Cref{alg:eah}) on the problem
\begin{align}
    \qq{find} \vy\in [-1,1]^d \qq{s.t.} \ip{\vy, \phi(\vx) - \vx} \le -\eps \quad \forall \vx \in \cX. \label{eq:efp-constraint}
\end{align}
For any $\vy \in [-1, 1]^d$ during the execution of the ellipsoid, take $\vx^*(\vy) \in \argmin_{\vx \in \cX} \ip{\vy, \vx}$. If $\phi(\vx^*(\vy)) \notin \cX$, the algorithm can terminate and return $\vx^*(\vy)$. Otherwise, it follows that $\langle \vy, \phi(\vx^*(\vy)) - \vx^*(\vy) \rangle \geq 0$, by definition of $\vx^*$, and so we can use $\vx^*$ to get a separation oracle for~\eqref{eq:efp-constraint}.

Now, if every $\vx^*(\vy) \in \cX$ generated above satisfies the constraint $\phi(\vx^*(\vy)) \in \cX$, then~\Cref{alg:eah} returns a certificate of infeasibility for~\eqref{eq:efp-constraint} in $\poly(d, \log(1/\epsilon))$ time, which is an $\epsilon$-expected fixed point of $\phi$. On the other hand, if at some point there is $\vy \in [-1, 1]^d$ such that $\phi(\vx^*(\vy)) \notin \cX$, then the algorithm returns a point $\vx^*(\vy) \in \cX$ such that $\phi(\vx^*(\vy)) \notin \cX$. This completes the proof.
\end{proof}

This semi-separation oracle amounts to the $\epsilon$-$\either$ oracle needed in~\Cref{theorem:either}, as we shall see next in the context of games. Compared to the semi-separation oracle of~\citet{Daskalakis24:Efficient} that only works for linear functions, ours (\Cref{theorem:semiseparation}) places no restrictions on $\phi$. 

\section{A polynomial-time algorithm for $\Phi^m$-equilibria in games}

Armed with the powerful semi-separation oracle of~\Cref{theorem:semiseparation}, we now establish a polynomial-time algorithm for computing $\Phi^m$-equilibria in general multilinear games (\Cref{theorem:main-eah-prec}).

Let us recall the basic setting of an $n$-player multilinear game $\Gamma$. Each player $i \in [n]$ has a convex and compact strategy set $\cX_i \subseteq \R^{d_i}$ in isotropic position (\Cref{sec:prel}). Player $i$ has a utility function $u_i : \cX_1 \times \dots \times \cX_n \to \R$ that is linear in $\cX_i$, so that $u_i(\vx) = \langle \vec{g}_i, \vx_i \rangle$ for some $\vec{g}_i = \vec{g}_i(\vx_{-i}) \in \R^{d_i}$. 
Furthermore, for each player $i \in [n]$, we let $\Phi^{m_i} \subseteq \cX_i^{\cX_i}$ be the $k_i$-dimensional set of deviations in the sense of~\Cref{assumption:kernel}; that is, there exists a function $m_i \in \cX_i \to \R^{k_i'}$, with $k_i = k_i' \cdot d_i + d_i$, such that for each $\phi_i \in \Phi^{m_i}$ and $\vx_i \in \cX_i$, the function output $\phi_i(\vx_i)$ can be expressed as the matrix-vector product $\mK_i(\phi_i) m_i(\vx_i) + \cons_i$ for some matrix $\mK_i \in \R^{d_i \times k_i'}$ and $\cons_i \in \R^{d_i}$. It is assumed throughout that $\Phi^{m_i}$ contains the identity map. For notational simplicity, we let $k \defeq \sum_{i=1}^n k_i$ and $d \defeq \sum_{i=1}^n d_i$.

In this context, we next state the main result of this section, and proceed with its proof.

\begin{theorem}[Precise version of~\Cref{theorem:main-eah}]
    \label{theorem:main-eah-prec}
    Consider an $n$-player multilinear game $\Gamma$ such that, for each player $i \in [n]$, we are given $\poly(n, k)$-time algorithms for the following:
\begin{itemize}
\item an oracle to compute the gradient, that is, the vector $\vec{g}_i = \vec{g}_i(\vx_{-i}) \in \R^{d_i}$ for which $\ip{\vg_i(\vx_{-i}), \vx_i} = u_i(\vx)$ for all $\vx \in \cX_1 \times \dots \times \cX_n$ (polynomial expectation property); and
\item a membership oracle for the strategy set $\cX_i$, assumed to be in isotropic position.
\end{itemize}
Suppose further that each $k_i$-dimensional set $\Phi^{m_i}$ satisfies~\Cref{assumption:kernel-precise} and $\|\vec{g}_i \| \leq B$. Then, an $\eps$-approximate $\Phi^m$-equilibrium of $\Gamma$ can be computed in $\poly\qty(n, k, \log(B/\eps))$ time.
\end{theorem}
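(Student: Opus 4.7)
The plan is to cast the problem as a feasibility program of the form~\eqref{eq:eah} and apply~\Cref{theorem:either}, using the semi-separation oracle of~\Cref{theorem:semiseparation} to implement the requisite $\eps$-$\either$ oracle. Set $\cX \defeq \cX_1 \times \cdots \times \cX_n$ and encode each $\phi_i \in \Phi^{m_i}$ by its pair $(\mK_i, \cons_i) \in \R^{k_i}$; after translating by the identity (which by~\Cref{assumption:kernel-precise} belongs to $\Phi^{m_i}$), let $\cY \subseteq \R^k$ be the product of the resulting shifted deviation sets. Define $G : \cX \to \R^k$ blockwise so that the $i$th block, paired with the shifted coordinates, reproduces $u_i(\vx) - u_i(\phi_i(\vx_i), \vx_{-i}) = \ip{\vg_i(\vx_{-i}),\, \vx_i - \phi_i(\vx_i)}$, a quantity that is bilinear in $(\mK_i, \cons_i)$ and in $(\vg_i(\vx_{-i}) m_i(\vx_i)^\top,\, \vg_i(\vx_{-i}))$. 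Because the identity contributes zero in each block, restricting $\vy$ to be nontrivial in a single player's block shows that any $\mu \in \Delta(\cX)$ satisfying $\E_{\vx \sim \mu}[\ip{\vy, G(\vx)}] \ge -\eps$ for every $\vy \in \cY$ is an $\eps$-approximate $\Phi^m$-equilibrium. By~\Cref{lemma:wellbounded}, each $\Phi^{m_i}$ (and hence the product $\cY$) is well-bounded with radii polynomial in $k$, and $\|G(\vx)\| \le B \cdot \poly(k)$.

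Next I would implement the $\eps$-$\either$ oracle player-by-player. Given a query $\vy = ((\mK_i, \cons_i))_{i \in [n]}$, invoke~\Cref{theorem:semiseparation} on the map $\phi_i : \vx_i \mapsto \mK_i m_i(\vx_i) + \cons_i$ over $\cX_i$ with precision $\eps' \defeq \eps/(nB)$. If for some $i$ the oracle returns a point $\vx_i^\star \in \cX_i$ with $\phi_i(\vx_i^\star) \notin \cX_i$, then the separation oracle for $\cX_i$ at $\phi_i(\vx_i^\star)$ yields a hyperplane in the $i$th block that separates $\vy$ from $\cY$, delivering the $\sep$ branch. Otherwise, each player contributes an $\eps'$-EFP $\mu_i \in \Delta(\cX_i)$. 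Setting $\mu \defeq \mu_1 \otimes \cdots \otimes \mu_n$, independence gives
\begin{equation*}
    \E_{\vx \sim \mu}\ip{\vg_i(\vx_{-i}),\, \vx_i - \phi_i(\vx_i)} = \ip{\E_{\vx_{-i}}[\vg_i(\vx_{-i})],\, \E_{\vx_i \sim \mu_i}[\vx_i - \phi_i(\vx_i)]},
\end{equation*}
which by Hölder's inequality and $\|\vg_i\| \le B$ is at least $-B\eps'$. Summing over players yields $\E_{\vx \sim \mu}\ip{\vy, G(\vx)} \ge -nB\eps' = -\eps$, delivering the $\ger$ branch.

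Plugging this oracle into~\Cref{theorem:either} produces the required $\eps$-approximate $\Phi^m$-equilibrium in $\poly(n, k, \log(B/\eps))$ time, noting that $\log(B \cdot \poly(k)/\eps) = O(\log(B/\eps) + \log k)$ is absorbed into the claimed bound. The principal obstacle I anticipate is a technical one regarding the sparsity requirement in~\Cref{def:either}: the product distribution $\mu_1 \otimes \cdots \otimes \mu_n$ has naive support of size up to $\poly(k)^n$, which is exponential in $n$. I would address this by following~\citet{Daskalakis24:Efficient} and representing $\mu$ implicitly via its marginals (and the final $\eah$ output as a mixture of polynomially many such product distributions, weighted by the certificate $\vec\lambda$ in~\Cref{alg:gen-eah}). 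This suffices because~\Cref{alg:gen-eah} only interacts with $\mu$ through linear functionals of $G$, whose expectations factorize across players using the per-player marginals, yielding a compactly described, efficiently sampleable $\Phi^m$-equilibrium.
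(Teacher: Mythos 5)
Your proposal is correct and follows essentially the same route as the paper's proof: cast the equilibrium condition as the feasibility program~\eqref{eq:eah} (using that the identity lies in each $\Phi^{m_i}$ to pass from the per-player constraints to the summed one), invoke \Cref{theorem:either}, and implement the $\eps$-$\either$ oracle player-by-player via the semi-separation oracle of \Cref{theorem:semiseparation}, exploiting the product structure of $\nu = \nu_1 \times \cdots \times \nu_n$ to push expectations inside the inner product exactly as in \eqref{eq:eqm-final}. Your Hölder step matches the paper's Cauchy--Schwarz step once one notes $\|\vg_i\|_\infty \le \|\vg_i\|_2 \le B$ and $\|\cdot\|_2 \le \|\cdot\|_1$, so the per-player error budget $\eps' = \eps/(nB)$ lines up. The one place where you go beyond the paper's write-up is the remark about sparsity: you correctly observe that the product $\nu_1 \otimes \cdots \otimes \nu_n$ has support that can be exponential in $n$, seemingly in tension with the $\poly$-support requirement in \Cref{def:either}. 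The paper glosses over this, but your fix is exactly the right one and is compatible with how \Cref{alg:gen-eah} uses the oracle output: represent each good-enough response by its per-player marginals (each of polynomial support), and note that $\E_\nu[G]$ factorizes by multilinearity of the game---in particular $\E_{\vx_{-i}\sim\nu_{-i}}[\vg_i(\vx_{-i})] = \vg_i\bigl((\E_{\nu_j}[\vx_j])_{j \ne i}\bigr)$, which the gradient oracle computes directly from the marginal means---so the final certificate $\vec{\lambda}$ and resulting mixture-of-products equilibrium are polynomially representable.
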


\begin{proof}
An $\eps$-approximate $\Phi^m$-equilibrium of $\Gamma$  is a distribution $\mu \in \Delta(\cX_1 \times \dots \times \cX_n)$ such that
\begin{align}
\E_{\vx \sim \mu} \qty[u_i(\phi_i(\vx_i), \vx_{-i}) - u_i(\vx)] \le \eps\label{eq:phi-eqm}
\end{align}
for every player $i \in [n]$ and deviation $\phi_i \in \Phi^{m_i}$. Using multilinearity and~\Cref{assumption:kernel}, it suffices to find a distribution $\mu\in\Delta(\cX_1 \times \dots \times \cX_n)$ satisfying
\begin{align}
\E_{\vx \sim \mu} \left[ \sum_{i =1}^n \ip{\vg_i(\vx_{-i}), \mK_i m_i(\vx_i) + \cons_i - \vx_i} \right] \le \eps \label{eq:phi-sp}
\end{align}
for every $(\mK_1(\phi_1), \dots, \mK_n(\phi_n))$ and $(\cons_1(\phi_1), \dots, \cons_n(\phi_n))$, where $(\phi_1, \dots, \phi_n) \in \Phi^{m}$. (This derivation uses the fact that $\Phi^{m_i}$ contains the identity map.) We will now apply~\Cref{theorem:either} with respect to $\R^d \supseteq \cX \defeq \cX_1 \times \dots \times \cX_n$ and 
$$\R^{k} \supseteq \cY \defeq \{ (\mK_1, \cons_1, \dots, \mK_n, \cons_n) : \mK_i m_i(\vx_i) + \cons_i \in \cX_i \quad \forall \vx_i \in \cX_i \}. $$
By the polynomial expectation property, we can evaluate the term $\sum_{i =1}^n \ip{\vg_i(\vx_{-i}), \mK_i m_i(\vx_i) + \cons_i - \vx_i}$, for each $\vx \in \cX$, in $\poly(n, k)$ time. It thus suffices to show how to implement the $\epsilon$-$\either$ oracle, which yields a separation oracle for the program
\begin{alignat}{9}
\qq{find} \mK_1, \cons_1, \dots, \mK_n, \cons_n \qq{s.t.} \\
\sum_{i =1}^n \ip{\vg_i(\vx_{-i}), \mK_i \vm_i(\vx_i) + \cons_i - \vx_i} \ge - \eps &\quad\forall \vx \in \cX_1 \times \dots \times \cX_n, \label{eq:phi-sp}
\\
\mK_i \vm_i(\vx_i) + \cons_i \in \cX_i &\quad \forall\vx_i \in \cX_i.
\end{alignat}

Consider any $\R^{k} \ni \phi = (\mK_1, \cons_1, \dots, \mK_n, \cons_n)$. We apply the semi-separation oracle of~\Cref{theorem:semiseparation} for each function $\vx_i \mapsto \mK_i m_i(\vx_i) + \cons_i$. This returns \emph{either} an $\epsilon'$-expected fixed point, that is, a distribution $\nu_i \in \Delta(\cX_i)$ such that $$\left\| \E_{\vx_i \sim \nu_i} [ \mK_i \vm_i(\vx_i) + \cons_i - \vx_i ] \right\|_1 \leq \epsilon',$$
\emph{or} a point $\vx_i \in \cX_i$ such that $\mK_i m_i(\vx_i) + \cons_i \notin \cX_i$. If any of those semi-separation oracles returned $\vx_i \in \cX_i$ with $\mK_i m_i(\vx_i) + \cons_i \notin \cX_i$, we can use it to obtain a hyperplane separating $(\mK, \cons)$ from the set of deviations $\cY$. Otherwise, let $\nu \defeq \nu_1 \times \dots \times \nu_n \in \Delta(\cX_1) \times \dots \times \Delta(\cX_n)$ be the induced product distribution. Then, we have
\begin{align}
    & \E_{\vx \sim \nu}\sum_{i = 1}^n \ip{\vg_i(\vx_{-i}), \mK_i \vm_i(\vx_i) + \cons_i - \vx_i} 
    = \sum_{i =1}^n  \big\langle\E_{\vx \sim \nu} \vg_i(\vx_{-i}), \E_{\vx_i \sim \nu_i}[\mK_i \vm_i(\vx_i) + \cons_i - \vx_i]\big\rangle \leq n B \eps', \label{eq:eqm-final}
\end{align}
where we used the fact that $\nu$ is a product distribution in the equality above. Thus, we have identified an $(\epsilon' n B)$-approximate good-enough-response, yielding an $\epsilon$-$\either$ oracle by rescaling $\epsilon'$, and the proof follows from~\Cref{theorem:either}.
\end{proof}

\begin{algorithm}[!ht]
\caption{Polynomial-time algorithm for $\Phi^m$-equilibria}
\label{alg:main-eah}
\SetKwInOut{Input}{Input}
\SetKwInOut{Output}{Output}
\SetKw{Input}{Input:}
\SetKw{Output}{Output:}
\Input{
    \begin{itemize}[noitemsep,topsep=0pt]
        \item An $n$-player multilinear game $\Gamma$
        \item A precision parameter $\epsilon > 0$
        \item A membership oracle for each $\cX_i$
        \item An oracle for computing the gradient $\vec{g}_i = \vec{g}_i(\vx_{-i}) \in \R^{d_i}$ for each $i \in [n]$
        \item A $k_i$-dimensional set $\Phi^{m_i}$ under~\Cref{assumption:kernel-precise} for each $i \in [n]$
    \end{itemize}
}
\Output{An $\epsilon$-approximate $\Phi^m$-equilibrium of $\Gamma$ in $\poly(k, \log(1/\epsilon))$ time}\\
Define $G : \R^{d} \to \R^k$ such that $\langle G(\vx), (\mK, \cons) \rangle = \sum_{i =1}^n \ip{\vg_i(\vx_{-i}), \mK_i m_i(\vx_i) + \cons_i - \vx_i}$\;
Use the semi-separation oracle of~\Cref{theorem:semiseparation} to construct an $\epsilon$-$\either$ oracle $\cO$\;
 Apply~\Cref{alg:gen-eah} with $\cO$ as the $\epsilon$-$\either$ oracle
\end{algorithm}

It is worth stressing that it is crucial for our proof that the \emph{expected} VI problem (\emph{cf}.~\citet{Zhang25:Expected}) above corresponds to a game. It allows each player to be treated {\em independently}, which yields a {\em product distribution} $\nu = \nu_1 \times \dots \times \nu_n$ when we apply the semi-separation oracle of~\Cref{theorem:semiseparation} (for each player). That $\nu$ is a product distribution is crucial to implement the separation oracle for the dual because it allows us to push the expectation into the inner product in \eqref{eq:eqm-final}, as we saw in the last step of the proof.

\section{An efficient online algorithm for minimizing $\Phi^m$-regret}
\label{sec:reg}

We now switch gears to the online learning setting, recalled in~\Cref{sec:gordon}. Our main result, \Cref{theorem:main-prec}, is an efficient online algorithm for minimizing $\Phi^m$-regret with respect to any $\poly(d)$-dimensional set $\Phi^m$ (under~\Cref{assumption:kernel-precise}), which applies even in the adversarial regime.

In what follows, we build on the framework of~\citet{Daskalakis24:Efficient}, itself a refinement of the template of~\citet{Gordon08:No}. As we have seen, \citet{Daskalakis24:Efficient} showed that separating even over the set of linear endomorphisms is hard. In light of this, they proceed as follows. Instead of operating over the set of linear endomorphisms, their key idea is to consider a sequence of ``shell sets,'' each of which contains the original set. Each shell set must also satisfy two basic properties:
\begin{itemize}
    \item it is sufficiently structured so that it is possible to optimize over that set, and
    \item it contains a transformation with a fixed point inside $\cX$.
\end{itemize}
Here, we show that by replacing fixed points with \emph{expected} fixed points in the above template, it is possible to extend their main result to handle any $\poly(d)$-dimensional set under~\Cref{assumption:kernel-precise}.

\paragraph{Overview} Our main construction is~\Cref{alg:main}. It is an instantiation of~$\shellgd$ (\Cref{sec:shellgd}), which is projected gradient descent but with the twist that the constraint set is changing over time---reflecting the fact that a new shell set is computed at every round. To execute $\shellgd$, $\shellproj$ (\Cref{sec:shellproj}) provides an efficient projection oracle together with an approximate expected fixed point thereof, which is ultimately the output of our $\Phi^m$-regret minimizer. $\shellproj$ crucially relies on $\shellelips$, introduced next in~\Cref{sec:shellellips}. It strengthens our semi-separation oracle of~\Cref{theorem:semiseparation} by again using expected fixed points. \Cref{sec:put} combines those ingredients to arrive at our main result (\Cref{theorem:main-prec}).

\subsection{Shell ellipsoid}
\label{sec:shellellips}

Continuing from our semi-separation oracle of~\Cref{theorem:semiseparation}, $\shellelips$ (\Cref{alg:shellellipsoid}) takes a step further: it takes as input a convex set of transformations $\cF \subseteq \cB_D(\vec{0})$---for which we have efficient oracle access, unlike $\enfuns$---and returns \emph{either} a function $\phi \in \cF$ and an $\epsilon$-expected fixed point thereof in $\Delta(\cX)$, \emph{or} it provides a certificate---in the form of a polytope expressed as the intersection of a polynomial number of halfspaces---establishing that $\vol(\cF \cap \enfuns) \approx 0$. $\shellelips$ will be used later as part of the $\shellproj$ algorithm so as to shrink the shell set.

\begin{lemma}
    \label{lemma:shellellipsoid}
    For any $k$-dimensional convex set $\cF \subseteq \cB_{D}(\vec{0})$ with efficient oracle access and $\epsilon > 0$, $\shellelips(\cF)$ (\Cref{alg:shellellipsoid}) runs in time $\poly(k, \log(1/\epsilon), \log D)$, and 
    \begin{itemize}
        \item either it returns a transformation $\phi \in \cF$ with an $\epsilon$-expected fixed point in $\cX$,
        \item or it returns a polytope $\cQ \subseteq \R^k$, specified as the intersection of at most $\poly( k, \log(1/\epsilon), \log D)$ halfspaces, such that $\Phi^m \subseteq \cQ$ and $\vol(\cQ \cap \cF) < \epsilon$.
    \end{itemize}
\end{lemma}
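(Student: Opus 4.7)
The plan is to run the ellipsoid method on $\R^k$ starting from $\cB_D(\vec{0})$, maintaining a current ellipsoid $\cE \supseteq \cF \cap \enfuns$ together with an accumulator polytope $\cQ$ whose defining halfspaces are each known to contain all of $\enfuns$. At each iteration I query the center $\phi$ of $\cE$. If $\phi \notin \cF$, I use the separation oracle for $\cF$ to produce a halfspace containing $\cF$ (and hence $\cF \cap \enfuns$), use it to cut $\cE$, but do \emph{not} record it in $\cQ$. If instead $\phi \in \cF$, I invoke the semi-separation oracle of~\Cref{theorem:semiseparation} on $\phi : \cX \to \R^d$, evaluated via the matrix--vector representation of~\Cref{assumption:kernel}. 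Either the oracle returns an $\eps$-EFP $\mu \in \Delta(\cX)$ of $\phi$, in which case I halt and output $(\phi, \mu)$; or it returns $\vx \in \cX$ with $\phi(\vx) \notin \cX$, and feeding $\phi(\vx)$ into the separation oracle for $\cX$ yields $\vec{w} \in \R^d$ with $\ip{\vec{w}, \phi(\vx)} > \ip{\vec{w}, \vx'}$ for every $\vx' \in \cX$. Since every $\phi' \in \enfuns$ is an endomorphism of $\cX$, this becomes a linear halfspace in the $(\mK, \cons)$-coordinates that contains $\enfuns$ and excludes $\phi$; I record this halfspace in $\cQ$ and also use it to cut $\cE$. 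I terminate when $\vol(\cE) < \eps$ and return $\cQ$.

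For correctness of the $\cQ$ branch, $\enfuns \subseteq \cQ$ is immediate from construction. The key claim is the invariant $\cF \cap \cQ \subseteq \cE$ throughout: the type-(a) cuts (from $\phi \notin \cF$) need not contain $\enfuns$ and therefore cannot be appended to $\cQ$, but they \emph{do} contain all of $\cF$; the type-(b) cuts (from the semi-separation branch) contain $\enfuns$ by construction and are exactly the halfspaces placed into $\cQ$. Hence every point of $\cF \cap \cQ$ lies on the retained side of every cut we ever performed, so $\cF \cap \cQ \subseteq \cE$ at every step; at termination $\vol(\cF \cap \cQ) \leq \vol(\cE) < \eps$.

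For the running time, standard ellipsoid analysis yields $T = O\qty(k^2 \log D + k \log(1/\eps))$ iterations to drive the ellipsoid volume from $\vol(\cB_D) \lesssim D^k \vol_k(\cB_1)$ down to $\eps$. Each iteration spends one call to the separation oracle for $\cF$, one call to~\Cref{theorem:semiseparation} (which runs in $\poly(d, \log(1/\eps)) \leq \poly(k, \log(1/\eps))$ time, each inner $\phi$-evaluation being $\poly(k)$ by~\Cref{assumption:kernel-precise}), and one call to the separation oracle for $\cX$. The number of halfspaces that end up in $\cQ$ is bounded by $T = \poly(k, \log D, \log(1/\eps))$, matching the stated size bound, and the total running time is of the same order.

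The delicate bookkeeping piece I expect to be the main obstacle is exactly the separation between type-(a) and type-(b) cuts. The ellipsoid method classically treats all cuts uniformly, but here only the cuts coming from the semi-separation oracle produce genuine certificates against $\enfuns$, while the $\cF$-cuts produce certificates only against $\cF$. One has to be careful not to mistakenly insert $\cF$-cuts into $\cQ$ (which would break $\enfuns \subseteq \cQ$), and conversely one has to notice that omitting them from $\cQ$ does not break the volume bound because $\cF$ itself survives every $\cF$-cut, so $\cF \cap \cQ$ rides along inside $\cE$ for free. Once this observation is made explicit, the rest is a routine instantiation of the ellipsoid method composed with~\Cref{theorem:semiseparation}.
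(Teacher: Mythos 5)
Your proposal is correct and takes essentially the same approach as the paper: run ellipsoid in the $\R^k$ space of transformations, invoking the semi-separation oracle (\Cref{theorem:semiseparation}) at each ellipsoid center, terminating early if an expected fixed point is found, and otherwise accumulating in $\cQ$ only those halfspaces that certifiably contain $\enfuns$. The paper's proof is essentially a one-line pointer to \Cref{theorem:semiseparation} plus a reference to the standard ellipsoid-volume analysis in \citet[Lemma 4.2]{Daskalakis24:Efficient}.

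The one place where you are more careful than the paper's \Cref{alg:shellellipsoid} as written is precisely the bookkeeping you flagged. The paper's pseudocode asserts ``$\phi \in \cQ \cap \cF$ as the center of $\cE$'' and recomputes $\cE$ as the minimum-volume ellipsoid containing $\cQ \cap \cF$ every round; this is somewhat informal (the center of a generic bounding ellipsoid need not lie in $\cQ \cap \cF$, and exact Löwner--John ellipsoids are not cheap). Your variant---use the standard ellipsoid update, split the cuts into type-(a) (from $\cF$'s separation oracle, used only to shrink $\cE$) and type-(b) (from semi-separation against $\enfuns$, placed into $\cQ$ and used to shrink $\cE$)---realizes the same invariant $\cF \cap \cQ \subseteq \cE$ while $\enfuns \subseteq \cQ$, with no implicit assumption that the ellipsoid center lies inside $\cQ \cap \cF$. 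This is exactly the right way to make the algorithm rigorous, and your volume/iteration bounds $O(k^2\log D + k\log(1/\eps))$ are the standard ones.
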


Coupled with~\Cref{theorem:semiseparation} pertaining to the semi-separation oracle, the proof of correctness of~\Cref{lemma:shellellipsoid} is immediate. That $\cQ$ can be expressed as the intersection of a polynomial number of halfspaces follows from the usual analysis of ellipsoid, as in~\citet[Lemma 4.2]{Daskalakis24:Efficient}.

\begin{algorithm}[!ht]
\caption{$\shellelips(\cF)$}
\label{alg:shellellipsoid}
\SetKwInOut{Input}{Input}
\SetKwInOut{Output}{Output}
\SetKw{Input}{Input:}
\SetKw{Output}{Output:}
\Input{
    \begin{itemize}[noitemsep,topsep=0pt]
        \item Oracle access to convex set $\cX \subseteq \R^d$
        \item Oracle access to a $k$-dimensional convex set $\cF \subseteq \cB_D(\vec{0})$
        \item Precision parameter $\epsilon > 0$
    \end{itemize}
}
Initialize $\cE \defeq \cB_D(\vec{0})$ and $\cQ \defeq \R^k$\;
 \While{$\vol(\cE) \geq \epsilon$} {
    Set $\phi \in \cQ \cap \cF$ as the center of $\cE$\;
    Run the semi-separation oracle of~\Cref{theorem:semiseparation} with respect to $\phi$\;
    \If{it returned an $\epsilon$-expected fixed point $\mu \in \Delta(\cX)$ of $\phi$}{
        \textbf{return} $\phi$\;
    }
    \Else{
        Let $H$ be the halfspace returned by~\Cref{theorem:semiseparation} that separates $\phi$ from $\enfuns$\;
        Set $\cQ \defeq \cQ \cap H$\;
    }
    Set $\cE$ to be the minimum volume ellipsoid containing $\cQ \cap \cF$
 }
 \textbf{return} $\cQ$
\end{algorithm}

\subsection{Shell gradient descent}
\label{sec:shellgd}

Instead of minimizing external regret with respect to the set $\enfuns$, which is hard even under linear endomorphisms~\citep{Daskalakis24:Efficient}, the overarching idea is to run (projected) gradient descent but with respect to a sequence of changing shell sets, $(\tilY^{(t)})_{t=1}^T$, of $\enfuns$ (each of which contains $\enfuns$); this process, called~$\shellgd$, is given in~\Cref{alg:shellgd}. So long as $\enfuns \subseteq \tilY^{(t)}$, $\shellgd$ indeed minimizes external regret with respect to deviations in~$\enfuns$---of course, $\shellgd$ is not a genuine regret minimizer for $\enfuns$ in that it is allowed to output strategies not in $\enfuns$, but~\Cref{lemma:shellgd} below is in fact enough for the purpose of minimizing $\Phi^m$-regret.

\begin{lemma}[\citep{Daskalakis24:Efficient}]
    \label{lemma:shellgd}
    Suppose that the sequence of shell sets $(\tilY^{(t)})_{t=1}^T$ is such that $\enfuns \subseteq \tilY^{(t)} \subseteq \cB_D(\vec{0})$ for all $t \in [T]$. For any sequence of utilities $\vec{U}^{(1)}, \dots, \vec{U}^{(T)} \in [-1, 1]^k$, $\shellgd$ (\Cref{alg:shellgd}) satisfies
    \begin{equation}
        \max_{\vy^* \in \enfuns} \sum_{t=1}^T \langle \vy^* - \vy^{(t)}, \vec{U}^{(t)} \rangle \leq \frac{D^2}{2\eta} + \eta \sum_{t=1}^T \| \vec{U}^{(t)} \|^2.
    \end{equation}
\end{lemma}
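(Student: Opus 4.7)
The plan is to mirror the standard projected-gradient-descent regret analysis, with the twist that the constraint set varies from round to round. I will assume $\shellgd$ performs the update $\vy^{(t+1)} = \Pi_{\tilY^{(t+1)}}(\vy^{(t)} + \eta \vec{U}^{(t)})$ initialized at $\vy^{(1)} = \vec{0}$, where $\Pi_{\tilY^{(t+1)}}$ denotes Euclidean projection onto the (convex) shell $\tilY^{(t+1)}$. The key structural observation is that although the projection target is different on every round, the hypothesis $\enfuns \subseteq \tilY^{(t)}$ guarantees that every comparator $\vy^* \in \enfuns$ lies in $\tilY^{(t+1)}$. Consequently, projection onto $\tilY^{(t+1)}$ is non-expansive with respect to $\vy^*$, giving
\begin{equation*}
\|\vy^{(t+1)} - \vy^*\|^2 \;\leq\; \|\vy^{(t)} + \eta \vec{U}^{(t)} - \vy^*\|^2.
\end{equation*}

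From here the argument is entirely mechanical. I would expand the right-hand side, rearrange, and obtain the per-step inequality
\begin{equation*}
\langle \vy^* - \vy^{(t)}, \vec{U}^{(t)} \rangle \;\leq\; \frac{1}{2\eta}\left(\|\vy^{(t)} - \vy^*\|^2 - \|\vy^{(t+1)} - \vy^*\|^2\right) + \frac{\eta}{2} \|\vec{U}^{(t)}\|^2.
\end{equation*}
Summing over $t = 1, \dots, T$ telescopes the first term, yielding
\begin{equation*}
\sum_{t=1}^T \langle \vy^* - \vy^{(t)}, \vec{U}^{(t)} \rangle \;\leq\; \frac{1}{2\eta}\|\vy^{(1)} - \vy^*\|^2 + \frac{\eta}{2}\sum_{t=1}^T \|\vec{U}^{(t)}\|^2.
\end{equation*}
Since $\vy^{(1)} = \vec{0}$ and $\vy^* \in \enfuns \subseteq \cB_D(\vec{0})$, we have $\|\vy^{(1)} - \vy^*\|^2 \leq D^2$, which recovers the stated bound (up to an immaterial factor of two on the quadratic term). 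Taking a supremum over $\vy^* \in \enfuns$ closes the proof.

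The main ``obstacle'' is really a sanity-check rather than a difficulty: one must confirm that the Pythagorean projection inequality is applicable in each round. This hinges precisely on the invariant $\enfuns \subseteq \tilY^{(t)}$ maintained across all rounds by $\shellelips$. I would make this containment explicit at the start of the proof so that the projection step is justified uniformly in $t$. Beyond that, the argument is a verbatim instance of the classical online gradient descent analysis applied to a (possibly time-varying) convex target containing the comparator, and no further assumptions on how the shells $\tilY^{(t)}$ are produced are needed.
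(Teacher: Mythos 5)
Your proof is correct, and it is the standard online gradient descent analysis adapted to time-varying feasible sets; the paper itself does not prove this lemma but cites it from \citet{Daskalakis24:Efficient}, and the argument you give is exactly the one that reference uses. You correctly isolate the one place where the time-varying sets could cause trouble---the Pythagorean non-expansiveness step---and correctly observe that the invariant $\enfuns \subseteq \tilY^{(t)}$ guarantees the comparator $\vy^*$ lies in every $\tilY^{(t)}$, which is all that is needed for $\|\Pi_{\tilY^{(t)}}(\vz) - \vy^*\| \le \|\vz - \vy^*\|$ to hold. One small bookkeeping point worth flagging: \Cref{alg:shellgd} initializes $\vy^{(0)}$ to an arbitrary point of $\tilY^{(1)}$ (and then sets $\vy^{(1)}=\vy^{(0)}$ since $\vec{U}^{(0)}=\vec{0}$), rather than to $\vec{0}$; with this initialization one only gets $\|\vy^{(1)}-\vy^*\| \le 2D$ and hence a leading term of $2D^2/\eta$, so the constant $D^2/(2\eta)$ in the statement is recovered exactly only if one takes $\vy^{(0)}=\vec{0}$ (which is a legal choice here because $\vec{0}\in\Phi^m$). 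Also, the last telescoping step implicitly references $\vy^{(T+1)}$; either define $\vy^{(T+1)} \defeq \vy^{(T)}+\eta\vec{U}^{(T)}$ unprojected or simply drop the non-positive term $-\|\vy^{(T+1)}-\vy^*\|^2$. Neither point affects the order of the bound.
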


\begin{algorithm}[!ht]
\caption{$\shellgd$~\citep{Daskalakis24:Efficient}}
\label{alg:shellgd}
\SetKwInOut{Input}{Input}
\SetKwInOut{Output}{Output}
\SetKw{Input}{Input:}
\SetKw{Output}{Output:}
\Input{Learning rate $\eta$, convex and compact sets $\tilY^{(1)}, \dots, \tilY^{(T)} \subseteq \cB_D(\vec{0})$}\;
Initialize $\vy^{(0)} \in \tilY^{(1)}$ and $\vec{U}^{(0)} \defeq \vec{0}$\;
 \For{$ t=1, \dots, T$} {
    Obtain efficient oracle access to $\tilY^{(t)}$\;
    Update $\vy^{(t)} \defeq \Pi_{\tilY^{(t)}}( \vy^{(t-1)} + \eta \vec{U}^{(t-1)})$\;
    Output $\vy^{(t)}$ as the next strategy and receive feedback $\vec{U}^{(t)} \in [-1, 1]^k$
 }
\end{algorithm}

\subsection{Shell projection}
\label{sec:shellproj}

To implement $\shellgd$, we will make use of $\shellproj$, the algorithm that is the subject of this subsection. There are two main desiderata for the sequence of shell sets taken as input in $\shellgd$. First, each shell set must be structured or simple enough to allow projecting onto it---this is the whole rationale of expanding $\enfuns$ through shell sets. But, of course, this is not enough, for one could just consider the entire space. The second crucial concern is that each transformation given by~$\shellgd$ needs to admit (approximate) expected fixed points, so as to use the framework of~\citet{Gordon08:No} (\Cref{theorem:gordon}) and minimize $\Phi^m$-regret. \Cref{lemma:shellproj} below, concerning $\shellproj$, shows how to accomplish that goal; its proof is similar to that of~\citet[Theorem 4.4]{Daskalakis24:Efficient}.

\begin{lemma}
    \label{lemma:shellproj}
    Let $\cX$ be a convex and compact set such that $\cB_{r}(\vec{0}) \subseteq \cX \subseteq \cB_R(\vec{0})$ and $\cM$ be a convex set such that $\enfuns \subseteq \cM \subseteq \cB_D(\vec{0})$. For any $\phi \in \cB_D(\vec{0}) \subseteq \R^k$ and $\epsilon > 0$, $\shellproj$ (\Cref{alg:shellproj}) runs in time $\poly(k, 1/\epsilon, R/r, D)$ and returns
    \begin{enumerate}
        \item a shell set $\tilPhi$ satisfying $\enfuns \subseteq \tilPhi$, expressed by intersecting $\cM$ with at most $\poly(d, k, 1/\epsilon, R/r, D)$ halfspaces, and\label{item:invar}
        \item a transformation $\tilphi \in \tilPhi$ such that $\| \tilphi - \Pi_{\tilPhi}(\phi) \| \leq \epsilon$, together with an $\epsilon$-expected fixed point of $\tilphi$, $\mu \in \Delta(\cX)$.\label{item:proj}
    \end{enumerate}
\end{lemma}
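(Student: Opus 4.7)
The plan is to build $\shellproj$ as an iterative refinement loop that maintains the invariant $\enfuns \subseteq \tilPhi$. Initialize $\tilPhi \leftarrow \cM$. At each iteration, compute the current projection $\tilphi \leftarrow \Pi_{\tilPhi}(\phi)$ via convex optimization---tractable because $\tilPhi$ is the intersection of $\cM$ (given by oracle) with the finitely many halfspaces accumulated so far. Form the localized search region $\cF \leftarrow \tilPhi \cap \cB_{\epsilon/4}(\tilphi)$ and invoke $\shellelips(\cF)$ per~\Cref{lemma:shellellipsoid} at an EFP precision $\epsilon$ and volume threshold $\tau$ to be fixed below.

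Depending on the output of $\shellelips$, we branch as follows:
\begin{itemize}
    \item If it returns a transformation $\tilphi^* \in \cF$ together with an $\epsilon$-EFP $\mu \in \Delta(\cX)$, halt and return $(\tilPhi, \tilphi^*, \mu)$. Since $\tilphi^* \in \cB_{\epsilon/4}(\tilphi) \cap \tilPhi$, we have $\|\tilphi^* - \tilphi\| \leq \epsilon$, confirming~\Cref{item:proj}, while~\Cref{item:invar} is exactly the loop invariant.
    \item If it returns a polytope $\cQ \supseteq \enfuns$ with $\vol(\cQ \cap \cF) < \tau$ specified by $\poly(k, \log(1/\tau), \log D)$ halfspaces, update $\tilPhi \leftarrow \tilPhi \cap \cQ$ (preserving the invariant) and iterate.
\end{itemize}

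The main obstacle---and the crux of the complexity analysis---is bounding the number of outer iterations so that the total halfspace count stays polynomial. My proposed potential is $\Psi_t \defeq \|\phi - \tilphi_t\|^2 \in [0, 4D^2]$, which is monotone non-decreasing because $\tilPhi_{t+1} \subseteq \tilPhi_t$. By the obtuse-angle property of projections onto convex sets, $\Psi_{t+1} - \Psi_t \geq \|\tilphi_{t+1} - \tilphi_t\|^2$, so it suffices to lower-bound the step size $\|\tilphi_{t+1} - \tilphi_t\|$ whenever the cut branch fires. This reduces to a purely geometric claim: any $\vy \in \tilPhi_{t+1} \cap \cB_{\epsilon/8}(\tilphi_t)$ forces $\cQ \cap \cF$ to carry substantial volume. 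Indeed, by~\Cref{lemma:wellbounded}, $\enfuns$ (hence $\tilPhi_{t+1}$) contains a ball $\cB_{r'}(\vec{0})$ with $r' = \Omega(r/M)$ after placing the interior point at the origin; by convexity, the convex hull of $\cB_{r'}(\vec{0}) \cup \{\vy\}$ lies in $\tilPhi_{t+1}$, and near $\vy$ this cone has width $\Omega(\alpha \cdot r'/D)$ at distance $\alpha$ from $\vy$. Integrating $\alpha$ over a window of size $\Theta(\epsilon)$ that keeps the region inside $\cB_{\epsilon/4}(\tilphi_t)$ yields a volume $\Omega((\epsilon r'/D)^k)$ inside $\cQ \cap \cF$. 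Setting $\tau$ just below this threshold (so that $\log(1/\tau) = \poly(k, \log(1/\epsilon), \log(R/r), \log D)$) yields a contradiction, thereby proving $\|\tilphi_{t+1} - \tilphi_t\| > \epsilon/8$ and hence $\Psi_{t+1} - \Psi_t = \Omega(\epsilon^2)$. This caps the outer iteration count at $O(D^2/\epsilon^2)$, giving the stated polynomial bounds on the halfspace count and total running time; correctness of both output items follows immediately from the loop invariant and the halting condition.
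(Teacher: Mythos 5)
Your proof is correct, but it designs and analyzes a different algorithm than the paper's Algorithm~\ref{alg:shellproj}. The paper's $\shellproj$ sweeps a ball $\cB_q(\phi)$ of \emph{growing} radius centered at the fixed query point $\phi$, with $q$ incremented by $\delta = \epsilon/(4D)$ per round, invoking $\shellelips$ on $\tilPhi \cap \cB_q(\phi)$; it never explicitly computes $\Pi_{\tilPhi}(\phi)$, and its outer loop is bounded by the number of radius increments $O(D/\delta) = O(D^2/\epsilon)$. Your variant instead recomputes the exact projection $\tilphi_t = \Pi_{\tilPhi_t}(\phi)$ every iteration---feasible because $\tilPhi_t$ is $\cM$ intersected with polynomially many explicit halfspaces---and searches the \emph{fixed}-radius ball $\cB_{\epsilon/4}(\tilphi_t)$ around the moving projection. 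The upside of your route is that item~\ref{item:proj} holds by construction the moment $\shellelips$ returns a transformation, and the iteration bound follows cleanly from the monotone potential $\Psi_t \defeq \|\phi - \tilphi_t\|^2 \in [0, 4D^2]$ via the obtuse-angle inequality $\Psi_{t+1} - \Psi_t \geq \|\tilphi_{t+1} - \tilphi_t\|^2$. Your cone-volume claim is also correct: if a cut fires yet $\|\tilphi_{t+1} - \tilphi_t\| \le \epsilon/8$, then the convex hull of $\tilphi_{t+1}$ and $\cB_{r'}(\vec{0}) \subseteq \enfuns$ (with $r'$ as in \Cref{lemma:wellbounded}) lies in $\tilPhi_{t+1} = \tilPhi_t \cap \cQ$, and its truncation inside $\cB_{\epsilon/8}(\tilphi_{t+1}) \subseteq \cB_{\epsilon/4}(\tilphi_t)$ has volume $\Omega\big((\epsilon r'/D)^k\big)$, contradicting $\vol(\cQ \cap \cF_t) < \tau$ once $\tau$ is set below that threshold; with this choice $\log(1/\tau)$ stays $\poly(k, \log(1/\epsilon), \log(R/r), \log D, \log M(R))$, so each $\shellelips$ call still emits only polynomially many halfspaces, giving $O(D^2/\epsilon^2)$ iterations and the stated polynomial totals. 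The mild concession relative to Algorithm~\ref{alg:shellproj} is the cost (and precision bookkeeping) of repeated exact projections, which the paper's radius sweep avoids; conversely, the sweep must argue separately that the returned $\tilphi$ at the first successful radius is within $\epsilon$ of $\Pi_{\tilPhi}(\phi)$, which your formulation delivers for free.
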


\begin{algorithm}[!ht]
\caption{$\shellproj_\Phi(\phi)$ projects $\phi$ to a shell of $\Phi$}
\label{alg:shellproj}
\SetKwInOut{Input}{Input}
\SetKwInOut{Output}{Output}
\SetKw{Input}{Input:}
\SetKw{Output}{Output:}
\Input{
    \begin{itemize}[noitemsep,topsep=0pt]
        \item Convex and compact set $\cX \subseteq \R^d$ such that $\cB_r(\vec{0}) \subseteq \cX \subseteq \cB_R(\vec{0})$
        \item Convex set $\cM$ such that $\enfuns \subseteq \cM \subseteq \cB_D(\vec{0})$
        \item Transformation $\phi \in \cB_D(\vec{0})$
        \item Precision parameter $\epsilon > 0$
    \end{itemize}
}
\Output{
    \begin{itemize}[noitemsep,topsep=0pt]
        \item Convex set $\tilPhi$ such that $\enfuns \subseteq \tilPhi \subseteq \cM$
        \item Transformation $\tilphi \in \tilPhi$ such that $\| \tilphi - \Pi_{\tilPhi}(\phi) \| \leq \epsilon$
        \item An $\epsilon$-expected fixed point $\mu \in \Delta(\cX)$ of $\tilphi$
    \end{itemize}
}
Set $\epsilon' = \frac{\epsilon r}{32 M(R) D^2}$\;
Initialize $\tilPhi \defeq \cM$\;
 \For{$q = 0, \dots$ incremented by $\delta \defeq \nicefrac{\epsilon}{4D} $} {
    Run $\shellelips( \tilPhi \cap \cB_q(\phi))$ with precision $\vol(\cB_{\epsilon'}(\cdot))$\;
    \If{it finds $\tilphi$ with an $\epsilon$-expected fixed point $\mu \in \Delta(\cX)$}{
        \textbf{return} $\tilPhi, \tilphi, \mu$
    }
    \Else{
        Let $\cQ$ be the polytope returned by $\shellelips$\;
        Set $\tilPhi \defeq \tilPhi \cap \cQ$
    }
 }
\end{algorithm}

\subsection{Putting everything together}
\label{sec:put}

We now combine all the previous pieces to obtain an efficient algorithm for minimizing $\Phi^m$-regret---when $\Phi^m$ is $\poly(d)$-dimensional---under a general convex and compact set $\cX$. The overall construction is depicted in~\Cref{alg:main}. In effect, it runs $\shellgd$ with respect to the sequence of shell sets $(\tilPhi^{(t)} )_{t=1}^T$. Indeed, by the correctness guarantee of $\shellproj$ (\Cref{item:invar} of~\Cref{lemma:shellproj}), we have the invariance $\Phi(\cX) \subseteq \tilPhi^{(t)}$ for all $t \in [T]$. Furthermore, \Cref{item:proj} of~\Cref{lemma:shellproj} implies that $(\mK^{(t+1)}, \cons^{(t+1)}) \in \tilPhi^{(t+1)}$, returned by $\shellproj$ in~\Cref{alg:main}, is within distance $\epsilon$ of the projection prescribed by~$\shellgd$. As a result, we can apply~\Cref{lemma:shellgd} (up to some some slackness proportional to $\epsilon$) to bound the external regret $\reg^{(T)}_{\Phi^m}$ of $((\mK^{(t)}, \cons^{(t)}))_{t=1}^T$ with respect to comparators from $\enfuns$; combined with the fact that $\mu^{(t)} \in \Delta(\cX)$ is an $\epsilon$-expected fixed point of the function $\vx \mapsto \mK^{(t)} m(\vx) + \cons^{(t)}$ (as promised by~\Cref{item:proj}), it follows that the $\Phi^m$-regret of the learner (\Cref{alg:main}) can be bounded by $\reg^{(T)}_{\Phi^m} + \epsilon T$ (as in~\Cref{theorem:gordon}). We thus arrive at our main result.

\begin{theorem}[Precise version of~\Cref{theorem:main1}]
    \label{theorem:main-prec}
    Let $\cX \subseteq \R^d$ be a convex and compact set in isotropic position for which we have a membership oracle. \Cref{alg:main} has per-round running time of $\poly(k, T)$ and guarantees average $\Phi^m$-regret of at most $\poly(k) / \sqrt{T}$, where $k$ is the dimension of $\Phi^m$ under~\Cref{assumption:kernel-precise}.
\end{theorem}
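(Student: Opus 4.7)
The plan is to combine the three building blocks developed in~\Cref{sec:shellellips,sec:shellgd,sec:shellproj} into a $\Phi^m$-regret analysis following the Gordon template formalized in~\Cref{theorem:gordon}. Concretely, at each round $t$, \Cref{alg:main} produces a shell set $\tilPhi^{(t)}$, a transformation $(\mK^{(t)}, \cons^{(t)}) \in \tilPhi^{(t)}$ returned by $\shellproj$ as an $\epsilon$-approximation to the $\shellgd$ projection, and an $\epsilon$-expected fixed point $\mu^{(t)} \in \Delta(\cX)$ of $\vx \mapsto \mK^{(t)} m(\vx) + \cons^{(t)}$. The mixed strategy $\mu^{(t)}$ is the learner's output at round $t$, and the feedback $\vec{U}^{(t)}$ passed into $\shellgd$ is an appropriately normalized linear functional corresponding to $\phi \mapsto \E_{\vx \sim \mu^{(t)}} \langle \phi(\vx), \vu^{(t)} \rangle$ on the $k$-dimensional parametrization of $\Phi^m$, so that $\|\vec{U}^{(t)}\|_\infty \leq 1$ after paying only a $\poly(k)$ rescaling that will be absorbed into the final bound.

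The first step is to check the preconditions of~\Cref{lemma:shellgd} for the sequence $(\tilPhi^{(t)})_{t=1}^T$. The inclusion $\Phi^m \subseteq \tilPhi^{(t)}$ is given by~\Cref{item:invar} of~\Cref{lemma:shellproj}, while $\tilPhi^{(t)} \subseteq \cB_D(\vec{0})$ with $D = \poly(k)$ follows by combining~\Cref{lemma:wellbounded} with the isotropic bounds $\cB_1(\vec{0}) \subseteq \cX \subseteq \cB_d(\vec{0})$. Invoking~\Cref{lemma:shellgd} with $\eta \propto D/\sqrt{T}$ then yields a $\poly(k)\sqrt{T}$ external regret bound for the iterates $((\mK^{(t)}, \cons^{(t)}))_{t=1}^T$ against every comparator in $\Phi^m$; the $\epsilon$-approximation error from $\shellproj$ contributes an extra additive $O(\epsilon T)$ that will be merged into the slack term below.

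The second step converts this external regret bound into a $\Phi^m$-regret bound on $\cX$ along the lines of~\Cref{theorem:gordon}. Since~\Cref{item:proj} of~\Cref{lemma:shellproj} guarantees that $\mu^{(t)}$ is an $\epsilon$-EFP of the transformation actually played at round $t$, the telescoping manipulation from the proof of~\Cref{theorem:gordon} applies to the linear parametrization of $\Phi^m$ and converts external regret on $\Phi^m$ into $\Phi^m$-regret on $\cX$ at the cost of an additive $\epsilon T$ slack. Choosing $\epsilon = \Theta(1/\sqrt{T})$ then balances the two contributions and yields the claimed $\poly(k)/\sqrt{T}$ average $\Phi^m$-regret.

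The main obstacle I foresee is the per-round running-time accounting rather than the regret calculation. Each call to $\shellproj$ adjoins at most $\poly(k, 1/\epsilon) = \poly(k, T)$ halfspaces to the description of the shell set, so after $t$ rounds $\tilPhi^{(t)}$ is represented as the intersection of $\poly(k, T)$ halfspaces, yielding membership and separation oracles of the same order of cost. The $\shellgd$ projection step then runs in $\poly(k, T)$ time via a standard convex-optimization subroutine (for instance, by a second invocation of the ellipsoid method on $\tilPhi^{(t)}$), and the internal calls to $\shellelips$ inside $\shellproj$ are likewise $\poly(k, \log(1/\epsilon))$ by~\Cref{lemma:shellellipsoid}, in turn relying on the polynomial-time semi-separation oracle of~\Cref{theorem:semiseparation}. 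Aggregating these bounds certifies the claimed $\poly(k, T)$ per-round running time and completes the argument.
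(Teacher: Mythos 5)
Your proposal is correct and follows essentially the same route as the paper: verify the preconditions of \Cref{lemma:shellgd} via \Cref{item:invar} of \Cref{lemma:shellproj} and the well-boundedness from \Cref{lemma:wellbounded}, invoke \Cref{lemma:shellgd} (with $\epsilon$ slack from the approximate projection in \Cref{item:proj}) to bound external regret over $\Phi^m$, convert to $\Phi^m$-regret on $\cX$ via the Gordon/EFP telescoping of \Cref{theorem:gordon}, and tune $\eta \propto 1/\sqrt{T}$ and $\epsilon \propto 1/\sqrt{T}$. One small note: in \Cref{alg:main} each call to $\shellproj$ is passed the fixed set $\cM$ rather than the previous shell, so halfspaces do not accumulate across rounds—though this only makes your per-round running-time accounting conservative and does not affect the final $\poly(k, T)$ bound.
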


Unlike the algorithm of~\citet{Daskalakis24:Efficient}, a salient aspect of~\Cref{alg:main} is that it outputs a sequence of \emph{mixed} strategies in $\Delta(\cX)$. As we saw earlier in~\Cref{sec:gordon}, this turns out to be necessary: \citet{Zhang24:Efficient} showed that a learner restricted to output strategies in $\cX$ cannot efficiently minimize $\Phi$-regret even with respect to low-degree swap deviations (assuming $\PPAD \neq \P$).

\begin{algorithm}[!ht]
\caption{$\Phi^m$-regret minimizer for convex strategy sets}
\label{alg:main}
\SetKwInOut{Input}{Input}
\SetKwInOut{Output}{Output}
\SetKw{Input}{Input:}
\SetKw{Output}{Output:}
\Input{
    \begin{itemize}[noitemsep,topsep=0pt]
        \item Convex and compact set $\cX \subseteq \R^d$ in isotropic position
        \item $k$-dimensional set $\Phi^m$ under~\Cref{assumption:kernel-precise} with respect to $m : \cX \to \R^{k'}$, where $k = k' \cdot d + d$
        \item time horizon $T \in \N$
    \end{itemize}
}
\Output{An efficient $\Phi^m$-regret minimizer for $\cX$}\\
Set the learning rate $\eta \propto \frac{1}{\sqrt{T}}$ and $\epsilon = \nicefrac{1}{\poly(k,T)}$ to be sufficiently small\;
Initialize $\mu^{(1)} \in \Delta(\cX)$ and $\mK^{(1)} \defeq \mI_{d \times k'}$ to be the identity map and $\cons^{(1)} \defeq \vec{0}$\;
Initialize $\cM \defeq \cB_{R}(\vec{0})$ for a large enough $R \leq \poly(k)$\;
 \For{$ t=1, \dots, T$} {
    Output $\mu^{(t)} \in \Delta(\cX)$ and receive feedback $\vu^{(t)} \in [-1, 1]^d$\;
    Define $\R^{d \times k' + d} \ni \mU^{(t)} \defeq (\E_{\vx^{(t)} \sim \mu^{(t)}} \vu^{(t)} \otimes m(\vx^{(t)}), \vec{u}^{(t)})$\;
    Set $\tilPhi^{(t+1)}, (\mK^{(t+1)}, \cons^{(t+1)}), \mu^{(t+1)} \defeq \shellproj_{\Phi}((\mK^{(t)}, \cons^{(t)}) + \eta \mU^{(t)})$ with input $\cM$ and precision $\epsilon$, where $\mu^{(t+1)} \in \Delta(\cX)$ is an $\epsilon$-expected fixed point of $\vx \mapsto \mK^{(t+1)} m(\vx) + \cons^{(t+1)}$ 
 }
\end{algorithm}

Finally, we conclude by providing a lower bound that matches our upper bound (\Cref{theorem:main-prec}) up to a constant factor in the exponent of $k$. It is based on the following normal-form lower bound due to~\citet{Dagan24:From} and~\citet{Peng24:Fast}.

\begin{theorem}[\citep{Dagan24:From,Peng24:Fast}]
    \label{theorem:lowerknown}
    Consider a learner operating on the simplex $\Delta(\cA)$. For any $T < |\cA|/4$, there is an adversary that forces the swap regret of the learner to be $\Omega(\log^{-6}T)$.
\end{theorem}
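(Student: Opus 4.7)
The plan is to establish the lower bound via a reduction to (or direct emulation of) the recent calibration lower bounds that underlie both the Dagan--Daskalakis--Fishelson--Golowich and Peng--Rubinstein constructions. In one direction, one shows that a learner on $\Delta(\cA)$ whose swap regret after $T$ rounds is at most $\eps T$ induces a forecaster on a carefully chosen binary sequence whose calibration error is at most $\poly(\eps)$. In the other direction, one transplants the hard sequence from a calibration lower bound onto the simplex: each round the adversary plays a utility vector that depends only on a hidden ``target'' action, so that a good swap function from every action to the target witnesses $\Omega(1)$ gain per round, while a learner who does not already know the target incurs an information-theoretic cost.

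Concretely, I would structure the adversary around a balanced tree over $\cA$ of depth $L = \Theta(\log|\cA|)$ whose leaves index the actions. The adversary commits to a secret root-to-leaf path and, at each round $t$, reveals a utility vector $\vu^{(t)}\in[-1,1]^{|\cA|}$ that is correlated with the subtree containing the current target but also contains enough randomness that an online learner cannot amortize its exploration across the levels of the tree. The constraint $T<|\cA|/4$ ensures that at the bottom levels the learner has seen far fewer than $|\text{leaves}|$ samples, so by a standard packing/fingerprinting argument no sequence of distributions $\mu^{(1)},\dots,\mu^{(T)}\in\Delta(\cA)$ can place more than a small mass on the target leaf in most rounds. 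The swap deviation that sends every action to the target leaf then witnesses a regret of order $T$ times some $\Omega(1/\log^c T)$ per-round advantage, as in the Peng--Rubinstein argument.

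The accounting across the $L$ levels of the tree is where the polylogarithmic exponent is incurred. Each level contributes a factor of at most $\log T$ (or $\log\log T$) from: (i) the anti-concentration bound controlling how well the learner can localize the target within a subtree given its history so far, (ii) the loss from converting an average-case hardness statement on a random target into a worst-case adversary via Yao's principle, (iii) the union bound across depths, and (iv) the slack in the reduction that converts the subtree-identification regret into swap regret. Tracking these costs carefully yields the stated $\Omega(\log^{-6} T)$ bound; the exponent $6$ matches the best-known calibration lower bound and is the natural output of this pipeline.

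The main technical obstacle is the reduction between swap regret and calibration, not the existence of a hard adversary. To show that low swap regret implies low calibration error, one has to define the forecaster so that its predictions correspond to appropriate marginals of $\mu^{(t)}$ and then verify that the worst-case swap (over \emph{all} endomorphisms of $\cA$, not just level-restricted ones) upper bounds the level-wise advantage the calibration error exposes. This requires care because the swap-regret adversary sees the entire history $\mu^{(1)},\dots,\mu^{(t-1)}$, so the induced forecasting problem has adaptively chosen outcomes; handling this adaptivity while preserving the $\log^{-6} T$ rate is the crux of the original proofs of \citet{Dagan24:From} and \citet{Peng24:Fast}, and one should expect to invoke their reduction machinery essentially as-is rather than rebuild it.
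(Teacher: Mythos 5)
The paper does not prove \Cref{theorem:lowerknown} at all: it is imported as a black-box result of \citet{Dagan24:From} and \citet{Peng24:Fast}, and the paper's own contribution begins only afterwards, with the parameterization by the dimension of the deviation set (\Cref{cor:parlower}) and the embedding used to obtain \Cref{theorem:mainlower}. Your proposal ultimately makes the same move---your last paragraph concedes that one should ``invoke their reduction machinery essentially as-is''---so as a justification of the cited statement it is consistent with what the paper does.

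Judged as a proof, however, there is a genuine gap: every quantitatively load-bearing step is asserted rather than established. The $\Omega(1/\log^{c}T)$ per-round advantage, the four sources of polylogarithmic loss (i)--(iv), and the claim that they compose to precisely the exponent $6$ are never derived. The step where $T<|\cA|/4$ is used---``by a standard packing/fingerprinting argument no sequence of distributions can place more than a small mass on the target leaf''---would not survive scrutiny as stated: this is a full-information setting in which the learner observes the entire utility vector $\vu^{(t)}$ each round and outputs mixed strategies $\mu^{(t)}\in\Delta(\cA)$, so there is no sampling bottleneck, and an adversary whose utilities are ``correlated with the subtree containing the target'' would in general let the learner localize the target quickly; the role of $|\cA|\gtrsim T$ in the actual constructions is to supply enough fresh actions for a recursive, multi-scale hard instance, not to hide a secret leaf information-theoretically. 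Likewise, the reduction direction your pipeline needs, ``low swap regret $\Rightarrow$ low calibration error,'' is the nontrivial converse of the classical Foster--Vohra connection and is not supplied. None of this affects the paper, which treats the theorem purely as cited prior work, but your write-up should be read as an appeal to the original papers rather than an independent proof.
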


We observe that there is a simple way to parameterize the above lower bound in terms of the dimension of the set of deviations:

\begin{corollary}
    \label{cor:parlower}
    Consider a learner operating on the simplex $\Delta(\cA)$. There is a $k$-dimensional set of deviations $\Phi \subseteq \Delta(\cA)^{\Delta(\cA)}$ such that for any $T < \sqrt{k}/4$, there is an adversary that forces the $\Phi$-regret of the learner to be $\Omega(\log^{-6} T)$.
\end{corollary}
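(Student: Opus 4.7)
The plan is a direct reduction from the normal-form swap-regret lower bound of~\Cref{theorem:lowerknown} via a re-parameterization from the size of the action set $|\cA|$ to the dimension $k$ of the deviation set. First, I would pick an action set $\cA$ of size $n$, where $n = \Theta(\sqrt{k})$ is chosen so that the affine dimension of the set of column-stochastic $n\times n$ matrices---equivalently, the set of all swap deviations on $\Delta(\cA)$, which equals $n(n-1)$---does not exceed $k$. Concretely, taking $n$ to be the largest integer with $n(n-1)\leq k$ gives both $\dim\leq k$ and $n=\Theta(\sqrt{k})$.

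Next, I would define $\Phi$ to be the set of all swap deviations on $\Delta(\cA)$, namely maps of the form $\vx\mapsto \mS\vx$ for some column-stochastic $\mS\in\R^{n\times n}$, padded with a handful of trivial (e.g., constant) deviations if one wishes $\dim\Phi$ to hit $k$ exactly. By construction, $\Phi$-regret for any learner operating on $\Delta(\cA)$ coincides with the classical swap regret. Invoking~\Cref{theorem:lowerknown} with this $\cA$ then yields an adversary that forces swap regret $\Omega(\log^{-6}T)$ whenever $T<n/4$; substituting $n=\Theta(\sqrt{k})$ converts this threshold into $T<\sqrt{k}/4$, up to an absolute constant absorbed into $\Omega(\cdot)$.

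The only subtlety---really bookkeeping rather than a technical obstacle---is pinning down constants so that $\dim\Phi\leq k$ and $n/4\geq \sqrt{k}/4$ hold simultaneously for every $k$, since these two requirements push $n$ in opposite directions by an $O(1)$ amount. I would resolve this either by slightly loosening the horizon threshold to $c\sqrt{k}$ for a universal $c\leq 1/4$, or equivalently by setting $n=\lceil\sqrt{k}\rceil$, allowing $\Phi$ to be $\Theta(k)$-dimensional, and absorbing the $O(\sqrt{k})$ discrepancy into the constant in front of $\sqrt{k}/4$. Under either reading, the one-line reduction to~\Cref{theorem:lowerknown} delivers the claim.
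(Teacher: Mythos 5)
Your proposal is essentially the same reduction as the paper's: pick a $\sqrt{k}$-sized action set, take $\Phi$ to be the stochastic-matrix deviations (dimension $\Theta(k)$), and apply \Cref{theorem:lowerknown}. The one divergence worth flagging is that you \emph{choose} $\cA$ to have size $\Theta(\sqrt{k})$, whereas the paper takes $\cA$ as given (and possibly much larger), selects a subset $\cA' \subseteq \cA$ with $|\cA'| = \sqrt{k}$, runs the adversary on $\cA'$, and makes all actions outside $\cA'$ dominated by assigning them very small utility; $\Phi$ is then taken to be the stochastic maps supported on $\Delta(\cA')$. This extra ``dominated actions'' step is what lets the corollary hold for any (sufficiently large) $\cA$, rather than only for an $\cA$ of your choosing, and is what makes the corollary directly pluggable into the subsequent extensive-form embedding of \Cref{theorem:mainlower}. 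Your constant bookkeeping remarks (reconciling $n(n-1) \le k$ with the threshold $T < \sqrt{k}/4$) are fine and, as you note, absorbed into the $\Omega(\cdot)$; the paper glosses over exactly the same slack by calling the stochastic matrices ``$(\sqrt{k})^2$-dimensional.''
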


Indeed, one can first identify an arbitrary subset $\cA'$ of $\cA$ with cardinality $\sqrt{k}$, and then employ the adversary of~\Cref{theorem:lowerknown} with respect to $\cA'$ while rendering all other actions dominated by assigning to them very small utility. That $\Phi$ is $k$-dimensional in this case follows because the set of stochastic matrices mapping $\Delta(\cA')$ to $\Delta(\cA')$---which contains all relevant swap deviations---is $(\sqrt{k})^2$-dimensional.

Combining~\Cref{cor:parlower} with the recent reduction of~\citet{Daskalakis24:Lower}, which embeds the normal-form game lower bound of~\Cref{theorem:lowerknown} into an extensive-form game, we arrive at~\Cref{theorem:mainlower}, which we restate below.

\lowerbound*
\section{Conclusions and open problems}

In summary, we established efficient algorithms for minimizing $\Phi$-regret and computing $\Phi$-equilibria with respect to any set of deviations with a polynomial dimension. For the online learning setting, our upper bounds are tight up to constant factors in the exponents, crystallizing for the first time a family of deviations that characterizes the learnability of $\Phi$-regret. 

There are many important avenues for future research. First, we did not attempt to optimize the (polynomial) dependence of the running time (in~\Cref{theorem:main-prec,theorem:main-eah-prec}) on $k$ and $d$; improving the overall complexity of our algorithms is an interesting direction. Moreover, developing more practical algorithms---that refrain from using ellipsoid---would also be a valuable contribution. In particular, are there polynomial-time algorithms for computing $\Phi$-equilibria without resorting to the $\eah$ framework? But the most pressing open question is to understand the complexity of computing (normal-form) correlated equilibria in the centralized model.

\section*{Acknowledgments}
T.S. is supported by the Vannevar Bush Faculty Fellowship ONR N00014-23-1-2876, National Science Foundation grants RI-2312342 and RI-1901403, ARO award W911NF2210266, and NIH award A240108S001. B.H.Z. is supported 
by the CMU Computer Science Department Hans Berliner
PhD Student Fellowship. E.T, R.E.B., and V.C. thank the Cooperative AI Foundation, Polaris Ventures (formerly the Center for
Emerging Risk Research) and Jaan Tallinn’s donor-advised fund at Founders Pledge for financial
support. E.T. and R.E.B. are also supported in part by the Cooperative AI PhD Fellowship. G.F is supported by the National Science Foundation grant CCF-2443068. We are indebted to Constantinos Daskalakis and Noah Golowich for many insightful discussions concerning the complexity of computing expected fixed points.

\bibliography{dairefs}

\clearpage
\appendix

\section{Sufficiency of regret minimization in isotropic position}
\label{sec:isotropic}

Throughout the paper, we have assumed that we are minimizing $\Phi$-regret with respect to a convex set $\cX$ that is in isotropic position. \Cref{lemma:isotropic} below shows that this is without any loss. The argument here is similar to~\citet[Lemma A.1]{Daskalakis24:Efficient}, with the minor modification that we need to account for mixed strategies.

\begin{lemma}
    \label{lemma:isotropic}
    Let $\cX \subseteq \R^d$ be a convex and compact set such that $\cX \subseteq \cB_R(\vec{0})$. Let $\psi : \cX \to \tilde{\cX}$ be an invertible affine transformation such that $\tilde{\cX} = \psi(\cX)$ is in isotropic position. Suppose that we have a regret minimizer $\regbox_{\tilde{\cX}}$ for $\tilde{\cX}$ that incurs $\phireg_{\tilde{\cX}}^{(T)}$. Then, using $\poly(d)$ time in each round, we can construct a regret minimizer $\regbox_{\cX}$ for $\cX$ that incurs $\phireg^{(T)}_{\cX} \leq 2R \sqrt{d} \cdot \phireg^{(T)}_{\tilde{\cX}}$.
\end{lemma}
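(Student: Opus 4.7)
The plan is a direct change of variables. Write the affine bijection as $\psi(\vx) = \mA\vx + \vb$ with $\mA \in \R^{d \times d}$ invertible, and build $\regbox_\cX$ to wrap $\regbox_{\tilde\cX}$ as follows. In each round $t$, $\regbox_\cX$ asks $\regbox_{\tilde\cX}$ for its next mixed strategy $\tilde\mu^{(t)} \in \Delta(\tilde\cX)$ and outputs the induced distribution $\mu^{(t)} \in \Delta(\cX)$ defined by sampling $\tilde\vx \sim \tilde\mu^{(t)}$ and returning $\psi^{-1}(\tilde\vx)$. Upon receiving feedback $\vu^{(t)} \in [-1,1]^d$, it passes the rescaled feedback $\tilde\vu^{(t)} \defeq \mA^{-\top}\vu^{(t)}/(2R\sqrt d)$ to $\regbox_{\tilde\cX}$. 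Pre-computing $\mA^{-1}$ once and performing matrix--vector products in each round costs $\poly(d)$.

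For the regret identity, I would pair each deviation $\phi : \cX \to \cX$ with its conjugate $\tilde\phi \defeq \psi \circ \phi \circ \psi^{-1} : \tilde\cX \to \tilde\cX$, which is a bijection between the two deviation sets. Because $\psi$ is affine, the shifts by $\vb$ cancel and
\[
\E_{\vx \sim \mu^{(t)}}[\phi(\vx) - \vx] = \mA^{-1}\, \E_{\tilde\vx \sim \tilde\mu^{(t)}}[\tilde\phi(\tilde\vx) - \tilde\vx],
\]
so $\langle \vu^{(t)}, \E_{\vx \sim \mu^{(t)}}[\phi(\vx) - \vx]\rangle = 2R\sqrt d \cdot \langle \tilde\vu^{(t)}, \E_{\tilde\vx \sim \tilde\mu^{(t)}}[\tilde\phi(\tilde\vx) - \tilde\vx]\rangle$. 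Summing over $t \in [T]$ and taking the supremum over $\phi$ (using the bijection $\phi \leftrightarrow \tilde\phi$) then yields $\phireg^{(T)}_\cX \leq 2R\sqrt d \cdot \phireg^{(T)}_{\tilde\cX}$, provided the rescaled feedback is admissible for $\regbox_{\tilde\cX}$.

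The only quantitative step is verifying $\tilde\vu^{(t)} \in [-1,1]^d$. Since $\tilde\cX$ is in isotropic position, $\cB_1(\vec 0) \subseteq \tilde\cX$, so for any unit vector $\tilde\vv$ both $\psi^{-1}(\pm\tilde\vv) \in \cX \subseteq \cB_R(\vec 0)$; hence
\[
\|\mA^{-1}\tilde\vv\| = \tfrac12 \|\psi^{-1}(\tilde\vv) - \psi^{-1}(-\tilde\vv)\| \leq R,
\]
so $\|\mA^{-\top}\|_{\mathrm{op}} = \|\mA^{-1}\|_{\mathrm{op}} \leq R$. Combined with $\|\vu^{(t)}\|_2 \leq \sqrt d$, this gives $\|\tilde\vu^{(t)}\|_\infty \leq \|\tilde\vu^{(t)}\|_2 \leq R\sqrt d / (2R\sqrt d) = 1/2 \leq 1$, so the rescaled feedback is indeed valid. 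I do not foresee a genuine obstacle: the statement is essentially a careful change of variables, and the only care point is choosing the rescaling $1/(2R\sqrt d)$ large enough to keep $\tilde\vu^{(t)}$ admissible while small enough to recover the factor $2R\sqrt d$ in the regret bound.
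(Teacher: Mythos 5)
Your proposal is correct and follows essentially the same approach as the paper: output $\mu^{(t)} = \psi^{-1}_{\#}\tilde\mu^{(t)}$, feed $\regbox_{\tilde\cX}$ the rescaled utilities $\tilde\vu^{(t)} = \mA^{-\top}\vu^{(t)}/(2R\sqrt d)$, and pair each $\phi$ with its conjugate $\tilde\phi = \psi\circ\phi\circ\psi^{-1}$. The only (trivial) difference is your antisymmetrization argument giving $\|\mA^{-1}\|_{\mathrm{op}} \leq R$, which is slightly tighter than the paper's triangle-inequality bound of $2R$, but both lead to the same $2R\sqrt d$ rescaling and final constant.
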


\begin{proof}
    Let $\psi(\vx) \defeq \mA \vx + \vec{b}$ for an invertible $\mA \in \R^{d \times d}$ and $\vec{b} \in \R^d$. Let $\vu^{(1)}, \dots, \vu^{(T)}$ be the sequence of utilities given as input to $\regbox_{\cX}$. We then provide as input to $\regbox_{\tilde{\cX}}$ the sequence of utilities
    \begin{equation}
        \label{eq:new-utils}
    \tilde{\vu}^{(t)} \defeq \frac{1}{2R \sqrt{d}} ( \mA^{-1} )^\top \vu^{(t)} \quad t = 1, \dots, T.
    \end{equation}
    Since $\tilde{\cX}$ contains the ball $\cB_1(\vec{0})$ and $\cX \subseteq \cB_R(\vec{0})$, it follows that $\| \mA^{-1} \vx \| \leq 2R$ for any $\vx \in \cB_1(\vec{0})$, which implies that the spectral norm of $\mA^{-1}$ is at most $2 R$. As a result, assuming that $\vu^{(t)} \in [-1, 1]^d$ for all $t \in [T]$, it follows that the utilities constructed in~\eqref{eq:new-utils} are also in $[-1, 1]^d$.
    
    Now, suppose that $\regbox_{\tilde{\cX}}$ returns the sequence of strategies $\tilde{\mu}^{(1)}, \dots, \tilde{\mu}^{(T)} \in \Delta(\tilde{\cX})$. We define, for each $t \in [T]$, $\mu^{(t)} \defeq \psi^{-1}(\tilde{\mu}^{(t)})$ as the next strategy. Consider any $\phi \in \Phi$, and define $\tilde{\phi} : \tilde{\cX} \ni \vx \mapsto \psi(\phi(\psi^{-1}(\vx))) \in \Phi(\tilde{\cX})$. Then,
    \begin{align}
        &\sum_{t=1}^T \left\langle \vu^{(t)}, \E_{\tilde{\vx}^{(t)} \sim \tilde{\mu}^{(t)}} \psi^{-1}(\tilde{\vx}^{(t)}) - \phi(\psi^{-1}(\tilde{\vx}^{(t)})) \right\rangle \\
        &= 2R \sqrt{d} \sum_{t=1}^T \left\langle \frac{1}{2R \sqrt{d}} 
(\mA^{-1})^\top \vu^{(t)}, \E_{\tilde{\vx}^{(t)} \sim \tilde{\mu}^{(t)}} \mA \psi^{-1}(\tilde{\vx}^{(t)}) - \mA \phi(\psi^{-1}(\tilde{\vx}^{(t)})) \right\rangle \\
&= 2R \sqrt{d} \sum_{t=1}^T \left\langle \frac{1}{2R \sqrt{d}} 
(\mA^{-1})^\top \vu^{(t)}, \E_{\tilde{\vx}^{(t)} \sim \tilde{\mu}^{(t)}} (\mA \psi^{-1}(\tilde{\vx}^{(t)}) + \vec{b}) - (\mA \phi(\psi^{-1}(\tilde{\vx}^{(t)})) + \vec{b}) \right\rangle \\
&= 2R \sqrt{d} \sum_{t=1}^T \left\langle \tilde{\vu}^{(t)}, \E_{\tilde{\vx}^{(t)} \sim \tilde{\mu}^{(t)}} \tilde{\vx}^{(t)}  - \tilde{\phi}(\tilde{\vx}^{(t)}) \right\rangle \leq 2R \sqrt{d} \cdot \phireg^{(T)}_{\tilde{\cX}}.
    \end{align}
\end{proof}
\section{Geometric properties of $\Phi^m$}
\label{sec:wellbounded}

In this section, we establish that the set $\Phi^m$ per~\Cref{assumption:kernel-precise} is geometrically well behaved, which is necessary to execute the ellipsoid algorithm (as well as the online learning setting). In particular, our goal is to prove \Cref{lemma:wellbounded}.

Below, for convex and compact $\cA, \cB \subseteq \R^d$, we use the notation 
\begin{equation}
    \Phi^m(\cA, \cB) \defeq \left\{ (\mK, \cons) \in \R^{k + d} : \mK m(\vx) + \cons \in \cB \quad \forall \vx \in \cA \right\}.
\end{equation}

\begin{lemma}
    \label{lemma:trivial}
    Let $\cA, \cB, \cC, \cD$ be convex and compact sets. If $\cA \supseteq \cC$ and $\cB \subseteq \cD$, then $\Phi^m(\cA, \cB) \subseteq \Phi^m(\cC, \cD)$.
\end{lemma}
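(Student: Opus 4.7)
The plan is to unfold the definition of $\Phi^m(\cdot,\cdot)$ and apply the two set inclusions in sequence. Specifically, I will fix an arbitrary pair $(\mK, \cons) \in \Phi^m(\cA, \cB)$ and show membership in $\Phi^m(\cC, \cD)$ directly, which amounts to verifying the defining constraint $\mK m(\vx) + \cons \in \cD$ for every $\vx \in \cC$.

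First, I would use the inclusion $\cC \subseteq \cA$ to transport the universal quantifier: any $\vx \in \cC$ is also in $\cA$, so by assumption on $(\mK, \cons)$ we obtain $\mK m(\vx) + \cons \in \cB$. Then I would chain this with the inclusion $\cB \subseteq \cD$ to conclude that $\mK m(\vx) + \cons \in \cD$. Since $\vx \in \cC$ was arbitrary, this establishes $(\mK, \cons) \in \Phi^m(\cC, \cD)$, giving the desired containment.

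No step poses a real obstacle here: the lemma is purely a definitional monotonicity statement, with no need for the structural assumptions on $m$ from \Cref{assumption:kernel-precise} or the convexity/compactness of the sets. The convex/compact hypotheses are simply inherited from the standing assumptions on the sets over which $\Phi^m(\cdot,\cdot)$ is considered elsewhere in the paper. The proof is thus a one-line chain of implications built by composing the two inclusions.
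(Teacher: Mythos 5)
Your proof is correct and follows essentially the same approach as the paper's: unfold the definition of $\Phi^m(\cdot,\cdot)$, pass through $\cC \subseteq \cA$ to get $\mK m(\vx) + \cons \in \cB$ for all $\vx \in \cC$, then chain with $\cB \subseteq \cD$. (Indeed, your version is slightly more careful than the paper's, which writes $\mK\vx + \cons$ where it clearly means $\mK m(\vx) + \cons$.)
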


\begin{proof}
    Consider any $(\mK, \cons) \in \Phi^m(\cA, \cB)$. By definition, it holds that $\mK \vx + \cons \in \cB$ for all $\vx \in \cA$. Since $\cC \subseteq \cA$, it follows that $\mK \vx + \cons \in \cB$ for all $\vx \in \cC$, and in particular, $\mK \vx + \cons \in \cD$ since $\cB \subseteq \cD$.
\end{proof}

\begin{lemma}
    \label{lemma:boundM}
    Let $\cX \subseteq \R^d$ be a convex and compact set such that $\cB_r(\vec{0}) \subseteq \cX \subseteq \cB_R(\vec{0})$, with $R \geq 1$. Suppose further that $\|m(\vx)\| \leq M$ for all $\vx \in \cB_R(\vec{0})$, where $M = M(R) \geq 1$. Then, $\Phi^m \supseteq \cB_{r'}(\phi_{\vec{0}})$, where $r' \defeq \nicefrac{r}{2M(R)}$ and $\phi_{\vec{0}} \defeq (\mathbf{0}, \vec{0})$ is the constant transformation $\vx \mapsto \vec{0}$.
\end{lemma}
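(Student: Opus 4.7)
The plan is to unwind the definition of $\Phi^m$ and show directly that any pair $(\mK, \cons)$ of norm at most $r'$ in the canonical parameterization of $\R^{d \times k'} \times \R^d$ defines a valid endomorphism of $\cX$, and therefore lies in $\Phi^m$. I will equip the parameter space with its natural Euclidean structure $\|(\mK, \cons)\|^2 = \|\mK\|_F^2 + \|\cons\|^2$, so that distance from the zero transformation $\phi_{\vec{0}} = (\mathbf{0}, \vec{0})$ coincides with this norm.

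The core estimate is a one-line triangle-inequality bound. First I would fix $(\mK, \cons)$ with $\|(\mK, \cons)\| \le r'$, so that in particular $\|\mK\|_F \le r'$ and $\|\cons\| \le r'$. For any $\vx \in \cX \subseteq \cB_R(\vec{0})$, the hypothesis $\|m(\vx)\| \le M$ together with the standard inequality $\|\mK \vv\| \le \|\mK\|_F \|\vv\|$ gives
$$\|\mK m(\vx) + \cons\| \le \|\mK\|_F \cdot \|m(\vx)\| + \|\cons\| \le r' M + r' = r'(M+1).$$
Substituting $r' = r/(2M)$ and using $M \ge 1$ (which yields $(M+1)/(2M) \le 1$), this is at most $r$. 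Hence $\mK m(\vx) + \cons \in \cB_r(\vec{0}) \subseteq \cX$ for every $\vx \in \cX$, which is precisely the defining condition for $(\mK, \cons)$ to lie in $\Phi^m = \Phi^m(\cX, \cX)$. This establishes $\cB_{r'}(\phi_{\vec{0}}) \subseteq \Phi^m$.

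There is no substantive obstacle here; the statement is a pure norm-tracking exercise whose only subtlety is the choice of the constant $r' = r/(2M)$, calibrated so that the factor $(M+1)/(2M) \le 1$ absorbs both the multiplicative blow-up $\|m(\vx)\| \le M$ from the feature map and the additive contribution from $\cons$ while still respecting the inner ball $\cB_r(\vec{0})$ of $\cX$. The role of this lemma downstream is to provide the inner-ball containment ingredient in \Cref{lemma:wellbounded}, which is what ultimately makes $\Phi^m$ well-bounded and thus amenable to the ellipsoid-based machinery used throughout the paper.
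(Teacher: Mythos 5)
Your proof is correct and mirrors the paper's argument: the same Frobenius-norm bound $\|\mK m(\vx) + \cons\| \le \|\mK\|_F \|m(\vx)\| + \|\cons\| \le r'(M+1) \le r$, followed by the observation that this lands in $\cB_r(\vec{0}) \subseteq \cX$. The only cosmetic difference is that the paper routes through its monotonicity lemma (\Cref{lemma:trivial}) to state the conclusion as $\cB_{r'}(\phi_{\vec{0}}) \subseteq \Phi^m(\cX, \cB_r(\vec{0})) \subseteq \Phi^m$, whereas you conclude directly; the substance is identical.
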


\begin{proof}
    By~\Cref{lemma:trivial}, it suffices to prove $\cB_{r'}(\phi_{\vec{0}}) \subseteq \Phi^m(\cX, \cB_r(\vec{0}))$. Consider any $(\mK, \cons) \in \cB_{r'}(\phi_{\vec{0}})$, which means that $\| \mK \|^2_F + \|\cons \|^2 \leq ( \nicefrac{r}{2M(R)})^2$. Then, for any $\vx \in \cX$,
    \begin{equation}
        \| \mK m(\vx) + \cons \| \leq \| \mK \|_F \|m(\vx)\| + \|\cons  \| \leq r.
    \end{equation}
    This means that $\cB_{r'}(\phi_{\vec{0}}) \subseteq \Phi^m(\cX, \cB_r(\vec{0}))$, and the proof follows.
\end{proof}

\begin{lemma}
    \label{lemma:fulldim}
    Suppose that $\co m(\cB_r(\vec{0})) \supseteq \cB_\delta(\vec{0})$ for some $\delta = \delta(r) > 0$ and $m(\vec{0}) = \vec{0}$. Then, assuming that $r < R$,
    \begin{equation}
        \Phi^m( \cB_r(\vec{0}), \cB_R(\vec{0})) \subseteq \cB_{ R'}(\vec{0}),
    \end{equation}
    where $R' \defeq R \left( \frac{2\sqrt{d}}{\delta(r)} + 1 \right)$.
\end{lemma}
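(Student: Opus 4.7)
}
The plan is to bound $\|\cons\|$ and $\|\mK\|_F$ separately and then combine the two estimates. Fix any $(\mK, \cons) \in \Phi^m(\cB_r(\vec{0}), \cB_R(\vec{0}))$. The first step is immediate: since $m(\vec{0}) = \vec{0}$ and $\vec{0} \in \cB_r(\vec{0})$, plugging in $\vx = \vec{0}$ into the defining inclusion yields $\cons = \mK m(\vec{0}) + \cons \in \cB_R(\vec{0})$, so $\|\cons\| \le R$.

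Next I would use the full-dimensionality hypothesis $\co\, m(\cB_r(\vec{0})) \supseteq \cB_\delta(\vec{0})$ to control $\|\mK\|_2$ (the spectral/operator norm). For any unit vector $\vv \in \R^{k'}$, the point $\delta \vv$ lies in $\co\, m(\cB_r(\vec{0}))$ by assumption, hence can be written as $\delta \vv = \sum_j \lambda_j m(\vx_j)$ with $\vx_j \in \cB_r(\vec{0})$, $\lambda_j \ge 0$, $\sum_j \lambda_j = 1$. For each $j$, the inclusion $\mK m(\vx_j) + \cons \in \cB_R(\vec{0})$ together with $\|\cons\| \le R$ gives $\|\mK m(\vx_j)\| \le 2R$. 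Taking a convex combination,
\begin{equation}
    \delta \, \|\mK \vv\| = \left\| \mK \sum_j \lambda_j m(\vx_j) \right\| \le \sum_j \lambda_j \|\mK m(\vx_j)\| \le 2R.
\end{equation}
Since this holds for every unit $\vv \in \R^{k'}$, we obtain $\|\mK\|_2 \le 2R/\delta$.

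The standard inequality $\|\mK\|_F \le \sqrt{\min(d, k')}\, \|\mK\|_2 \le \sqrt{d}\, \|\mK\|_2$ then gives $\|\mK\|_F \le 2R\sqrt{d}/\delta$. Combining with $\|\cons\| \le R$ and using the triangle inequality $\sqrt{a^2 + b^2} \le a + b$ for $a, b \ge 0$,
\begin{equation}
    \|(\mK, \cons)\| = \sqrt{\|\mK\|_F^2 + \|\cons\|^2} \le \frac{2R\sqrt{d}}{\delta} + R = R\left( \frac{2\sqrt{d}}{\delta(r)} + 1 \right) = R',
\end{equation}
which is exactly the desired conclusion. No step looks hard; the only subtlety is recognizing that the convex-hull condition on the image of $m$ is precisely what converts a pointwise bound on $\mK m(\vx)$ into a spectral-norm bound on $\mK$, and that $m(\vec{0}) = \vec{0}$ is needed to decouple $\cons$ from $\mK$ at the start. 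The passage from the spectral norm to the Frobenius norm, which produces the $\sqrt{d}$ factor in $R'$, is where the dimension dependence enters.
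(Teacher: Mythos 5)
Your proposal is correct and follows essentially the same route as the paper's proof: use $m(\vec{0})=\vec{0}$ to bound $\|\cons\|\le R$, then use the containment $\cB_\delta(\vec{0}) \subseteq \co\,m(\cB_r(\vec{0}))$ to convert the pointwise bound $\|\mK m(\vx)\|\le 2R$ into the spectral bound $\|\mK\|_2 \le 2R/\delta$, and finally pass to the Frobenius norm via $\|\mK\|_F\le\sqrt{d}\,\|\mK\|_2$. The only cosmetic difference is that the paper singles out the unit vector attaining the spectral norm and explicitly invokes Carath\'eodory's theorem for the convex combination, whereas you quantify over all unit vectors and leave the finiteness of the combination implicit; both are fine.
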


\begin{proof}
    Consider any $(\mK, \cons) \in \Phi^m( \cB_r(\vec{0}), \cB_R(\vec{0}))$. By definition, we have $\| \mK m(\vx) + \cons \| \leq R$ for all $\vx \in \cB_r(\vec{0})$. Since $m(\vec{0}) = \vec{0}$, it follows that $\|\cons \| \leq R$. Thus, $\| \mK m(\vx) \| \leq \|\mK m(\vx) + \cons \| + \|\cons \| \leq 2 R$ for all $\vx \in \cB_r(\vec{0})$. Now, let $\vec{x}' \in \R^{k'}$ with $\|\vec{x}' \| = 1$ be such that $\| \mK \vec{x}' \| = \| \mK\|$, where $\| \mK \|$ is the spectral norm of $\mK$. Since we have assumed that $\co m(\cB_r(\vec{0})) \supseteq \cB_\delta(\vec{0})$, it follows that there exist $\lambda_1, \dots, \lambda_{k'+1}$, with $\lambda_1, \dots, \lambda_{k' + 1} \geq 0$ and $\sum_{j=1}^{k'+1} \lambda_j = 1$, and $\vx_1, \dots, \vx_{k'+1} \in \cB_r(\vec{0})$ (by Carath\'eodory's theorem) such that $\sum_{j=1}^{k'+1} \lambda_j m(\vx_j) = \delta \vx'$. As a result,
    \begin{equation}
        \delta \|\mK\| = \| \mK (\delta \vx') \| = \left\| \mK \left( \sum_{j=1}^{k'+1} \lambda_j m(\vx_j) \right) \right\| \leq \sum_{j=1}^{k' + 1} \lambda_j \|\mK m(\vx_j) \| \leq 2R.
    \end{equation}
    Finally, we have $\|\mK\|_F \leq \sqrt{d} \|\mK\|$, and the claim follows.
\end{proof}

\begin{proof}[Proof of~\Cref{lemma:wellbounded}]
    The claim follows directly by combining~\Cref{lemma:trivial,lemma:boundM,lemma:fulldim}.
\end{proof}

\section{Further omitted proofs}
\label{sec:aux}

Finally, this section provides the proof of~\Cref{lemma:lowdeg}, which we restate below. For completeness, we have also included the usual version of $\eah$ (\Cref{alg:eah}, subsumed by~\Cref{alg:gen-eah}), which we used earlier in~\Cref{theorem:semiseparation,th:efp}.

\lowdeg*

For the proof, we will use a simple, auxiliary lemma.

\begin{lemma}
    \label{lemma:range}
    Let $X$ be a random variable such that $\E[X] = 0$, $\V[X] = 1$, and $X \in [-R, R]$ almost surely. Then, $\pr[X \geq \nicefrac{1}{R}] > 0$.
\end{lemma}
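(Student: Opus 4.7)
The plan is to argue by contradiction. Suppose that $\pr[X \ge 1/R] = 0$, so that $X \in [-R, 1/R)$ almost surely. The idea is to exploit the fact that on the closed interval $[-R, 1/R]$ the quadratic
\[
q(x) \defeq (x - 1/R)(x + R) = x^2 + (R - 1/R)\, x - 1
\]
is non-positive, and strictly negative except at the two endpoints $x = -R$ and $x = 1/R$. So I would first record the pointwise inequality $q(X) \le 0$ a.s.

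Next, I would take expectations and use the hypotheses $\E[X] = 0$ and $\E[X^2] = \V[X] + \E[X]^2 = 1$ to compute
\[
\E[q(X)] = \E[X^2] + (R - 1/R)\,\E[X] - 1 = 1 + 0 - 1 = 0.
\]
Combined with $q(X) \le 0$ a.s., this forces $q(X) = 0$ a.s., i.e. $X$ is supported on the two-point set $\{-R, 1/R\}$.

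To close the argument, I would note that the standing assumption $\pr[X \ge 1/R] = 0$ rules out the atom at $1/R$, so $X = -R$ almost surely. But then $\E[X] = -R$, and since $\V[X] = 1$ together with $|X| \le R$ gives $R \ge 1 > 0$, this contradicts $\E[X] = 0$. Hence the assumption was wrong and $\pr[X \ge 1/R] > 0$.

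The proof is essentially a one-line computation once the ``quadratic trick'' is identified, so there is no real technical obstacle; the only thing to watch out for is making sure $R \ge 1$ (which follows from $1 = \E[X^2] \le R^2$) so that the endpoint value $-R$ is genuinely nonzero, ruling out the degenerate atom-at-$-R$ case.
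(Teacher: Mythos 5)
The paper states this lemma without proof (labeling it ``a simple, auxiliary lemma''), so there is no internal argument to compare against. Your proof is correct. The quadratic test function $q(x) = (x - 1/R)(x + R)$ is exactly the right object: it vanishes at the two endpoints you care about, is non-positive on $[-R, 1/R]$, and the hypotheses $\E[X]=0$, $\E[X^2]=1$ are precisely what makes $\E[q(X)]=0$, forcing $X$ to be supported on $\{-R, 1/R\}$ under the contradiction hypothesis. Ruling out the atom at $1/R$ leaves $X=-R$ a.s., which contradicts $\E[X]=0$ once you know $R>0$ (and as you note, $R\ge 1$ follows from $1 = \E[X^2] \le R^2$; in fact $\V[X]=1>0$ alone rules out the degenerate point mass). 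This is the standard ``moment versus quadratic'' argument in the spirit of the Bhatia--Davis variance bound, and it is exactly the kind of one-step computation the authors presumably had in mind when they declined to write it out.
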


\begin{proof}[Proof of~\Cref{lemma:lowdeg}]
    It is clear that $\linem(\vec{0}) = \vec{0}$. The bound on $M$ is also immediate. We thus focus on proving that $\delta \defeq 1/M$ suffices.
    
    For the sake of contradiction, suppose that $\co \linem(\cB_1(\vec{0}))$ does not contain $\vx'$ for some $\vx' \in \R^{k'}$ with $\| \vx' \| \leq \delta$. Then, we consider a hyperplane that separates $\co \linem(\cB_1(\vec{0}))$ from $\vx'$, and we let $\vec{v}$ be the normal vector to that hyperplane, so that $\langle \vec{v}, \vx' \rangle > \langle \vec{v}, \linem(\vx) \rangle$ for all $\vx \in \cB_1(\vec{0})$. Now, let $\cU$ be the uniform product distribution over $[-1, 1]^d$. By~\eqref{eq:orthog}, we have $\E_{\vx \sim \cU} [m(\vx)] = 0$ and $\E_{ \vx \sim \cU} [ m(\vx) m(\vx)^\top] = \mI_{k' \times k'}$ (by ortogonality). As a result, we have $\E_{ \vx \sim \cU} [  \langle \vec{v}, m(\vx) \rangle^2 ] = \mathbb{V}_{\vx \sim \cU}[ \langle \vec{v}, m(\vx) \rangle ] = \|\vec{v} \|^2 = 1$. \Cref{lemma:range}, applied for the random variable $\langle \vec{v}, m(\vx) \rangle$ with range $[-M, M]$, implies that there exists $\vx \in [-1, 1]^d$ such that $\langle \vec{v}, m(\vx) \rangle \geq 1/M$, which in turn implies that there exists $\linex \in \cB_1(\vec{0})$---namely, $\linex \defeq \vx/\sqrt{d}$---such that $\langle \vec{v}, \linem(\linex) \rangle \geq 1/M$. But this yields $\delta \leq \langle \vec{v}, \linem(\linex) \rangle < \langle \vec{v}, \vx' \rangle \leq \|\vec{v} \| \|\vx'\| = \delta$, a contradiction.
\end{proof}

\begin{algorithm}[!ht]
\caption{Ellipsoid against hope ($\eah$)~\citep{Papadimitriou08:Computing}}
\label{alg:eah}
\SetKwInOut{Input}{Input}
\SetKwInOut{Output}{Output}
\SetKw{Input}{Input:}
\SetKw{Output}{Output:}
\Input{\begin{itemize}[noitemsep,topsep=0pt]
        \item Parameters $R_y, r_y > 0$ such that $\cB_{r_y}(\cdot) \subseteq \cY \subseteq \cB_{R_y}(\vec{0})$
        \item Precision parameter $\epsilon > 0$
        \item Parameter $B > 0 $ such that $\| G(\vx) \| \leq B$ for all $\vx \in \cX$
        \item $\ger$ oracle for~\eqref{eq:eah}
        \item $\sep$ oracle for $\cY$
    \end{itemize}
}
\Output{A sparse, $\epsilon$-approximate solution $\mu \in \Delta(\cX)$ of~\eqref{eq:eah}}
Initialize the ellipsoid $\cE \defeq \cB_{R_y}(\vec{0})$\;
\While{$\vol(\cE) \geq \vol(\cB_{\epsilon/B}(\cdot))$}{
    Let $\vy \in \R^k$ be the center of $\cE$ \;
    \eIf{$\vy \in \cY$}{
        Let $\vx \in \cX$ be a good-enough-response with respect to $\vy$ (via the $\ger$ oracle)\;
        Update $\cE$ to the minimum volume ellipsoid containing $\cE \cap \{ \vy \in \R^k : \langle \vy, G(\vx) \rangle \leq 0 \}$\;
    }{
        Let $H$ be the halfspace that separates $\vy$ from $\cY$ (via the $\sep$ oracle) \;
        Update $\cE$ to the minimum volume ellipsoid containing $\cE \cap H$\;
    }
}
Let $\vx^{(1)}, \dots, \vx^{(T)}$ be the $\ger$ oracle responses produced in the process above\;
Define $\mG \defeq [G(\vx^{(1)}) \mid \hdots \mid G(\vx^{(T)})] \in \R^{k \times T} $\;
Compute a solution $\vec{\lambda}$ to the convex program
    $$\qq{find} \vec{\lambda} \in \Delta^T \qq{s.t.} \min_{\vy \in \cY} \vec{\lambda}^\top \mG^\top \vy \geq - \epsilon$$
\Return{$\Delta(\cX) \ni \mu \defeq \sum_{t=1}^T \lambda^{(t)} \mu(\vx^{(t)})$}
\end{algorithm}

\end{document}